\documentclass[11pt,notitlepage]{article}




\usepackage{color}
\usepackage{fullpage}
\usepackage{authblk}

\usepackage[utf8]{inputenc} 
\usepackage{url}            
\usepackage{booktabs}       
\usepackage{amsfonts}       
\usepackage{nicefrac}       
\usepackage{microtype}      

\usepackage{amsmath,amssymb, amsthm}
\usepackage{bbm}
\usepackage{graphicx}
\usepackage{comment}
\usepackage{ifthen, calc}
\usepackage{xcolor}
\usepackage{pstricks}
\usepackage{epsfig}
\usepackage{pst-grad} 
\usepackage{pst-plot} 
\usepackage{hyperref}
\usepackage{mathrsfs}
\usepackage{mathtools}
\usepackage{multicol}
\usepackage{relsize}
\usepackage{tikz}

\usepackage{color}

\newcommand\Ls {\mathscr{L}}

\newcommand\Sc {\mathcal{S}}
\newcommand\Nc {\mathcal{N}}
\newcommand\bbH{\mathbb{H}}
\newcommand{\htheta}{\hat{\theta}}
\newcommand{\floor}[1]{\left\lfloor #1 \right\rfloor}

\newcommand\E {\mathbb{E}}
\newcommand\Expect{\E}
\newcommand\Pbb {\mathbb{P}}
\newcommand\R {\mathbb{R}}
\renewcommand\Re{\R}
\newcommand\bs {\boldsymbol}

\newcommand\sam {\ensuremath{\operatorname{\mathrm{sam}}}}
\newcommand\eps {\varepsilon}

\newcommand\ind {\mathbbm{1}}

\newcommand*{\eqdef}{\stackrel{\textup{def}}{=}}

\newcommand\rank {\ensuremath{\operatorname{\mathrm{rank}}}}

\newcommand\tr {\ensuremath{\operatorname{\mathrm{tr}}}}
\newcommand\diag {\ensuremath{\operatorname{\mathrm{diag}}}}

\newtheorem{theorem}{Theorem}
\newtheorem{definition}[theorem]{Definition}
\newtheorem{lemma}[theorem]{Lemma}

\newtheorem{cor}[theorem]{Corollary}


\begin{document}

\title{Benign Overfitting in Linear Regression}
\date{}

\author[a,b]{Peter L.~Bartlett}
\author[c]{Philip M.~Long}
\author[d]{G\'abor Lugosi}
\author[a]{Alexander Tsigler}

\affil[a]{Department of Statistics, UC Berkeley, 367 Evans Hall,
Berkeley CA 94720-3860}
\affil[b]{Computer Science Division, UC Berkeley, 387 Soda Hall,
Berkeley CA 94720-1776}
\affil[c]{Google}
\affil[d]{Economics and Business,
Pompeu Fabra University;
ICREA, Pg.~Llu\'is Companys 23, 08010 Barcelona, Spain;
Barcelona Graduate School of Economics}


\maketitle

\begin{abstract}
The phenomenon of benign overfitting is one of the key
mysteries uncovered by deep learning methodology: deep
neural networks seem to predict well, even with a perfect
fit to noisy training data. Motivated by this phenomenon,
we consider when a perfect fit to training data in
linear regression is compatible with accurate prediction.
We give a characterization of linear regression
problems for which the minimum norm interpolating prediction rule
has near-optimal prediction accuracy. The characterization is in
terms of two notions of the effective rank of the data covariance.
It shows that overparameterization is essential for
benign overfitting in this setting: the number of directions
in parameter space 
that are unimportant for prediction must
significantly exceed the sample size.  By studying examples of data
covariance properties that this characterization shows are required
for benign overfitting, we find an important role for
finite-dimensional data: the accuracy of the minimum norm
interpolating prediction rule approaches the best possible accuracy
for a much narrower range of properties of the data distribution
when the data lies in an infinite dimensional space versus when the
data lies in a finite dimensional space whose dimension grows faster
than the sample size.
\end{abstract}

\section{Introduction}
Deep
learning methodology has revealed a surprising statistical
phenomenon: overfitting can perform well.
The classical perspective in statistical learning theory is that
there should be a tradeoff between the fit to the training data
and the complexity of the prediction rule. Whether complexity
is measured in terms of the number of parameters, the number of
non-zero parameters in a high-dimensional setting, the number of
neighbors averaged in a nearest-neighbor estimator, the scale of an
estimate in a reproducing kernel Hilbert space, or the bandwidth of
a kernel smoother, this tradeoff has been ubiquitous
in statistical learning theory.  Deep learning
seems to operate outside the regime where results of this kind are
informative, since deep neural networks can perform well even with
a perfect fit to the training data.

As one example of this phenomenon, consider the experiment
illustrated in Figure~1(c) in~\cite{zbhrv-udlrrg-17}: standard deep
network architectures and stochastic gradient algorithms, run until
they perfectly fit a standard image classification training set,
give respectable prediction performance, {\em even when significant
levels of label noise are introduced}.  The deep networks in the
experiments reported in~\cite{zbhrv-udlrrg-17} achieved essentially
zero cross-entropy loss on the training data.  In statistics
and machine learning textbooks, an estimate that fits every
training example perfectly is often presented as an illustration
of overfitting (``... interpolating fits... [are] unlikely to
predict future data well at all.''~\cite[p37]{htf-esl-01}). Thus,
to arrive at a scientific understanding of the success of deep
learning methods, it is a central challenge to understand the
performance of prediction rules that fit the training data perfectly.

In this paper, we consider perhaps the simplest setting where we
might hope to witness this phenomenon: linear regression.
That is, we consider quadratic loss and linear prediction rules, and
we assume that the dimension of the parameter space is large enough
that a perfect fit is guaranteed. We consider data in an infinite
dimensional space (a separable Hilbert space), but our results
apply to a finite-dimensional subspace as a special case. There
is an ideal value of the parameters, $\theta^*$, corresponding to
the linear prediction rule that minimizes the expected quadratic loss.
We ask when it is possible to fit the data exactly
and still compete with the prediction accuracy of $\theta^*$.
Since we require more parameters than the sample size in order to
fit exactly, the solution might be underdetermined, so there might
be many interpolating solutions. We consider the most natural: 
choose the parameter vector $\hat\theta$ with the smallest norm
among all vectors that give perfect predictions on the training
sample. (This corresponds to using the pseudoinverse to solve
the normal equations; see Section~\ref{section:notation}.)
We ask when it is possible to overfit in this
way---and embed all of the noise of the labels into the parameter
estimate $\hat\theta$---without harming prediction accuracy.

Our main result is a finite sample characterization of when
overfitting is benign in this setting. The linear regression
problem depends on the optimal parameters $\theta^*$ and the
covariance $\Sigma$ of the covariates $x$. The properties of $\Sigma$
turn out to be crucial, since the magnitude of the variance in
different directions determines both how the label noise gets
distributed across the parameter space and how errors in
parameter estimation in different directions in parameter space
affect prediction accuracy. There is a classical decomposition of
the excess prediction error into two terms. The
first is rather standard: provided that the scale of the problem
(that is, the sum of the eigenvalues of $\Sigma$) is small compared
to the sample size $n$, the contribution to $\hat\theta$ that we can
view as coming from $\theta^*$ is not too distorted. The second
term is more interesting, since it reflects the impact of the noise
in the labels on prediction accuracy.
We show that this part is small if and only if the
effective rank of $\Sigma$ in the subspace corresponding to low
variance directions is large compared to $n$.  This necessary
and sufficient condition of a large effective rank can be viewed
as a property of significant overparameterization: fitting the
training data exactly but with near-optimal prediction accuracy
occurs if and only if there are many low variance (and hence
unimportant) directions in parameter space where the label
noise can be hidden.

The details are more complicated. The characterization
depends in a specific way on {\em two} notions of effective rank,
$r$ and $R$; the smaller one, $r$, determines a split of $\Sigma$
into large and small eigenvalues, and the excess prediction error
depends on the effective rank, as measured by the larger notion $R$, of
the subspace corresponding to the smallest eigenvalues. For the excess
prediction error to be small, the smallest eigenvalues of $\Sigma$
must decay slowly.

Studying the patterns of eigenvalues that allow benign overfitting
reveals an interesting role for large but finite dimensions: in an
infinite-dimensional setting, benign overfitting occurs only for a
narrow range of decay rates of the eigenvalues. On the other
hand, it occurs with any suitably slowly decaying eigenvalue sequence
in a finite dimensional space whose dimension grows
faster than the sample size. Thus, for linear regression,
data that lies in a large but finite dimensional space exhibits the
benign overfitting phenomenon with a much wider range of covariance
properties than data that lies in an infinite dimensional space.

The phenomenon of interpolating prediction rules has been an
object of study by several authors over the last two years, since
it emerged as an intriguing mystery at the Simons Institute program
on Foundations of Machine Learning in Spring 2017.  Belkin, Ma and
Mandal~\cite{pmlr-v80-belkin18a} 
described an experimental study
demonstrating that this phenomenon of accurate prediction for
functions that interpolate noisy data also occurs for prediction
rules chosen from reproducing kernel Hilbert spaces, and explained
the mismatch between this phenomenon and classical generalization
bounds.  Belkin, Hsu and Mitra~\cite{bhm-opfrbcrri-18} gave
an example of an interpolating decision rule---simplicial
interpolation---with an asymptotic consistency property as the input dimension
gets large. That work, and subsequent work of Belkin, Rakhlin, and
Tsybakov~\cite{brt-ddicso-18}, studied
kernel smoothing methods based on singular kernels that
both interpolate and, with suitable bandwidth choice, give optimal
rates for nonparametric estimation (building on earlier consistency
results~\cite{dgk-hkre-98} for these unusual kernels).  Liang and
Rakhlin~\cite{lr-jikrrcg-18} considered minimum norm interpolating
kernel regression with kernels defined as nonlinear functions of the
Euclidean inner product and showed that, with certain properties
of the training sample (expressed in terms of the empirical kernel
matrix), these methods can have good prediction accuracy.
Belkin, Hsu, Ma and Mandal~\cite{bhmm-rmmlbvt-18} studied
experimentally the excess risk as a function of the dimension of a
sequence of parameter spaces for linear and non-linear classes.

Subsequent
to our work,~\cite{mvs-hindr-19} considered the properties of the
interpolating linear prediction rule with minimal expected squared
error. After this work was presented at the NAS Colloquium on
the Science of Deep Learning~\cite{bllt-apincslt-19}, we became
aware of the concurrent work of Belkin, Hsu and
Xu~\cite{bhx-tmddwf-19} and of Hastie, Montanari, Rosset and
Tibshirani~\cite{hmrt-shdrlsi-19}.
Belkin {\it et al}~\cite{bhx-tmddwf-19} calculated the excess risk for
certain linear models (a regression problem with identity covariance,
sparse optimal parameters, both with and without noise, and a
problem with random Fourier features with no noise), and
Hastie {\it et al} considered linear regression
in an asymptotic regime, where sample size $n$ and input dimension
$p$ go to infinity together with asymptotic ratio $p/n\to\gamma$.
They assumed that, as $p$ gets large, the empirical spectral distribution
of $\Sigma$ (the discrete measure on its set of eigenvalues)
converges to a fixed measure, and they applied random matrix theory
to explore the range of behaviors of the asymptotics of the
excess prediction error as $\gamma$, the noise variance, and the
eigenvalue distribution vary. They also studied the asymptotics of a
model involving random nonlinear features.  In contrast,
we give upper and lower bounds on the excess prediction error
for arbitrary finite sample size, for arbitrary covariance matrices,
and for data of arbitrary dimension.

The next section introduces notation and definitions used
throughout the paper, including definitions of the problem of
linear regression and of various notions
of effective rank of the covariance operator.
Section~\ref{section:results} gives the characterization of benign
overfitting, illustrates why the effective rank condition
corresponds to significant overparameterization, and presents several
examples of patterns of eigenvalues that allow benign overfitting,
suggesting that slowly decaying covariance eigenvalues
in input spaces of growing but finite dimension are the generic
example of benign overfitting.
Section~\ref{section:deepnets} discusses the connections between these
results and the benign overfitting phenomenon in deep neural networks.
Section~\ref{section:proof} outlines the proofs of the results.

\section{Definitions and Notation}\label{section:notation}

We consider linear regression problems, where a linear function of
covariates $x$ from a (potentially infinite dimensional) Hilbert
space $\bbH$ is used to predict a real-valued response variable $y$.
We use vector notation, so that $x^\top\theta$ denotes the inner
product between $x$ and $\theta$
and $xz^\top$ denotes the tensor product of $x,z\in \bbH$.

\begin{definition}[Linear regression]\label{def:linreg}
  A linear regression problem in a separable Hilbert space $\bbH$
  is defined by a random covariate vector $x\in\bbH$ and outcome
  $y\in\Re$. We define
    \begin{enumerate}
      \item the {\em covariance operator} $\Sigma=\Expect[x x^\top]$,
      and
      \item the {\em optimal parameter vector}
      $\theta^*\in\bbH$, satisfying
      $\Expect(y-x^\top\theta^*)^2=\min_\theta
      \Expect(y-x^\top\theta)^2$.
    \end{enumerate}
   We assume
   \begin{enumerate}
   	\item\label{assumption:meanzero}
    $x$ and $y$ are mean zero;
    \item\label{assumption:subgaussiandata}
    $x = V\Lambda^{1/2} z$, where $\Sigma=V\Lambda V^\top$
    is the spectral decomposition of $\Sigma$ and $z$ has components
    that are independent $\sigma_x^2$-subgaussian with $\sigma_x$ a
    positive constant; that is, for all $\lambda \in \bbH$, 
      \[
    	\E[\exp(\lambda^\top z)] \leq \exp(\sigma_x^2\|\lambda\|^2/2),
   	  \]
   	where $\|\cdot\|$ is the norm in the Hilbert space $\bbH$;
   	\item
    the {\em conditional noise variance} is bounded below by some constant $\sigma^2$,
   	\[
	   	\Expect\left[(y-x^\top\theta^*)^2\middle|x\right] \geq \sigma^2;
	\]
   	\item
    $y - x^\top\theta^\ast$ is $\sigma_y^2$-subgaussian, conditionally on $x$, that is for all $\lambda \in \Re$
   	\[
   	\E[\exp(\lambda(y - x^\top\theta^\ast))|x] \leq \exp(\sigma_y^2\lambda^2/2)
   	\]
   	(note that this implies $\E[y|x] = x^\top\theta^*$);
    \item \label{assumption:full_rank}
  almost surely, the projection of the
  data $X$ on the space orthogonal to any eigenvector of $\Sigma$
  spans a space of dimension $n$.
   	\end{enumerate}
  Given a training sample $(x_1,y_1),\ldots,(x_n,y_n)$
  of $n$ i.i.d.~pairs with the same distribution as $(x,y)$,
  an {\em estimator} returns a parameter estimate $\theta\in\bbH$.
  The excess risk of the estimator is defined as
    \[
      R(\theta):=
        \Expect_{x,y}\left[
        \left(y-x^\top\theta\right)^2
        - \left(y-x^\top\theta^*\right)^2\right],
    \]
  where $\Expect_{x,y}$ denotes the conditional expectation given
  all random quantities other than $x,y$ (in this case, given the
  estimate $\theta$).
  Define the vectors $\bs{y}\in\Re^n$ with entries $y_i$ and
  $\bs{\eps}\in\Re^n$ with entries $\eps_i=y_i-
  x_i^\top\theta^*$.  We use infinite matrix notation: $X$
  denotes the linear map from $\bbH$ to $\Re^n$ corresponding to
  $x_1,\ldots,x_n\in\bbH$, so that $X\theta\in\Re^n$ has $i$th component
  $ x_i^\top\theta$. We use similar notation for the
  linear map $X^\top$ from $\Re^n$ to $\bbH$.
\end{definition}
Notice that Assumptions~\ref{assumption:meanzero}
to~\ref{assumption:full_rank} are satisfied when $x$ and $y$ are
jointly gaussian with zero mean and $\rank(\Sigma)>n$.

We shall be concerned with situations where an estimator $\theta$
can fit the data perfectly, that is, $X\theta=\bs{y}$.
Typically this implies that there are many such vectors. We consider the
interpolating estimator with minimal norm in $\bbH$.
We use $\|\cdot\|$ to denote both the
Euclidean norm of a vector in $\Re^n$ and the Hilbert space norm.

\begin{definition}[Minimum norm estimator]\label{def:lne}
  Given data $X\in\bbH^n$ and $\bs{y}\in\Re^n$,
  the {\em minimum norm estimator} $\hat\theta$ solves the optimization problem
    \begin{align*}
      \min_{\theta\in\bbH} \qquad & \left\|\theta\right\|^2 \\
      \text{such that }\qquad & \left\|X\theta-\bs{y}\right\|^2
        = \min_\beta \left\|X\beta-\bs{y}\right\|^2.
    \end{align*}
\end{definition}
By the projection theorem, parameter vectors that solve the least squares
problem $\min_\beta \left\|X\beta-\bs{y}\right\|^2$ solve the normal
equations, so we can equivalently write $\hat\theta$ as the minimum norm
solution to the normal equations,
  \begin{align*}
    \hat\theta
      & =\arg\min_\theta\left\{\left\|\theta\right\|^2:
        X^\top X\theta = X^\top \bs{y} \right\} \\
      & = \left(X^\top X\right)^\dagger X^\top \bs{y} \\
      & = X^\top\left(XX^\top\right)^\dagger\bs{y},
  \end{align*}
where $\left(X^\top X\right)^\dagger$ denotes the pseudoinverse of the
bounded linear operator $X^\top X$ (for infinite dimensional $\bbH$,
the existence of the pseudoinverse is guaranteed because $X^\top X$ is
bounded and has a closed range; see~\cite{dw-np-63}). When $\bbH$ has dimension $p$ with $p<n$ and $X$ has rank $p$, there is a unique
solution to the normal equations. 
On the contrary,
Assumption~\ref{assumption:full_rank} in Definition~\ref{def:linreg}
implies that we can find many solutions $\theta\in\bbH$ to the normal
equations that achieve $X\theta=y$.  
The minimum norm solution is given by
  \begin{align}
    \hat\theta & = X^\top\left(XX^\top\right)^{-1}\bs{y}.
            \label{equation:hattheta}
  \end{align}

Our main result gives tight bounds on the excess risk of this minimum
norm estimator in terms of certain notions of effective rank of the
covariance that are defined in terms of its eigenvalues.

We use $\mu_1(\Sigma)\ge\mu_2(\Sigma)\ge \cdots$ to denote the
eigenvalues of $\Sigma$ in descending order, and we denote the
operator norm of $\Sigma$ by $\|\Sigma\|$. We use $I$ to denote
the identity operator on $\bbH$ and $I_n$ to denote the $n\times n$
identity matrix.

\begin{definition}[Effective Ranks]\label{def:ranks}
For the covariance operator $\Sigma$,
define $\lambda_i=\mu_i(\Sigma)$ for $i=1,2,\ldots$.
If $\sum_{i=1}^\infty\lambda_i<\infty$ and $\lambda_{k+1}>0$
for $k\ge 0$, define
  \begin{align*}
    r_k(\Sigma) & = \frac{\sum_{i>k}\lambda_i}{\lambda_{k+1}}, &
    R_k(\Sigma) & = \frac{\left(\sum_{i>k}\lambda_i\right)^2}
        {\sum_{i>k}\lambda_i^2}.
  \end{align*}
\end{definition}

\section{Main Results}\label{section:results}

The following theorem establishes nearly matching upper and lower
bounds for the risk of the minimum-norm interpolating estimator.

\begin{theorem}\label{th::main}
 For any $\sigma_x$ there are  $b,c, c_1>1$ for which the following
 holds. Consider a linear regression problem from
 Definition~\ref{def:linreg}. Define
   \[
     k^* = \min\left\{k\ge 0: r_k(\Sigma)\ge bn\right\},
   \]
 where the minimum of the empty set is defined as $\infty$.
 Suppose $\delta<1$ with $\log(1/\delta)<n/c$.
 If $k^* \geq n/c_1$, then
 $\Expect R(\hat\theta)\ge \sigma^2/c$.
 Otherwise,
   \begin{align*}
     R(\hat\theta)
       & \le c\left(
         \|\theta^\ast\|^2\left\|\Sigma\right\|
         \max\left\{\sqrt{\frac{r_0(\Sigma)}{n}},
         \frac{r_0(\Sigma)}{n}, \sqrt{\frac{\log(1/\delta)}{n}}
         \right\}\right) 
         {} + c\log(1/\delta)\sigma^2_y\left(\frac{k^*}{n}
         + \frac{n}{R_{k^*}(\Sigma)}\right)
   \end{align*}
 with probability at least $1-\delta$, and
   \[
     \Expect R(\hat\theta)\ge \frac{\sigma^2}{c}
       \left(\frac{k^*}{n} + \frac{n}{R_{k^*}(\Sigma)}\right).
   \]

Moreover, there are universal constants $a_1,a_2,n_0$
such that
for all $n \geq n_0$, for all $\Sigma$, 
for all $t \geq 0$, there is
a $\theta^*$ with 
$\| \theta^* \| = t$ such that for $x\sim{\cal N}(0,\Sigma)$ and
$y|x\sim{\cal N}(x^\top\theta^*,\| \theta^* \|^2 \| \Sigma \|)$, with probability at least $1/4$,
  \begin{align*}
    R(\hat\theta)
      & \ge \frac{1}{a_1}
        \|\theta^\ast\|^2\left\|\Sigma\right\|
        \ind\left[\frac{r_0(\Sigma)}{n
        \log\left(1+r_0(\Sigma)\right)}\ge a_2\right].
    \end{align*}
\end{theorem}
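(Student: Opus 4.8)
The plan is to produce the hard instance explicitly: take $\theta^*=t\,v_1$, where $v_1$ is a unit top eigenvector of $\Sigma$ (so $\Sigma v_1=\lambda_1 v_1$ with $\lambda_1=\|\Sigma\|$). The cases $t=0$ and $\Sigma=0$ are vacuous, and if the indicator vanishes there is nothing to prove, so assume $t>0$, $\lambda_1>0$ and $r_0(\Sigma)\ge a_2\,n\log\!\left(1+r_0(\Sigma)\right)$; this forces $\rank(\Sigma)\ge r_0(\Sigma)\gg n$, so that with probability one $XX^\top$ is invertible and the minimum-norm estimator is given by \eqref{equation:hattheta}. Because $\E[y\mid x]=x^\top\theta^*$, we have the exact identity $R(\hat\theta)=\|\Sigma^{1/2}(\hat\theta-\theta^*)\|^2$, and retaining only the $v_1$ component,
  \[
    R(\hat\theta)\ \ge\ \lambda_1\,\langle v_1,\hat\theta-\theta^*\rangle^2\ =\ \|\Sigma\|\,\bigl(\langle v_1,\hat\theta\rangle-t\bigr)^2 .
  \]
It therefore suffices to show $|\langle v_1,\hat\theta\rangle-t|\ge t/\sqrt{a_1}$ with probability at least $1/4$. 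Writing $\bs{z}_j\in\R^n$ for the vector of standardized Gaussian coordinates $\langle x_i,v_j\rangle/\sqrt{\lambda_j}$, we have $Xv_1=\sqrt{\lambda_1}\,\bs{z}_1$, $\bs{y}=t\sqrt{\lambda_1}\,\bs{z}_1+\bs{\eps}$ with $\bs{\eps}\sim\Norm(0,t^2\|\Sigma\|\,I_n)$ independent of $X$, and $XX^\top=B+\lambda_1\bs{z}_1\bs{z}_1^\top$, where $B=\sum_{k\ge 2}\lambda_k\bs{z}_k\bs{z}_k^\top$ is the Gram matrix of the projections of $x_1,\dots,x_n$ onto $v_1^\perp$; crucially $B$ is independent of $\bs{z}_1$ and a.s.\ invertible.

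A Sherman--Morrison expansion of $(XX^\top)^{-1}$ around $B^{-1}$ then gives, with $q:=\lambda_1\,\bs{z}_1^\top B^{-1}\bs{z}_1\ge 0$,
  \[
    \langle v_1,\hat\theta\rangle-t\ =\ \frac{\sqrt{\lambda_1}\,\bs{z}_1^\top B^{-1}\bs{\eps}-t}{1+q},
    \qquad\text{hence}\qquad
    R(\hat\theta)\ \ge\ \|\Sigma\|\,\frac{\bigl(t-\sqrt{\lambda_1}\,\bs{z}_1^\top B^{-1}\bs{\eps}\bigr)^2}{(1+q)^2}.
  \]
(With no label noise this already says $\langle v_1,\hat\theta\rangle=tq/(1+q)\approx 0$ when $q$ is small: a heavily overparametrized minimum-norm interpolator barely loads the top eigendirection and so fails to recover $\theta^*$ there.) Conditionally on $X$, the Gaussian $\sqrt{\lambda_1}\,\bs{z}_1^\top B^{-1}\bs{\eps}$ is distributed as $t\lambda_1\|B^{-1}\bs{z}_1\|\,N$ with $N\sim\Norm(0,1)$ --- this is the single place where the chosen noise scale $\|\theta^*\|^2\|\Sigma\|$ is used. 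Combining this with the elementary anti-concentration bound $\sup_{s\ge 0}\Pbb\bigl(|sN-1|<\tfrac12\bigr)=\sup_{u>0}\bigl(\Phi(3u)-\Phi(u)\bigr)<\tfrac14$ gives $\Pbb\bigl(|t-\sqrt{\lambda_1}\,\bs{z}_1^\top B^{-1}\bs{\eps}|\ge t/2\mid X\bigr)>3/4$ for every realization of $X$; on that event $R(\hat\theta)\ge\|\Sigma\|\,t^2/\bigl(4(1+q)^2\bigr)$. So it only remains to bound $1+q$ by a universal constant with probability close to one.

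This is the technical heart, and the one step I expect to need real work: since $\bs{z}_1\sim\Norm(0,I_n)$ is independent of $B$, $\E[q\mid B]=\lambda_1\tr(B^{-1})\le \lambda_1 n/\mu_n(B)$, so everything comes down to a lower bound on the smallest eigenvalue of the bulk Gram matrix $B=\sum_{k\ge2}\lambda_k\bs{z}_k\bs{z}_k^\top$, whose mean is $\bigl(\sum_{k\ge2}\lambda_k\bigr)I_n$. A concentration estimate of the type used to prove the main bounds in Section~\ref{section:proof} --- a covering/matrix-concentration argument that applies because the effective rank of the bulk, $r_1(\Sigma)\ge r_0(\Sigma)-1$, is much larger than $n$ (the logarithmic factor in the hypothesis is precisely what is consumed here, matching the dimension-type factor in the estimate) --- yields $\mu_n(B)\ge\tfrac1c\sum_{k\ge2}\lambda_k$ with probability at least $1-\delta_0$, where $\delta_0\to0$ as $n\to\infty$. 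On that event $\E[q\mid B]\le c\lambda_1 n/\sum_{k\ge2}\lambda_k=c\,n/(r_0(\Sigma)-1)\le1$ provided $a_2$ is chosen large enough, so Markov's inequality gives $\Pbb(q\le4)\ge 3/4-\delta_0$. Intersecting this $X$-measurable event with the anti-concentration event and using $\Pbb(\,\cdot\mid X)>3/4$ there, we get $R(\hat\theta)\ge\|\Sigma\|\,t^2/100$ with probability at least $\tfrac34\bigl(\tfrac34-\delta_0\bigr)\ge\tfrac14$ once $n\ge n_0$, i.e.\ the assertion with $a_1=100$ (recall $t=\|\theta^*\|$). Apart from the eigenvalue lower bound for $B$, every ingredient --- the risk identity, the Sherman--Morrison step, the anti-concentration estimate, and the conditioning and constant bookkeeping --- is routine.
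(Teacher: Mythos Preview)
Your proposal addresses only the ``Moreover'' paragraph of the theorem (the lower bound in terms of $r_0(\Sigma)$); the first paragraph is proved in Section~\ref{section:proof} by an entirely separate chain of lemmas. For the part you do treat, your argument is correct and takes a genuinely different route from the paper's.

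The paper's proof (Appendix~\ref{a:packing}) is information-theoretic: it builds a $\rho$-packing of unit-norm parameters of size roughly $(1/\tau)^{\Omega(n\log n)}$, uses a quantization trick (Lemmas~\ref{l:to.quantized}--\ref{l:bi}) to pass from the noisy problem to a discrete noiseless one, and then counts the number of possible quantized observation vectors to conclude that no algorithm --- in particular the least-norm interpolant --- can localize $\theta^*$ to $\rho$-accuracy $\sqrt{\tau}$. Your approach is instead constructive and specific to $\hat\theta$: you pick $\theta^*=t v_1$, use Sherman--Morrison to get the exact identity $\langle v_1,\hat\theta\rangle-t=(\sqrt{\lambda_1}\,\bs z_1^\top B^{-1}\bs\eps - t)/(1+q)$, handle the numerator by the elementary anti-concentration bound $\sup_{u>0}(\Phi(3u)-\Phi(u))<1/4$, and handle $q$ via the smallest-eigenvalue bound for the bulk Gram matrix $B$. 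What your route buys is simplicity and a sharper hypothesis: the eigenvalue concentration you invoke is exactly Lemma~\ref{lemma::eigvals_of_truncated}(3) applied at $k=1$, and that lemma needs only $r_1(\Sigma)\ge bn$, not $r_1(\Sigma)\gtrsim n\log n$. So your parenthetical claim that ``the logarithmic factor in the hypothesis is precisely what is consumed here'' is not accurate --- your argument goes through under the weaker assumption $r_0(\Sigma)\ge a_2 n$, which is strictly stronger than the theorem's stated indicator. What the paper's packing route buys is generality: it is algorithm-agnostic in spirit and shows that any estimator, not just the minimum-norm interpolant, must fail on some $\theta^*$ in the packing; it also explains where the $\log(1+r_0)$ factor comes from (the counting of quantized labels).

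Two minor bookkeeping points. First, your combination of events should read $\Pbb(E_1\cap E_2)\ge \tfrac34\cdot\tfrac34(1-\delta_0)$ rather than $\tfrac34(\tfrac34-\delta_0)$, since you condition on $B$ for the Markov step; either way the bound exceeds $1/4$ once $n$ is large. Second, the invertibility of $B$ and of $XX^\top$ follows from $\rank(\Sigma)\ge r_0(\Sigma)>n$ together with the Gaussian assumption, which you assert but should state explicitly.
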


\subsection{Effective Ranks and Overparameterization}

In order to understand the implications of Theorem \ref{th::main}, 
we now study relationships between the two notions of effective rank,
$r_k$ and $R_k$, and establish sufficient and necessary conditions for
the sequence $\{\lambda_i\}$ of eigenvalues to lead to small excess risk.

The following lemma shows that the two notions of effective rank are closely
related. See Appendix~\ref{appendix:rankfacts} for its proof, and for
other properties of $r_k$ and $R_k$.  (All appendices may be found in 
the supporting material.)

\begin{lemma}\label{lemma:rkRkordering}
  $r_k(\Sigma)\ge 1$, $r^2_k(\Sigma)=r_k(\Sigma^2)R_k(\Sigma)$, and
    \[
      r_k(\Sigma^2) \le r_k(\Sigma) \le R_k(\Sigma) \le r_k^2(\Sigma).
    \]
\end{lemma}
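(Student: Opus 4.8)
The plan is to reduce everything to a single algebraic identity together with two one-line monotonicity observations about the tail sums. First I would record that the eigenvalues of $\Sigma^2$ are exactly the $\lambda_i^2$, in the same order (the $\lambda_i$ being nonnegative), so that by Definition~\ref{def:ranks} we have $r_k(\Sigma^2)=\sum_{i>k}\lambda_i^2\big/\lambda_{k+1}^2$; this quantity is finite because $\sum_{i>k}\lambda_i^2\le\lambda_{k+1}\sum_{i>k}\lambda_i<\infty$, and strictly positive because $\lambda_{k+1}>0$. With this in hand the identity $r_k^2(\Sigma)=r_k(\Sigma^2)R_k(\Sigma)$ is immediate: substituting the definitions, both sides equal $\bigl(\sum_{i>k}\lambda_i\bigr)^2\big/\lambda_{k+1}^2$.

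Next I would establish the two facts that drive all the inequalities. The bound $r_k(\Sigma)\ge 1$ holds because $\sum_{i>k}\lambda_i\ge\lambda_{k+1}$, as $\lambda_{k+1}$ is one of the nonnegative summands; applying the same reasoning to $\Sigma^2$ gives $r_k(\Sigma^2)\ge 1$ as well. The inequality $r_k(\Sigma^2)\le r_k(\Sigma)$ is, after clearing denominators, the statement $\sum_{i>k}\lambda_i^2\le\lambda_{k+1}\sum_{i>k}\lambda_i$, which holds term by term since $\lambda_i\le\lambda_{k+1}$ for every $i>k$ (the eigenvalues are listed in descending order).

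Finally I would deduce the remaining bounds from the identity rewritten as $R_k(\Sigma)=r_k^2(\Sigma)\big/r_k(\Sigma^2)$: the estimate $r_k(\Sigma^2)\ge 1$ yields $R_k(\Sigma)\le r_k^2(\Sigma)$, while $r_k(\Sigma^2)\le r_k(\Sigma)$ yields $R_k(\Sigma)\ge r_k^2(\Sigma)\big/r_k(\Sigma)=r_k(\Sigma)$, and chaining these with $r_k(\Sigma^2)\le r_k(\Sigma)$ gives the full chain $r_k(\Sigma^2)\le r_k(\Sigma)\le R_k(\Sigma)\le r_k^2(\Sigma)$. There is no genuine obstacle here; the only points needing a moment's care are the well-definedness issues (that $\sum_i\lambda_i^2<\infty$ and that the denominators $\lambda_{k+1}$ and $\sum_{i>k}\lambda_i^2$ do not vanish), all of which follow directly from the standing hypotheses $\sum_i\lambda_i<\infty$ and $\lambda_{k+1}>0$ in Definition~\ref{def:ranks}.
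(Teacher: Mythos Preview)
Your proof is correct and follows essentially the same approach as the paper's own proof: verify the identity $r_k^2(\Sigma)=r_k(\Sigma^2)R_k(\Sigma)$ directly from the definitions, establish $r_k(\Sigma)\ge 1$ and $r_k(\Sigma^2)\le r_k(\Sigma)$ by the obvious term-by-term comparisons, and then read off the remaining inequalities from the identity. The only difference is that you spell out the well-definedness checks more explicitly, which is fine.
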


Notice that $r_0(I_p)=R_0(I_p)=p$. More generally, if all the non-zero
eigenvalues of $\Sigma$ are identical, then
$r_0(\Sigma)=R_0(\Sigma)=\rank(\Sigma)$.
For $\Sigma$ with finite rank, we can express both $r_0(\Sigma)$
and $R_0(\Sigma)$ as a product of the rank and a notion of symmetry.
In particular, for $\rank(\Sigma)=p$ we can write
  \begin{align*}
    r_0(\Sigma) &= \rank(\Sigma) s(\Sigma), &
    R_0(\Sigma) &= \rank(\Sigma) S(\Sigma), \\
\text{with} \;\;
    s(\Sigma) &= \frac{\frac{1}{p} \sum_{i=1}^p\lambda_i}{\lambda_1}, &
    S(\Sigma) &= \frac{\left(\frac{1}{p}\sum_{i=1}^p\lambda_i\right)^2}
        {\frac{1}{p}\sum_{i=1}^p\lambda_i^2}.
  \end{align*}
Both notions of symmetry $s$ and $S$ lie between $1/p$ (when
$\lambda_2\to 0$) and $1$ (when the $\lambda_i$ are all equal).

Theorem~\ref{th::main} shows that, for the minimum norm estimator
to have near-optimal prediction accuracy, $r_0(\Sigma)$ should be
small compared to the sample size $n$ (from the first term) and
$r_{k^*}(\Sigma)$ and $R_{k^*}(\Sigma)$ should be large compared to
$n$. Together, these conditions imply that overparameterization is
essential for benign overfitting in this setting: the number of
non-zero eigenvalues should be large compared to $n$, they should
have a small sum
compared to $n$, and there should be many eigenvalues no larger
than $\lambda_{k^*}$.  If the number of these small eigenvalues
is not much larger than $n$, then they should be roughly equal,
but they can be more assymmetric if there are many more of them.

The following theorem shows that the kind of overparameterization that
is essential for benign overfitting requires $\Sigma$ to have a heavy
tail. (The proof---and some other examples illustrating the boundary of
benign overfitting---are in Appendix~\ref{section:eigenproof}.) In
particular, if we fix $\Sigma$ in an infinite-dimensional Hilbert
space and ask when does the excess risk of the minimum norm estimator
approach zero as $n\to\infty$, it imposes tight restrictions on the
eigenvalues of $\Sigma$.  But there are many other possibilities for
these asymptotics if $\Sigma$ can change with $n$.
Since rescaling $X$ affects the accuracy of the least-norm
interpolant in an obvious
way, we may assume without loss of generality that
$\| \Sigma \| = 1$.  If we restrict our attention to
this case, then, informally, 
Theorem~\ref{th::main} implies that,
when the covariance operator for data with $n$ examples
is $\Sigma_n$,
the least-norm interpolant converges if $\frac{r_0(\Sigma_n)}{n} \rightarrow 0$,
$\frac{k_n^*}{n} \rightarrow 0$, 
and $\frac{n}{R_{k_n^*}(\Sigma_n)} \rightarrow 0$,
and only if 
$\frac{r_0(\Sigma_n)}{n \log (1 + r_0(\Sigma_n))} \rightarrow 0$,
$\frac{k_n^*}{n} \rightarrow 0$, 
and $\frac{n}{R_{k_n^*}(\Sigma_n)} \rightarrow 0$,
where $k^*_n= \min\left\{k\ge 0: r_k(\Sigma_n)\ge bn\right\}$ for the
universal constant $b$ in Theorem~\ref{th::main}.
For this reason, we say that a sequence of covariance operators $\Sigma_n$ is
{\em benign} if
  \[
\lim_{n \rightarrow \infty} \frac{r_0(\Sigma_n)}{n}
 = \lim_{n \rightarrow \infty} \frac{k_n^*}{n}
 = \lim_{n \rightarrow \infty} \frac{n}{R_{k_n^*}(\Sigma_n)}
 = 0.
  \]
  \begin{theorem}\label{theorem:benign_eigenvalues}
    \mbox{}
    \begin{enumerate}
      \item\label{eigenexample:infdim}
        If $\mu_k(\Sigma) = k^{-\alpha} \ln^{-\beta} (k+1)$, then
        $\Sigma$ is benign iff $\alpha=1$ and $\beta > 1$.
      \item\label{eigenexample:expplusconst}
        If
          \[
            \mu_k(\Sigma_n) = \begin{cases}
              \gamma_k + \epsilon_n & \text{if $k\le p_n$,} \\
              0   & \text{otherwise,}
              \end{cases}
          \]
        and $\gamma_k=\Theta(\exp(-k/\tau))$,
        then $\Sigma_n$ is benign iff $p_n=\omega(n)$ and
        $ne^{-o(n)}=\epsilon_np_n=o(n)$.
        Furthermore, for $p_n=\Omega(n)$ and
        $\epsilon_np_n=ne^{-o(n)}$,
          \[
            R(\hat\theta)
              \!=\! O\left(\frac{\epsilon_np_n+1}{n}
              + \frac{\ln(n/(\epsilon_np_n))}{n}
              + \max\left\{\frac{1}{n},\frac{n}{p_n}\right\}\right).
          \]
    \end{enumerate}
  \end{theorem}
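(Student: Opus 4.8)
The plan is to reduce both parts to asymptotic estimates, up to constant factors, of the three quantities $r_0(\Sigma_n)/n$, $k_n^*/n$, and $n/R_{k_n^*}(\Sigma_n)$ that appear in the definition of a benign sequence, and — for the quantitative ``furthermore'' claim — to substitute these estimates into the high-probability upper bound of Theorem~\ref{th::main} (with $\|\theta^*\|$, $\|\Sigma\|$ and the noise parameters absorbed as constants). In every case the recipe is the same: estimate the partial tail sums $\sum_{i>k}\lambda_i$ and $\sum_{i>k}\lambda_i^2$ up to constants (by comparison with integrals in part~\ref{eigenexample:infdim}, by summing geometric series in part~\ref{eigenexample:expplusconst}), read off $r_k$ and $R_k$, locate $k^*$ by solving $r_{k^*}\asymp bn$, and substitute; Lemma~\ref{lemma:rkRkordering} is useful as a sanity check.

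For part~\ref{eigenexample:infdim}, first note that $\sum_i\lambda_i<\infty$ holds only when $\alpha>1$, or $\alpha=1$ and $\beta>1$; otherwise $r_0=\infty$ and $\Sigma$ is not benign. If $\alpha>1$ (any $\beta$), integral comparison gives $\sum_{i>k}\lambda_i\asymp k^{1-\alpha}\ln^{-\beta}k$, hence $r_k\asymp k$, so $k^*\asymp n$ and $k_n^*/n\not\to0$: not benign. If $\alpha=1$ and $\beta>1$, then $\sum_{i>k}\lambda_i\asymp\ln^{1-\beta}k$, so $r_k\asymp k\ln k$ and $k^*\asymp n/\ln n=o(n)$; moreover $r_0=\sum_i\lambda_i/\lambda_1$ is a finite constant, so $r_0/n\to0$; and $\sum_{i>k}\lambda_i^2\asymp k^{-1}\ln^{-2\beta}k$ gives $R_k\asymp k\ln^2k$, hence $R_{k^*}\asymp n\ln n$ and $n/R_{k^*}\asymp1/\ln n\to0$. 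All three limits vanish, so $\Sigma$ is benign, which establishes the ``iff''.

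For part~\ref{eigenexample:expplusconst}, write $\lambda_{k+1}=\gamma_{k+1}+\epsilon_n$ with $\gamma_{k+1}=\Theta(e^{-(k+1)/\tau})$, and for $k<p_n$, $\sum_{i>k}\lambda_i=\Theta(e^{-k/\tau})+(p_n-k)\epsilon_n$ (geometric tail plus the flat part). Then $r_0=\Theta(1+p_n\epsilon_n)$, so $r_0/n\to0\iff p_n\epsilon_n=o(n)$. For $k$ below the crossover scale $m_n\asymp\tau\ln(1/\epsilon_n)$ the denominator of $r_k$ is $\asymp e^{-k/\tau}$ and the numerator is $\asymp e^{-k/\tau}+p_n\epsilon_n$, so $r_k\asymp1+p_n\epsilon_n e^{k/\tau}$ increases in $k$ up to $\asymp p_n$; solving $r_{k^*}\asymp bn$, i.e. $p_n\epsilon_n e^{k^*/\tau}\asymp bn$, gives $k^*=\tau\ln(n/(p_n\epsilon_n))+\Theta(1)$ (one checks $k^*<m_n$ and $k^*=o(p_n)$, e.g. because $p_n=\omega(n)$). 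Hence $k_n^*/n\to0\iff\ln(n/(p_n\epsilon_n))=o(n)\iff p_n\epsilon_n=ne^{-o(n)}$. At $k=k^*$ one has $e^{-k^*/\tau}\asymp p_n\epsilon_n/n$, so $\sum_{i>k^*}\lambda_i\asymp p_n\epsilon_n$ and $\sum_{i>k^*}\lambda_i^2\asymp p_n\epsilon_n^2+(p_n\epsilon_n/n)^2$, whence $R_{k^*}\asymp\min\{p_n,n^2\}$ and $n/R_{k^*}\asymp\max\{n/p_n,1/n\}\to0\iff p_n=\omega(n)$. Combining the three conditions gives ``$\Sigma_n$ is benign $\iff p_n=\omega(n)$ and $ne^{-o(n)}=\epsilon_np_n=o(n)$''. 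For the final display, under $p_n=\Omega(n)$ and $\epsilon_np_n=ne^{-o(n)}$ the same estimates hold with $k^*=o(n)$, and plugging $r_0\asymp1+p_n\epsilon_n$, $k^*\asymp1+\ln(n/(p_n\epsilon_n))$, $n/R_{k^*}\asymp\max\{1/n,n/p_n\}$ into Theorem~\ref{th::main} produces the stated $O(\cdot)$ bound.

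The main obstacle is the case analysis around the crossover $m_n$ in part~\ref{eigenexample:expplusconst}: one must verify that $k^*$ lands in the regime where the exponential part $\gamma_{k+1}$ still dominates the denominator $\lambda_{k^*+1}$ while the flat part $p_n\epsilon_n$ already dominates the numerator $\sum_{i>k^*}\lambda_i$ — this is exactly what makes $r_{k^*}\asymp p_n\epsilon_n e^{k^*/\tau}$ and pins down $k^*$ — and one must make sure that the constants hidden in $\gamma_k=\Theta(e^{-k/\tau})$ do not corrupt the exponentially sensitive location of $k^*$ nor the resulting value of $e^{-k^*/\tau}$, which controls both tail sums at $k^*$. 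Everything else is routine integral- and geometric-series bookkeeping together with the rank inequalities of Lemma~\ref{lemma:rkRkordering}.
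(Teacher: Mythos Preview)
Your proposal is correct and follows essentially the same approach as the paper: estimate tail sums by integral/geometric comparison, read off $r_k$ and $R_k$, locate $k^*$ via $r_{k^*}\asymp bn$, and substitute into Theorem~\ref{th::main}. The only notable difference is in Part~\ref{eigenexample:infdim} for $\alpha=1$, $\beta>1$: you handle $n/R_{k^*}\to 0$ by directly computing $\sum_{i>k}\lambda_i^2\asymp k^{-1}\ln^{-2\beta}k$ and hence $R_k\asymp k\ln^2 k$, whereas the paper instead proves a general sufficient condition in terms of the $r_k$ sequence alone (namely $r_k^{-2}=o(r_k^{-1}-r_{k+1}^{-1})$, which it verifies by showing $r_{k+1}-r_k\to\infty$). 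Your direct route is shorter for this particular eigenvalue sequence; the paper's detour yields a reusable criterion but requires more work here. In Part~\ref{eigenexample:expplusconst} your order of presentation is slightly circular (you invoke $p_n=\omega(n)$ to justify $k^*<m_n$ before deriving it), but this is the same logical structure as the paper, which first extracts $p_n=\omega(n)$ from the trivial bound $R_k\le p_n-k$ and then proceeds; reorganizing your argument in that order removes the apparent circularity.
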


Compare the situations described by
Parts~\ref{eigenexample:infdim} and~\ref{eigenexample:expplusconst}
of Theorem~\ref{theorem:benign_eigenvalues}.
Part~\ref{eigenexample:infdim} shows that
for infinite-dimensional data with a fixed covariance, benign
overfitting occurs iff the eigenvalues of the covariance operator
decay just slowly enough for their sum to remain finite.
Part~\ref{eigenexample:expplusconst} shows that the situation is
very different if the data has finite dimension and a small amount
of isotropic noise is added to the covariates.  In that case, even
if the eigenvalues of the original covariance operator (before the
addition of isotropic noise) decay very rapidly, benign overfitting
occurs iff both the dimension is large compared to the sample size,
and the isotropic component of the covariance is sufficiently
small---but not exponentially small---compared to the sample size.

These examples illustrate the tension between the slow decay
of eigenvalues that is needed for $k/n+n/R_k$ to be
small, and the summability of eigenvalues that is needed for
$r_0(\Sigma)/n$ to be small. There are two ways to resolve this tension. 
First, in the infinite dimensional setting, slow decay of the
eigenvalues suffices---decay just fast enough to ensure
summability---as shown by Part~\ref{eigenexample:infdim} of
Theorem~\ref{theorem:benign_eigenvalues}.
(Appendix~\ref{section:eigenproof}
gives another example, where the eigenvalue decay is allowed
to vary with $n$; in that case, $\Sigma_n$ is benign iff the decay
rate gets close---but not too close---to $1/k$ as $n$ increases.)
The other way to resolve the tension is to consider
a finite dimensional setting (which ensures that the eigenvalues
are summable), and in this case arbitrarily slow decay is possible.
Part~\ref{eigenexample:expplusconst} of
Theorem~\ref{theorem:benign_eigenvalues} gives an example of this:
eigenvalues that are all at least as large as a small constant.
Appendix~\ref{section:eigenproof}
gives another example, with a truncated infinite
series that decays sufficiently slowly that their sum
does not converge.
Theorem~\ref{theorem:benign_eigenvalues}(\ref{eigenexample:infdim})
shows that a very specific decay rate is required in infinite
dimensions, which suggests that this is an unusual phenomenon in that
case. The more generic scenario where benign overfitting will occur is
demonstrated by
Theorem~\ref{theorem:benign_eigenvalues}(\ref{eigenexample:expplusconst}),
with eigenvalues that are either constant or slowly decaying in a very
high---but finite dimensional---space.

\section{Deep neural networks}\label{section:deepnets}

How relevant are Theorems~\ref{th::main}
and~\ref{theorem:benign_eigenvalues} to the phenomenon of benign
overfitting in deep neural networks?  One connection appears by
considering regimes where deep neural networks are well-approximated
by linear functions of their parameters. This so-called {\em
neural tangent kernel} (NTK) viewpoint has been vigorously pursued
recently in an attempt to understand the optimization properties
of deep learning methods.  Very wide neural networks, trained
with gradient descent from a suitable random initialization,
can be accurately approximated by linear functions in an
appropriate Hilbert space, and in this case gradient descent
finds an interpolating solution quickly; see~\cite{ll-lonnsgdsd-18,%
dpzs-gdpoopnn-18,dllwz-gdfgmdnn-18,zou2018stochastic,jgh-ntkcgnn-18,%
adhlw-fgaogotlnn-19}.
(Note that these papers do not consider prediction
accuracy, except when there is no noise; for
example,~\cite[Assumption~A1]{ll-lonnsgdsd-18} implies that the
network can compute a suitable real-valued response exactly, and
the data-dependent bound of~\cite[Theorem~5.1]{adhlw-fgaogotlnn-19}
becomes vacuous when independent noise is added to the $y_i$s.) The
eigenvalues of the covariance operator in this case can have a
heavy tail under reasonable assumptions on the data distribution
(see~\cite{xls-dnnlttf-16}, where this kernel was introduced,
and \cite{cao2019towards}),
and the dimension is very large but
finite, as required for benign overfitting. 
However, the assumptions
of Theorem~\ref{th::main} do not apply in this case. In particular,
the assumption that the random elements of the Hilbert space are
a linearly transformed vector with independent components is not
satisfied. Thus, our results are not directly applicable in
this---somewhat unrealistic---setting.
Note that the slow decay of the eigenvalues of the NTK is in contrast
to the case of the gaussian and other smooth kernels, where the
eigenvalues decay nearly exponentially
quickly~\cite{DBLP:conf/colt/Belkin18}.

The phenomenon of benign overfitting was first observed
in deep neural networks.
Theorems~\ref{th::main} and~\ref{theorem:benign_eigenvalues}
are steps towards understanding this phenomenon by
characterizing when it occurs in the simple setting of linear
regression. Those results suggest that covariance eigenvalues that are
constant or slowly decaying in a high (but finite)
dimensional space might be important in the deep network setting also.
Some authors have suggested viewing neural networks as
finite-dimensional approximations to infinite dimensional objects~\cite{lbw-ealnnbf-96,blvdm-cnn-06,b-bcdcnn-17}, and there are
generalization bounds---although not for the overfitting regime---that
are applicable to infinite width deep networks with parameter norm
constraints~\cite{b-scpcnn-98,bm-rgcrbsr-02,nts-nbccnn-15,snmbnn-manng-17,%
grs-siscnn-18}.
However, the intuition from the linear setting suggests
that truncating to a finite dimensional space might be important
for good statistical performance in the overfitting regime.
Confirming this conjecture by extending our results to the setting
of prediction in deep neural networks is an important open problem.

\section{Proof}\label{section:proof}

Throughout the proofs, we treat $\sigma_x$ (the subgaussian norm of
the covariates) as a constant. Therefore, we use the symbols
$b,c,c_1,c_2,\ldots$ to refer to constants that only depend on
$\sigma_x$. Their values are suitably large (and always at
least $1$) but do not depend on any parameters of the problems we
consider, besides $\sigma_x$. For universal constants that do not depend on any parameters of the problem at all we use the symbol $a$. Also, whenever we sum over eigenvectors of 
$\Sigma$, the sum is restricted to eigenvectors with non-zero
eigenvalues.

\subsection*{Outline}

The first step is a standard decomposition of the excess risk into
two pieces, a term that corresponds to the distortion
that is introduced by viewing $\theta^*$ through the lens of the
finite sample and a term that corresponds to the
distortion introduced by the noise $\bs{\eps}=\bs{y}-X\theta$.
The impact of both sources of error in $\hat\theta$ on
the excess risk is modulated by the covariance $\Sigma$, which
gives different weight to different directions in parameter space.

\begin{lemma}\label{lemma:bv}
  The excess risk of the minimum norm estimator satisfies
    \begin{align*}
      R(\hat\theta)
        \leq 2{\theta^\ast}^\top B \theta^\ast
        + c\sigma^2\log\frac{1}{\delta} \tr(C)
    \end{align*}
  with probability at least $1-\delta$ over $\epsilon$,
  and
    \begin{align*}
      \Expect_{\bs{\eps}} R(\hat\theta)
        & \geq {\theta^\ast}^\top B \theta^\ast + \sigma^2 \tr(C),
    \end{align*}
  where
    \begin{align*}
      B & = \left(I - X^\top \left(X X^\top \right)^{-1}X\right)
          \Sigma\left(I - X^\top \left(X X^\top \right)^{-1}X\right), \\
      C & = \left(X X^\top \right)^{-1}X\Sigma X^\top \left(X X^\top \right)^{-1}.
    \end{align*}
\end{lemma}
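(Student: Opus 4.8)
The plan is to plug the closed form $\hat\theta = X^\top(XX^\top)^{-1}\bs{y}$ from~\eqref{equation:hattheta} into the excess risk and read off $B$ and $C$. Expanding $(y-x^\top\theta)^2-(y-x^\top\theta^*)^2$ and using $\E[y\mid x]=x^\top\theta^*$ to cancel the cross term gives the elementary identity $R(\theta)=(\theta-\theta^*)^\top\Sigma(\theta-\theta^*)$. Writing $\bs{y}=X\theta^*+\bs{\eps}$ and setting $P=X^\top(XX^\top)^{-1}X$ (which is the orthogonal projection onto the range of $X^\top$, since $XX^\top$ is invertible by Assumption~\ref{assumption:full_rank}, so $P^\top=P=P^2$ and $I-P$ is self-adjoint) yields $\hat\theta-\theta^*=-(I-P)\theta^*+X^\top(XX^\top)^{-1}\bs{\eps}=:a+b$, hence $R(\hat\theta)=(a+b)^\top\Sigma(a+b)$. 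A direct computation, using that $I-P$ and $(XX^\top)^{-1}$ are symmetric, identifies $a^\top\Sigma a=\theta^{*\top}B\theta^*$ and $b^\top\Sigma b=\bs{\eps}^\top C\bs{\eps}$ with $B,C$ exactly as in the statement; writing $\Sigma=\Sigma^{1/2}\Sigma^{1/2}$ exhibits both $B$ and $C$ in the form $MM^\top$ for a suitable operator $M$, so both are positive semidefinite.

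For the upper bound I would discard the cross term via the AM--GM step $2a^\top\Sigma b=2\langle\Sigma^{1/2}a,\Sigma^{1/2}b\rangle\le a^\top\Sigma a+b^\top\Sigma b$, giving $R(\hat\theta)\le 2\theta^{*\top}B\theta^*+2\bs{\eps}^\top C\bs{\eps}$, and then show $\bs{\eps}^\top C\bs{\eps}\le c\,\sigma_y^2\log(1/\delta)\tr(C)$ with probability at least $1-\delta$. Working conditionally on $X$, the coordinates of $\bs{\eps}$ are independent, mean zero (since $\E[\eps_i\mid x_i]=0$), and $\sigma_y^2$-subgaussian, while $C$ is a fixed PSD matrix, so the Hanson--Wright inequality controls $\bs{\eps}^\top C\bs{\eps}$ by $\E[\bs{\eps}^\top C\bs{\eps}\mid X]+c\,\sigma_y^2\bigl(\|C\|_F\sqrt{\log(1/\delta)}+\|C\|\log(1/\delta)\bigr)$. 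Since $C\succeq 0$ we have $\|C\|\le\|C\|_F\le\tr(C)$ (because $\|C\|_F^2=\sum_i\mu_i(C)^2\le\|C\|\tr(C)\le\tr(C)^2$) and $\E[\bs{\eps}^\top C\bs{\eps}\mid X]=\tr\!\bigl(C\,\E[\bs{\eps}\bs{\eps}^\top\mid X]\bigr)\le\sigma_y^2\tr(C)$, and the claim follows after absorbing constants.

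For the lower bound I would take $\E_{\bs{\eps}}$ of the exact identity $R(\hat\theta)=\theta^{*\top}B\theta^*+2a^\top\Sigma b+\bs{\eps}^\top C\bs{\eps}$: the cross term vanishes because $a$ is a function of $X$ alone and $\E[\bs{\eps}\mid X]=0$, and since the pairs $(x_i,y_i)$ are independent, $\E[\bs{\eps}\bs{\eps}^\top\mid X]$ is diagonal with entries $\E[\eps_i^2\mid x_i]\ge\sigma^2$ by the conditional-noise-variance assumption, so $\E_{\bs{\eps}}[\bs{\eps}^\top C\bs{\eps}]=\tr\!\bigl(C\,\E[\bs{\eps}\bs{\eps}^\top\mid X]\bigr)\ge\sigma^2\tr(C)$ using $C\succeq 0$. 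Everything here is routine; the only step with any real content is the concentration of the quadratic form $\bs{\eps}^\top C\bs{\eps}$, and I expect no genuine obstacle beyond the bookkeeping of keeping all statements conditional on $X$ (so $C$ is a fixed PSD matrix and $\bs{\eps}$ an independent subgaussian vector) and the two elementary trace/Frobenius-norm inequalities above.
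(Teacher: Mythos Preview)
Your approach is correct and essentially identical to the paper's: the paper also writes $R(\hat\theta)=\Expect_x(x^\top(\theta^*-\hat\theta))^2$, splits $\hat\theta-\theta^*$ into the bias part $(I-P)\theta^*$ and the noise part $X^\top(XX^\top)^{-1}\bs{\eps}$, applies the elementary inequality $(u+v)^2\le 2u^2+2v^2$ for the upper bound, and then controls $\bs{\eps}^\top C\bs{\eps}$ via a Hanson--Wright--type quadratic-form bound (Lemma~\ref{lemma::subgaussian-quadratic-form}) together with $\|C\|\le\tr(C)$ and $\tr(C^2)\le\tr(C)^2$. The lower bound is likewise obtained by taking $\Expect_{\bs{\eps}}$ so that the cross term vanishes and using $\Expect[\bs{\eps}\bs{\eps}^\top\mid X]\succeq\sigma^2 I_n$, exactly as you outline.
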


The proof of this lemma is in Appendix~\ref{appendix:bias}.
Appendices~\ref{a:upperB} and~\ref{a:packing} give bounds on the term
${\theta^\ast}^\top B \theta^\ast$.
The heart of the proof is controlling $\tr(C)$.

Before continuing with the proof, let us make a quick digression to
note that Lemma~\ref{lemma:bv} already begins to give an idea that
many low-variance directions are necessary for the least-norm
interpolator to succeed.  Let us consider the extreme case that $p =
n$ and $\Sigma = I$.  In this case, $C = \left(X X^\top \right)^{-1}$.
For gaussian data, for instance, standard random matrix theory implies
that, with high probability, the eigenvalues of $X X^\top$ will
all be within a constant factor of $n$,
which implies $\tr(C)$ is bounded below by a constant,
and then Lemma~\ref{lemma:bv} implies that the least-norm interpolant's
excess risk is at least a constant.

To prove that
$\tr(C)$ {\em can}\/ be controlled for suitable
$\Sigma$, the first step is to express it in terms of
sums of outer products of unit covariance independent subgaussian
random vectors.
We show that when there is a $k^*$ with $k^*/n$ small and
$r_{k^*}(\Sigma)/n$ large, all of the
smallest eigenvalues of these matrices are suitably concentrated,
and this implies that $\tr(C)$ is bounded above by
  \[
    \min_{l\le k^*}\left(\frac{l}{n} + 
       n\frac{\sum_{i>l}\lambda_i^2}
      {\left(\lambda_{k^*+1}r_{k^*}(\Sigma)\right)^2}\right).
  \]
(Later, we show that the minimizer is $l=k^*$.)
Next, we show that this expression is also a lower bound on $\tr(C)$,
provided that there is such a $k^*$.
Conversely, we show that for any
$k$ for which $r_k(\Sigma)$ is not large compared to $n$,
$\tr(C)$ is at least as big as a constant times $\min(1,k/n)$.
Combining shows that when $k^*/n$ is small,
$\tr(C)$ is upper and lower bounded by constant factors times
  \[
    \frac{k^*}{n} + \frac{n}{R_{k^*}(\Sigma)}.
  \]

\subsection*{Unit variance subgaussians}

Our assumptions allow the trace of $C$ to be expressed
as a function of many independent subgaussian vectors.

\begin{lemma}\label{lemma::representation_through_z}
  Consider a covariance operator $\Sigma$ with
  $\lambda_i=\mu_i(\Sigma)$ and $\lambda_n>0$. Write its spectral
  decomposition $\Sigma=\sum_{j} \lambda_j v_jv_j^\top$,
  where the orthonormal $v_j\in\bbH$ are the eigenvectors
  corresponding to the $\lambda_j$. 
  For $i$ with $\lambda_i > 0$, define
  $z_i=Xv_i/\sqrt{\lambda_i}$. Then 
    \begin{align*}
      \tr\left(C\right)
        &= \sum_{i}\left[ \lambda_i^2 z_i^\top
          \left(\sum_{j} \lambda_j z_j z_j^\top \right)^{-2}z_i\right],
    \end{align*}
  and these $z_i\in\Re^n$ are independent $\sigma_x^2$-subgaussian.
  Furthermore, for any $i$ with $\lambda_i > 0$, we have
  \begin{align*}
    \lambda_i^2 z_i^\top \left(\sum_{j} \lambda_j z_j z_j^\top
        \right)^{-2}z_i
      = \frac{\lambda_i^2 z_i^\top  A_{-i}^{-2}z_i}
        {(1 + \lambda_i z_i^\top A_{-i}^{-1}z_i)^2},
  \end{align*}
  where $A_{-i} = \sum_{j \neq i} \lambda_j z_j z_j^\top$.
\end{lemma}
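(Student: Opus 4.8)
The plan is to rewrite $C$ in the eigenbasis of $\Sigma$ and then recognize the resulting matrices as Gram matrices of the $z_i$. First I would observe that, since $x = V\Lambda^{1/2}z$, almost surely $x$ lies in the closed linear span of the eigenvectors $v_j$ with $\lambda_j>0$, so $X = \sum_{j:\lambda_j>0}(Xv_j)v_j^\top = \sum_j \sqrt{\lambda_j}\,z_j v_j^\top$ with $z_j = Xv_j/\sqrt{\lambda_j}$. Using orthonormality of the $v_j$ this gives $XX^\top = \sum_j \lambda_j z_j z_j^\top =: A$ and $X\Sigma X^\top = \sum_j \lambda_j^2 z_j z_j^\top$, where the series converge almost surely (in operator, equivalently trace, norm) because $\sum_j\lambda_j<\infty$ and a crude subgaussian tail bound controls $\|z_j\|^2$. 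Substituting into $C=(XX^\top)^{-1}X\Sigma X^\top(XX^\top)^{-1}=A^{-1}\bigl(\sum_j\lambda_j^2 z_j z_j^\top\bigr)A^{-1}$ and using linearity and cyclicity of the trace, $\tr(C)=\sum_i \lambda_i^2\tr(A^{-1}z_iz_i^\top A^{-1})=\sum_i \lambda_i^2 z_i^\top A^{-2}z_i$, which is the first identity (recall $A=\sum_j\lambda_j z_jz_j^\top$ is invertible by Assumption~\ref{assumption:full_rank}).

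For the independence and subgaussianity claim I would track coordinates: the $l$-th coordinate of $z_i$ is $x_l^\top v_i/\sqrt{\lambda_i}$, which by the model $x_l=V\Lambda^{1/2}z^{(l)}$ equals the $i$-th coordinate of the underlying independent-components vector $z^{(l)}$ of the $l$-th sample. Hence the scalars $\{(z_i)_l : l\in[n],\ i\}$ are exactly the coordinates $\{z^{(l)}_i\}$, which are mutually independent (across $l$ because the samples are i.i.d., across $i$ because Assumption~\ref{assumption:subgaussiandata} makes the components of each $z^{(l)}$ independent) and each $\sigma_x^2$-subgaussian. In particular distinct $z_i$ are functions of disjoint blocks of these scalars, so the $z_i$ are independent of one another, and each $z_i\in\Re^n$ has independent $\sigma_x^2$-subgaussian coordinates, hence is $\sigma_x^2$-subgaussian as a vector.

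For the last identity I would apply the Sherman--Morrison formula to the rank-one update $A = A_{-i}+\lambda_i z_i z_i^\top$. Here the key point is that Assumption~\ref{assumption:full_rank}, applied to the eigenvector $v_i$, states precisely that the projection of the data onto $v_i^\perp$ spans $\Re^n$, i.e.\ $A_{-i}=\sum_{j\ne i}\lambda_j z_j z_j^\top$ is invertible almost surely, so Sherman--Morrison is legitimate. It yields $A^{-1}z_i = A_{-i}^{-1}z_i/(1+\lambda_i z_i^\top A_{-i}^{-1}z_i)$; squaring the Euclidean norm gives $z_i^\top A^{-2}z_i = \|A^{-1}z_i\|^2 = z_i^\top A_{-i}^{-2}z_i/(1+\lambda_i z_i^\top A_{-i}^{-1}z_i)^2$, and multiplying by $\lambda_i^2$ gives the stated formula.

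I do not expect a genuine obstacle: this lemma is essentially a bookkeeping/algebra step. The only points needing a little care are (i) justifying, in the infinite-dimensional case, the convergence of $\sum_j\lambda_j z_jz_j^\top$ and $\sum_j\lambda_j^2 z_jz_j^\top$ and the interchange of the trace with the infinite sum, which follows from $\sum_j\lambda_j<\infty$ together with a subgaussian bound on $\|z_j\|^2$; and (ii) invoking Assumption~\ref{assumption:full_rank} in the correct form to guarantee invertibility of both $A$ and every $A_{-i}$ before inverting and before applying Sherman--Morrison.
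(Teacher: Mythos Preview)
Your proposal is correct and follows essentially the same approach as the paper: expand $XX^\top$ and $X\Sigma X^\top$ in the eigenbasis of $\Sigma$, take the trace, and then handle the rank-one update via Sherman--Morrison. The only cosmetic difference is that the paper packages the last step as a general Sherman--Morrison--Woodbury identity (its Lemma~\ref{lemma::SMW_formula}, applied with $k=1$), whereas you compute $A^{-1}z_i$ directly and square its norm; both arguments are equivalent.
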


\begin{proof}
  By Assumption~\ref{assumption:subgaussiandata} in
  Definition~\ref{def:linreg}, the random
  variables $x^\top v_i/\sqrt{\lambda_i}$ are independent
  $\sigma_x^2$-subgaussian. We consider $X$ in the basis of
  eigenvectors of $\Sigma$, $Xv_i=\sqrt{\lambda_i}z_i$, to see that
  \[
    XX^\top = \sum_i\lambda_i z_i z_i^\top,\qquad
    X\Sigma X^\top = \sum_i\lambda_i^2 z_i z_i^\top,
  \]
  so we can write
    \begin{align*}
      \tr\left(C\right)
        & = \tr\left(\left(X X^\top \right)^{-1}X\Sigma X^\top \left(X
          X^\top \right)^{-1}\right) \\
        & = \sum_{i}\left[ \lambda_i^2 z_i^\top 
          \left(\sum_{j} \lambda_j z_j z_j^\top
          \right)^{-2}z_i\right].
    \end{align*}
  For the second part, we use
  Lemma~\ref{lemma::SMW_formula}, which is
  a consequence of the Sherman-Woodbury-Morrison formula; see
  Appendix~\ref{appendix:SMWproof}.
    \begin{align*}
      \lambda_i^2 z_i^\top  \left(\sum_{j}
        \lambda_j z_j z_j^\top \right)^{-2}z_i
      &= \lambda_i^2 z_i^\top \left(\lambda_i z_i z_i^\top 
        +A_{-i}\right)^{-2}z_i \\
      &= \frac{\lambda_i^2 z_i^\top  A_{-i}^{-2}z_i}
        {(1 + \lambda_i z_i^\top A_{-i}^{-1}z_i)^2},
    \end{align*}
  by Lemma \ref{lemma::SMW_formula}, for the case $k=1$ and
  $Z=\sqrt{\lambda_i}z_i$. Note that $A_{-i}$ is invertible by
  Assumption~\ref{assumption:full_rank} in Definition~\ref{def:linreg}.
\end{proof}

The weighted sum of outer products of these subgaussian vectors plays a
central role in the rest of the proof. Define
  \begin{align*}
    A & = \sum_{i} \lambda_i z_i z_i^\top, &
    A_{-i} & = \sum_{j \neq i} \lambda_j z_j z_j^\top, &
    A_k &= \sum_{i>k}\lambda_i z_i z_i^\top,
  \end{align*}
where the $z_i\in\Re^n$ are independent vectors with independent $\sigma_x^2$-subgaussian  coordinates with unit variance, defined in Lemma~\ref{lemma::representation_through_z}.
Note that the vector $z_i$ is independent of the matrix $A_{-i}$, so
in the last part of Lemma~\ref{lemma::representation_through_z},
all the random quadratic forms are independent of the points where
those forms are evaluated.

\subsection*{Concentration of $A$}

The next step is to show that eigenvalues of $A$, $A_{-i}$ and $A_k$ are
concentrated. The proof of the following inequality is in
Appendix~\ref{appendix:concentration}. Recall that $\mu_1(A)$ and $\mu_n(A)$
denote the largest and the smallest eigenvalues of the $n\times n$ matrix $A$.

\begin{lemma}\label{lemma::eig_bound}
 There is a constant $c$ such that for any $k \geq 0$ 
with probability at least $1 - 2e^{-n/c}$,
    \[
      \frac{1}{c}\sum_{i > k} \lambda_i - c\lambda_{k+1} n
        \leq \mu_n(A_k) \leq \mu_1(A_k)
        \leq c\left(\sum_{i> k} \lambda_i + \lambda_{k+1} n\right).
    \]
\end{lemma}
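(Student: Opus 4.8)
The plan is to view $A_k=\sum_{i>k}\lambda_i z_i z_i^\top$ as a sum of independent positive semidefinite random $n\times n$ matrices that concentrates around its mean $\Expect A_k=\bigl(\sum_{i>k}\lambda_i\bigr) I_n$, and to control the deviation in operator norm by a Bernstein bound on one–dimensional quadratic forms together with a covering argument over the unit sphere of $\Re^n$. Write $s=\sum_{i>k}\lambda_i$, which is finite in the setting of Theorem~\ref{th::main} (there $\sum_i\lambda_i<\infty$, so $\Expect\tr A_k=ns<\infty$ and the infinite sum defining $A_k$ converges almost surely). For a fixed unit vector $u\in\Re^n$, the variables $z_i^\top u$ are independent, mean zero, $\sigma_x^2$-subgaussian, with $\Expect(z_i^\top u)^2=\|u\|^2=1$; hence each $(z_i^\top u)^2-1$ is centered with subexponential norm bounded by a constant depending only on $\sigma_x$, and
\[
 u^\top A_k u-s=\sum_{i>k}\lambda_i\bigl((z_i^\top u)^2-1\bigr)
\]
is a weighted sum of independent centered subexponential variables. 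Bernstein's inequality applies with variance parameter a constant times $\sum_{i>k}\lambda_i^2$ and scale parameter a constant times $\max_{i>k}\lambda_i=\lambda_{k+1}$; using $\sum_{i>k}\lambda_i^2\le\lambda_{k+1}\,s$ and AM--GM to handle the sub-Gaussian regime, one gets, for a small absolute constant $\alpha$ to be fixed below, a constant $c_2$ depending only on $\sigma_x$ such that for every $L\ge 1$ and every fixed unit $u$,
\[
 \Pbb\Bigl(\bigl|u^\top A_k u-s\bigr|>\alpha s+c_2 L\,\lambda_{k+1} n\Bigr)\le 2e^{-Ln}.
\]

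Next I would discretize: let $\Nc$ be a $\tfrac14$-net of the unit sphere $S^{n-1}$ with $|\Nc|\le 9^n$. Choosing $L$ a large enough absolute constant so that $9^n\cdot 2e^{-Ln}\le 2e^{-n/c}$, and taking a union bound over $\Nc$, with probability at least $1-2e^{-n/c}$ every $u\in\Nc$ satisfies $|u^\top A_k u-s|\le\alpha s+c_2 L\,\lambda_{k+1} n$. A routine net-to-sphere comparison for quadratic forms of the symmetric matrix $A_k-sI_n$ upgrades this to $\|A_k-sI_n\|\le\kappa\bigl(\alpha s+c_2 L\,\lambda_{k+1} n\bigr)$ for an absolute constant $\kappa$; now fix $\alpha$ small enough that $\kappa\alpha\le\tfrac12$, so that $\|A_k-sI_n\|\le\tfrac12 s+\kappa c_2 L\,\lambda_{k+1} n$. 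Consequently
\[
 \mu_n(A_k)\ge s-\|A_k-sI_n\|\ge\tfrac12 s-\kappa c_2 L\,\lambda_{k+1} n,\qquad
 \mu_1(A_k)\le s+\|A_k-sI_n\|\le\tfrac32 s+\kappa c_2 L\,\lambda_{k+1} n,
\]
and absorbing the numerical factors into a single constant $c$ yields the stated two-sided bound. (The same scheme applies verbatim with $k$ replaced by $0$, and to each $A_{-i}=\sum_{j\ne i}\lambda_j z_j z_j^\top$, which is how this concentration is used in the sequel.)

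I do not expect a deep obstacle; the one genuinely important point is the lower bound on $\mu_n(A_k)$. One cannot simply bound $\|A_k-sI_n\|$ by a constant multiple of $s+\lambda_{k+1}n$, because when $r_k(\Sigma)=s/\lambda_{k+1}$ is much smaller than $n$ the term $\lambda_{k+1}n$ dominates $s$ and such a bound would make the lower bound on $\mu_n(A_k)$ vacuous. What rescues the argument is that Bernstein lets the $s$-part of the deviation be an arbitrarily small \emph{fraction} of $s$, at the sole cost of enlarging the coefficient of $\lambda_{k+1}n$; this fractional control is exactly what keeps $\tfrac12 s-\kappa c_2 L\,\lambda_{k+1}n$ of the form $\tfrac1c s-c\lambda_{k+1}n$ demanded by the statement. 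The remaining work is bookkeeping: the subexponential constants, the net cardinality $9^n$, and the resulting choice of $L$ must all be tracked as depending only on $\sigma_x$, consistent with the paper's convention for the symbol $c$.
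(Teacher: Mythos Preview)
Your proposal is correct and follows essentially the same argument as the paper: fix a unit vector, apply Bernstein to the weighted sum of centered subexponentials $\sum_{i>k}\lambda_i\bigl((z_i^\top u)^2-1\bigr)$, cover $S^{n-1}$ by a $\tfrac14$-net of cardinality at most $9^n$, and lift to an operator-norm bound on $A_k-sI_n$. The paper carries out the same simplification you highlight---using $\sum_{i>k}\lambda_i^2\le\lambda_{k+1}s$ and AM--GM to make the $s$-contribution to the deviation an arbitrarily small fraction of $s$ at the cost of inflating the $\lambda_{k+1}n$ coefficient---only in a slightly different order (first union bound, then simplify $\Diamond$), so your identification of this as the crucial point for the lower bound matches exactly.
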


The following lemma uses this result to give bounds on the eigenvalues
of $A_k$, which in turn give bounds on some eigenvalues of $A_{-i}$
and $A$. For these upper and lower bounds to match up to a constant
factor, the sum of the eigenvalues of $A_k$ should dominate the term
involving its leading eigenvalue, which is a condition on the
effective rank $r_k(\Sigma)$.
The lemma shows that once $r_k(\Sigma)$ is
sufficiently large, all of the eigenvalues of $A_k$ are identical up
to a constant factor.

\begin{lemma}\label{lemma::eigvals_of_truncated}
  There are constants $b,c\ge 1$ such that for any
$k\ge 0$, 
with probability at least $1 - 2e^{-n/c}$,
    \begin{enumerate}
      \item for all $i\ge 1$,
        \begin{align*}
       \mu_{k+1}(A_{-i}) \le \mu_{{k+1}}(A) \le \mu_1(A_k)
           \le c\left(\sum_{j>k}\lambda_j + \lambda_{k+1}n\right),
        \end{align*}
      \item for all $1\le i\le k$,
        \[
          \mu_n(A) \ge \mu_n(A_{-i}) \ge \mu_n\left(A_k\right)
          \ge \frac{1}{c}\sum_{j>k} \lambda_j - c\lambda_{k+1} n,
        \]
      \item if $r_k(\Sigma)\ge bn$, then
        \begin{align*}
          \frac{1}{c}\lambda_{k+1} r_k(\Sigma)
            \le \mu_n\left(A_k\right)
            \le \mu_1(A_k) \le c\lambda_{k+1} r_k(\Sigma).
    \end{align*}
    \end{enumerate}
\end{lemma}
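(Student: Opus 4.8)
The plan is to derive all three parts from the single high‑probability event furnished by Lemma~\ref{lemma::eig_bound} applied with this value of $k$, after which everything reduces to deterministic linear algebra. Concretely, I would condition on the event
\[
\tfrac{1}{c_0}\sum_{i>k}\lambda_i - c_0\lambda_{k+1}n \;\le\; \mu_n(A_k) \;\le\; \mu_1(A_k) \;\le\; c_0\Bigl(\sum_{i>k}\lambda_i + \lambda_{k+1}n\Bigr),
\]
where $c_0$ is the constant of Lemma~\ref{lemma::eig_bound} (so $c_0$ depends only on $\sigma_x$). This event has probability at least $1-2e^{-n/c_0}$, and since every assertion below is deterministic given it, the stated probability bound follows with $c$ a suitable multiple of $c_0$.

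For part~1 the rightmost inequality is exactly the upper bound above. For the middle inequality, write $A=A_k+M$ with $M=\sum_{j\le k}\lambda_j z_jz_j^\top$; since $\rank(M)\le k$, Weyl's inequality gives $\mu_{k+1}(A)\le \mu_1(A_k)+\mu_{k+1}(M)=\mu_1(A_k)$ (equivalently, restrict the Courant--Fischer characterization of $\mu_{k+1}(A)$ to the $\ge n-k$ dimensional subspace orthogonal to $z_1,\dots,z_k$, on which $A$ agrees with $A_k$). Finally $\mu_{k+1}(A_{-i})\le \mu_{k+1}(A)$ for every $i$ because $A-A_{-i}=\lambda_i z_iz_i^\top\succeq 0$ and eigenvalues are monotone under addition of a positive semidefinite operator. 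Part~2 is the mirror image: for $1\le i\le k$ the matrix $A_{-i}=\sum_{j\neq i}\lambda_j z_jz_j^\top$ retains every term with index $>k$, so $A_{-i}-A_k\succeq 0$, hence $\mu_n(A)\ge\mu_n(A_{-i})\ge\mu_n(A_k)$, and the last inequality is the lower bound above.

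For part~3 the hypothesis $r_k(\Sigma)\ge bn$ rewrites as $\lambda_{k+1}n\le \tfrac1b\sum_{j>k}\lambda_j=\tfrac1b\lambda_{k+1}r_k(\Sigma)$; substituting into the two bounds above yields $\mu_1(A_k)\le c_0(1+\tfrac1b)\lambda_{k+1}r_k(\Sigma)$ and $\mu_n(A_k)\ge(\tfrac1{c_0}-\tfrac{c_0}{b})\lambda_{k+1}r_k(\Sigma)$, and choosing $b\ge 2c_0^2$ makes the lower coefficient at least $\tfrac1{2c_0}$ and the upper coefficient at most $2c_0$, giving the claim with $c=2c_0$. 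There is no genuine obstacle here beyond bookkeeping: the one point that needs care is to fix $b$ large enough, uniformly over $\Sigma$ and $k$, that the $\lambda_{k+1}n$ error terms of Lemma~\ref{lemma::eig_bound} are dominated by $\sum_{j>k}\lambda_j$, and to check that a single realization of the event and a single choice of $b$ serve all three parts at once.
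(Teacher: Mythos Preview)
Your proposal is correct and follows essentially the same approach as the paper: both condition on the single high-probability event from Lemma~\ref{lemma::eig_bound}, then use the rank-$k$ structure of $A-A_k$ together with eigenvalue monotonicity (Courant--Fischer/Weyl) to compare $\mu_{k+1}(A)$, $\mu_n(A_{-i})$ with the extremal eigenvalues of $A_k$, and finally absorb the $\lambda_{k+1}n$ error term into $\sum_{j>k}\lambda_j$ by choosing $b$ large enough relative to $c_0$. The constant bookkeeping differs only in the exact thresholds chosen ($b\ge 2c_0^2$ versus $b>c_0^2$), which is immaterial.
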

\begin{proof}
  By Lemma~\ref{lemma::eig_bound}, we know that with probability at
  least 
$1-2e^{-n/c_1}$,
    \begin{align*}
      \frac{1}{c_1}\sum_{j>k}\lambda_j - c_1\lambda_{k+1} n
        & \leq \mu_n(A_k) \\
        & \leq \mu_1(A_k) 
        \leq c_1\left(\sum_{j>k} \lambda_j + \lambda_{k+1} n\right).
    \end{align*}
  First, the matrix $A - A_{k}$ has rank at most $k$
  (as a sum of $k$ matrices of rank $1$).
  Thus, there is a linear space $\Ls$ of dimension $n-k$ such that 
  for all $v\in\Ls$, $v^\top Av = v^\top A_k v \leq \mu_1(A_k)\|v\|^2$,
  and so $\mu_{k+1}(A) \le \mu_1(A_k)$.
  
  Second, by the Courant-Fischer-Weyl Theorem, for all $i$ and $j$,
  $\mu_j(A_{-i})\le \mu_j(A)$ (see Lemma~\ref{lemma:monotone}).
  On the other hand, for $i \leq k$, $A_k \preceq A_{-i}$, so all
  the eigenvalues of $ A_{-i}$ are lower bounded by $\mu_n(A_k)$.

  Finally, if $r_k(\Sigma)\ge b n$,
    \begin{align*}
      \sum_{j>k}\lambda_j + \lambda_{k+1}n
        & = \lambda_{k+1} r_k(\Sigma) + \lambda_{k+1}n \\
        & \le \left(1+\frac{1}{b}\right)\lambda_{k+1} r_k(\Sigma), \\
      \frac{1}{c_1}\sum_{j>k}\lambda_j - c_1\lambda_{k+1}n
        & = \frac{1}{c_1}\lambda_{k+1} r_k(\Sigma) - c_1\lambda_{k+1}n
        \\
        & \ge \left(\frac{1}{c_1}-\frac{c_1}{b}\right)
          \lambda_{k+1} r_k(\Sigma).
    \end{align*}
  Choosing $b > c_1^2$ and $c > \max\left\{c_1+1/c_1,
  \left(1/c_1-c_1/b\right)^{-1}\right\}$  gives the
  third
  claim of the lemma.
\end{proof}

\subsection*{Upper bound on the trace term}

\begin{lemma}\label{lemma:traceupper}
  There are constants $b,c\ge 1$ such that 
  if $0\le k\le n/c$, $r_k(\Sigma)\ge bn$, and $l\le k$ 
then  with probability at least 
$1 - 7e^{-n/c}$,
    \begin{align*}
      \tr(C)
        &\le c \left(\frac{l}{n} + n \frac{\sum_{i>l}\lambda_i^2}
            {\left(\sum_{i>k}\lambda_i\right)^2} \right).
    \end{align*}
\end{lemma}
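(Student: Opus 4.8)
\emph{Proof plan.} By Lemma~\ref{lemma::representation_through_z}, $\tr(C)=\sum_i \lambda_i^2 z_i^\top A^{-2}z_i$, so I would split the sum at $l$,
\[
  \tr(C)=\sum_{i\le l}\lambda_i^2 z_i^\top A^{-2}z_i+\sum_{i>l}\lambda_i^2 z_i^\top A^{-2}z_i .
\]
The plan is to show that the tail sum is at most $c\,n\sum_{i>l}\lambda_i^2/\bigl(\sum_{i>k}\lambda_i\bigr)^2$ by a single trace estimate, and that \emph{each} summand with $i\le l$ is at most $c/n$ — so the head sum is at most $cl/n$ — using the Sherman--Morrison identity of Lemma~\ref{lemma::representation_through_z} one index at a time. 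The argument will use throughout that $A\succeq A_k$, that $A_{-i}\succeq A_k$ for $i\le k$, and the fact that, since $r_k(\Sigma)\ge bn$, Lemma~\ref{lemma::eigvals_of_truncated} pins down the spectrum of $A_k$: with probability $1-2e^{-n/c}$, all eigenvalues of $A_k$ lie between $\frac1c\sum_{j>k}\lambda_j$ and $c\sum_{j>k}\lambda_j$.

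\emph{Tail directions ($i>l$).} Using $z_i^\top A^{-2}z_i\le\|A^{-1}\|^2\|z_i\|^2=\mu_n(A)^{-2}\|z_i\|^2$ and $\mu_n(A)\ge\mu_n(A_k)\ge\frac1c\sum_{j>k}\lambda_j$,
\[
  \sum_{i>l}\lambda_i^2 z_i^\top A^{-2}z_i\le\mu_n(A)^{-2}\sum_{i>l}\lambda_i^2\|z_i\|^2
   \le\frac{c}{\bigl(\sum_{j>k}\lambda_j\bigr)^2}\sum_{i>l}\lambda_i^2\|z_i\|^2 .
\]
Since the $\|z_i\|^2-n$ are independent, centred, and sub-exponential, a Bernstein bound (Appendix~\ref{appendix:concentration}) gives $\sum_{i>l}\lambda_i^2\|z_i\|^2\le c\,n\sum_{i>l}\lambda_i^2$ with probability $1-2e^{-n/c}$; the only point to check is that the ``scale'' term produced by Bernstein is a constant times $n\lambda_{l+1}^2$, and $\lambda_{l+1}^2\le\sum_{i>l}\lambda_i^2$, so it is absorbed. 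This controls the tail sum.

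\emph{Head directions ($i\le l$).} Fix $i\le l\le k$; recall $A_{-i}$ is invertible and independent of $z_i$. By the second part of Lemma~\ref{lemma::representation_through_z}, and then discarding the $1$ in the denominator (so that $\lambda_i^2$ cancels),
\[
  \lambda_i^2 z_i^\top A^{-2}z_i
   =\frac{\lambda_i^2\,z_i^\top A_{-i}^{-2}z_i}{\bigl(1+\lambda_i z_i^\top A_{-i}^{-1}z_i\bigr)^2}
   \le\frac{z_i^\top A_{-i}^{-2}z_i}{\bigl(z_i^\top A_{-i}^{-1}z_i\bigr)^2}
   \le\frac{1}{\mu_n(A_{-i})\,z_i^\top A_{-i}^{-1}z_i},
\]
the last step using $z_i^\top A_{-i}^{-2}z_i\le\mu_n(A_{-i})^{-1}z_i^\top A_{-i}^{-1}z_i$. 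It then suffices to bound $\mu_n(A_{-i})\,z_i^\top A_{-i}^{-1}z_i$ below by $n/c$. The first factor is $\ge\mu_n(A_k)\ge\frac1c\sum_{j>k}\lambda_j$. For the second, $\E[z_i^\top A_{-i}^{-1}z_i\mid A_{-i}]=\tr(A_{-i}^{-1})$, and $A_{-i}=A_k+\sum_{1\le j\le k,\,j\ne i}\lambda_j z_jz_j^\top$ is a rank-$\le(k-1)$ perturbation of $A_k$, so Weyl's inequality gives $\mu_m(A_{-i})\le\mu_1(A_k)$ for all $m\ge k$; hence, using $\mu_1(A_k)\le c\sum_{j>k}\lambda_j$ and $k\le n/2$,
\[
  \tr(A_{-i}^{-1})\ge\sum_{m=k}^{n}\mu_m(A_{-i})^{-1}\ge\frac{n-k+1}{\mu_1(A_k)}\ge\frac{n}{c\sum_{j>k}\lambda_j}.
\]
These two estimates also give $\tr(A_{-i}^{-1})\,\mu_n(A_{-i})\ge n/c$ — i.e.\ the effective rank of $A_{-i}^{-1}$ is at least a constant times $n$ — which is exactly the hypothesis needed for a Hanson--Wright lower-tail inequality: $z_i^\top A_{-i}^{-1}z_i\ge\frac1c\tr(A_{-i}^{-1})$ with probability $1-e^{-n/c}$. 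Combining, $\lambda_i^2 z_i^\top A^{-2}z_i\le c/n$. Summing over the at most $l\le n/c$ head indices, union-bounding their Hanson--Wright events together with the finitely many global events above, and adjusting constants, gives the stated bound.

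\emph{Main obstacle.} The delicate step is the head directions. The crude estimate $(1+\lambda_i z_i^\top A_{-i}^{-1}z_i)^2\ge4\lambda_i z_i^\top A_{-i}^{-1}z_i$ only yields $\lambda_i^2 z_i^\top A^{-2}z_i\le\lambda_i/(4\mu_n(A_{-i}))$, which overshoots $1/n$ by an unbounded factor whenever $\lambda_i\gg\lambda_{k+1}$. One must keep the full square, cancel $\lambda_i^2$, and then control the residual $z_i^\top A_{-i}^{-2}z_i/(z_i^\top A_{-i}^{-1}z_i)^2$ — and this works only because $z_i$ is independent of $A_{-i}$, whose spectrum is squeezed into a band of multiplicative width $O(1)$ around $\sum_{j>k}\lambda_j$ on an $(n-k)$-dimensional subspace precisely when $r_k(\Sigma)\ge bn$. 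The remaining effort is the bookkeeping of the various $e^{-n/c}$ failure probabilities, in particular the union bound over the head indices $i\le l$.
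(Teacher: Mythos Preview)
Your proposal is correct and follows the same overall architecture as the paper: split $\tr(C)$ at $l$, handle the tail via $\mu_n(A)\ge\mu_n(A_k)$ and a Bernstein bound on $\sum_{i>l}\lambda_i^2\|z_i\|^2$, and handle each head term by cancelling $\lambda_i^2$ through the Sherman--Morrison identity and then bounding $z_i^\top A_{-i}^{-2}z_i/(z_i^\top A_{-i}^{-1}z_i)^2$. The tail argument is identical to the paper's.

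The head argument differs in a minor but pleasant way. The paper bounds numerator and denominator separately: $z_i^\top A_{-i}^{-2}z_i\le c\|z_i\|^2/(\lambda_{k+1}r_k)^2$ from $\mu_n(A_{-i})$, and $z_i^\top A_{-i}^{-1}z_i\ge \|\Pi_{\Ls_i}z_i\|^2/(c\lambda_{k+1}r_k)$ from $\mu_{k+1}(A_{-i})$, where $\Ls_i$ is the bottom-$(n-k)$ eigenspace of $A_{-i}$; it then controls $\|z_i\|^2$ and $\|\Pi_{\Ls_i}z_i\|^2$ via Corollary~\ref{cor::cor norm of projection}, arriving at the bound $c\|z_i\|^2/\|\Pi_{\Ls_i}z_i\|^4\le c/n$. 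You instead use the operator inequality $A_{-i}^{-2}\preceq \mu_n(A_{-i})^{-1}A_{-i}^{-1}$ to collapse the ratio to $1/(\mu_n(A_{-i})\,z_i^\top A_{-i}^{-1}z_i)$, and then lower bound the quadratic form by Hanson--Wright against $\tr(A_{-i}^{-1})$, which you in turn bound via the Weyl/Courant--Fischer observation that $\mu_m(A_{-i})\le\mu_1(A_k)$ for $m\ge k$. Both routes hinge on the same spectral facts about $A_k$ (Lemma~\ref{lemma::eigvals_of_truncated}) under $r_k(\Sigma)\ge bn$; yours avoids the explicit projection and is arguably cleaner, while the paper's route makes the role of the low-eigenvalue subspace $\Ls_i$ explicit. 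The union bound over $i\le l$ is handled in the paper by shifting $t\mapsto t+\ln l$ inside Corollary~\ref{cor::cor norm of projection}; the same device works for your Hanson--Wright events and tidies up the bookkeeping you flag at the end.
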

The proof uses the following lemma and its corollary. Their proofs are in
Appendix~\ref{appendix:concentration}.
\begin{lemma}\label{lemma::cor-Bernstein}
  	Suppose $\{\lambda_i\}_{i}^\infty $ is a non-increasing sequence of non-negative numbers such that $\sum_{i=1}^\infty \lambda_i < \infty$, and $\{\xi_i\}_{i = 1}^\infty$ are independent centered $\sigma$-subexponential random variables. Then for some universal constant $a$ for any $t > 0$ with probability at least $1-2e^{-t}$
  	\[
  	\left|\sum_{i} \lambda_i \xi_i\right| \leq a\sigma\max\left(t\lambda_1, \sqrt{t \sum_{i} \lambda_i^2}\right).
  	\]
\end{lemma}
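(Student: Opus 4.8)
The claim is Bernstein's inequality for a weighted sum of independent centered subexponential random variables. Writing $\lambda=(\lambda_1,\lambda_2,\dots)$, monotonicity and nonnegativity give $\|\lambda\|_\infty=\lambda_1$ and $\|\lambda\|_2^2=\sum_i\lambda_i^2$, so that the right-hand side is exactly the usual Bernstein quantity $a\sigma\max(t\|\lambda\|_\infty,\sqrt{t}\,\|\lambda\|_2)$. The plan is: (i) make sense of the infinite sum and reduce to partial sums; (ii) prove a Bernstein-type dichotomy tail bound for the partial sums by a moment-generating-function estimate; (iii) invert that dichotomy into the stated $\max$ form and pass to the limit. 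For step (i), note that a centered $\sigma$-subexponential variable has variance bounded by a universal constant times $\sigma^2$, and $\sum_i\lambda_i^2\le\lambda_1\sum_i\lambda_i<\infty$; hence the partial sums $S_N:=\sum_{i\le N}\lambda_i\xi_i$ form an $L^2$-bounded martingale and so converge almost surely (and in $L^2$) to $S:=\sum_i\lambda_i\xi_i$, which is the meaning of the sum in the statement. Since the bound $r$ below will not depend on $N$, it then suffices to prove $\Pbb[|S_N|>r]\le 2e^{-t}$ uniformly in $N$ and let $N\to\infty$, using almost sure convergence of $S_N$ to $S$ together with continuity of measure.

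For fixed $N$, independence gives $\E\exp(sS_N)=\prod_{i\le N}\E\exp(s\lambda_i\xi_i)$. The subexponential hypothesis furnishes a universal constant $a_0$ with $\E\exp(u\xi_i)\le\exp(a_0\sigma^2u^2)$ whenever $|u|\le 1/(a_0\sigma)$; taking $u=s\lambda_i$ and using $\lambda_i\le\lambda_1$, this holds for every $i$ simultaneously as soon as $|s|\le 1/(a_0\sigma\lambda_1)$, and in that range $\E\exp(sS_N)\le\exp\bigl(a_0\sigma^2 s^2\sum_i\lambda_i^2\bigr)$. A Chernoff bound, optimizing $s$ over $[0,1/(a_0\sigma\lambda_1)]$, yields a universal $a_1>0$ with
\[
\Pbb[S_N>r]\le\exp\left(-a_1\min\left(\frac{r^2}{\sigma^2\sum_i\lambda_i^2},\ \frac{r}{\sigma\lambda_1}\right)\right),
\]
and the same bound holds for $-S_N$.

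Finally I would set $r=a\sigma\max\bigl(t\lambda_1,\sqrt{t\sum_i\lambda_i^2}\bigr)$ with $a$ chosen so that $a^2a_1\ge 1$ and $aa_1\ge 1$. If $t\lambda_1\le\sqrt{t\sum_i\lambda_i^2}$ then $r=a\sigma\sqrt{t\sum_i\lambda_i^2}$, so $r^2/(\sigma^2\sum_i\lambda_i^2)=a^2t\ge t$; otherwise $r=a\sigma t\lambda_1$, so $r/(\sigma\lambda_1)=at\ge t$. Either way the minimum inside the exponential is at least $t$, so $\Pbb[S_N>r]\le e^{-t}$ and $\Pbb[S_N<-r]\le e^{-t}$, giving $\Pbb[|S_N|>r]\le 2e^{-t}$ for every $N$; letting $N\to\infty$ as above yields $\Pbb[|S|>r]\le 2e^{-t}$. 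I do not expect a genuine obstacle: the only points requiring a line of care are the justification that the infinite sum converges and that the per-$N$ tail estimate transfers to $S$, and the bookkeeping of universal constants when inverting the dichotomy. Alternatively, once one records $\|\lambda\|_\infty=\lambda_1$, the dichotomy bound can simply be quoted from a standard reference for the Bernstein inequality for subexponential variables, and only the truncation-to-partial-sums argument needs to be supplied.
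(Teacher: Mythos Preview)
Your approach is correct and is exactly the paper's: the lemma is stated in the paper as an immediate corollary of Bernstein's inequality (quoted as Theorem~2.8.2 of Vershynin), with no further argument given, so your write-up---supplying the MGF estimate, the explicit inversion, and the martingale passage to the infinite sum---is in fact more careful than the original. One small bookkeeping slip to tidy: to land at $e^{-t}$ rather than $e^{-a_1 t}$ you need $a_1\min(A,B)\ge t$, so the checks should read $a_1a^2t\ge t$ and $a_1at\ge t$ (which your stated conditions $a^2a_1\ge1$, $aa_1\ge1$ do give), and in each case the other branch of the $\min$ is also $\ge t/a_1$ by the case hypothesis, which is worth one line.
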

\begin{cor}
	\label{cor::cor norm of projection}
	Suppose $z\in \Re^n$ is a centered random vector with independent
    $\sigma^2$-subgaussian coordinates with unit variances,
    $\Ls$ is a random subspace of $\Re^n$ of codimension
    $k$, and $\Ls$ is independent of $z$. Then for some universal
    constant $a$ and any $t > 0$, with probability at least
    $1-3e^{-t}$,
	\begin{align*}
	\|z\|^2 &\leq n + a\sigma^2(t + \sqrt{nt}),\\
	\|\Pi_{\Ls} z\|^2 &\geq n - a\sigma^2(k + t + \sqrt{nt}),
	\end{align*}
	where $\Pi_{\Ls}$ is the orthogonal projection on $\Ls$.
\end{cor}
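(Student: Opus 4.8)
\textbf{Proof proposal for Corollary~\ref{cor::cor norm of projection}.}

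The plan is to derive both inequalities from Lemma~\ref{lemma::cor-Bernstein}, applied to the coordinates of $z$ (for the first bound) and to a diagonalized form of a projection matrix (for the second). For the upper bound on $\|z\|^2$: write $\|z\|^2 = \sum_{i=1}^n z_i^2$, so $\|z\|^2 - n = \sum_{i=1}^n (z_i^2 - 1)$. Each $z_i$ is $\sigma^2$-subgaussian with unit variance, so $z_i^2 - 1$ is centered and $\sigma'$-subexponential for $\sigma' = O(\sigma^2)$. Applying Lemma~\ref{lemma::cor-Bernstein} with the (finite) weight sequence $\lambda_1 = \cdots = \lambda_n = 1$, $\lambda_i = 0$ for $i > n$, we get with probability at least $1 - 2e^{-t}$ that $|\sum_i (z_i^2 - 1)| \le a\sigma'\max(t, \sqrt{nt})$, which (absorbing constants into $a$) gives $\|z\|^2 \le n + a\sigma^2(t + \sqrt{nt})$. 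This already costs $2e^{-t}$ of the $3e^{-t}$ budget.

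For the lower bound on $\|\Pi_{\Ls} z\|^2$: condition on $\Ls$ (legitimate since $\Ls \perp\!\!\!\perp z$). Let $P = \Pi_{\Ls}$, an orthogonal projection of rank $n - k$. Then $\|P z\|^2 = z^\top P z = \|z\|^2 - z^\top (I - P) z$, so it suffices to upper bound $z^\top Q z$ where $Q = I - P$ is the orthogonal projection onto $\Ls^\perp$, of rank $k$. Write $z^\top Q z = \tr(Q) + (z^\top Q z - \E[z^\top Q z])$; since $\E[z z^\top] = I_n$ (unit-variance independent coordinates), $\E[z^\top Q z] = \tr(Q) = k$. For the fluctuation term, diagonalize $Q = U\,\mathrm{diag}(1,\dots,1,0,\dots,0)\,U^\top$ with $k$ ones, and set $w = U^\top z$; then $z^\top Q z = \sum_{j=1}^k w_j^2$. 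The coordinates $w_j$ of $w$ are not independent in general, but each $w_j$ is still $\sigma^2$-subgaussian (a unit-norm linear combination of independent $\sigma^2$-subgaussians is $\sigma^2$-subgaussian), hence $w_j^2 - \E[w_j^2]$ is centered $O(\sigma^2)$-subexponential; however Lemma~\ref{lemma::cor-Bernstein} as stated requires \emph{independent} summands, so a direct application needs care.

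The cleanest route around this dependency issue is to not expand in the eigenbasis of $Q$ at all, but instead to bound $\|Qz\|^2$ via a different decomposition: since $Q$ is a fixed (conditionally on $\Ls$) projection of rank $k$, we have $\|Qz\|^2 = \|Q z\|^2$ and we can bound it by $\|Q z\| \le \|Q z_{(1:k)}\text{-type}\|$… — better, use the standard fact that for a fixed rank-$k$ subspace, $\E\|Qz\|^2 = k$ and the Hanson--Wright inequality (which \emph{is} a quadratic-form concentration not requiring independence of the transformed coordinates) gives, with probability at least $1 - e^{-t}$, $z^\top Q z \le k + a\sigma^2(t + \sqrt{kt}) \le k + a\sigma^2(t + \sqrt{nt})$. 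If the paper prefers to stay within Lemma~\ref{lemma::cor-Bernstein}, the alternative is to condition on $\Ls$, choose an orthonormal basis $q_1,\dots,q_k$ of $\Ls^\perp$, and apply a union bound / martingale argument over $\langle q_j, z\rangle^2$; this is the main technical obstacle, since it is precisely where independence fails. Either way, combining with the first bound, on the intersection event (probability $\ge 1 - 3e^{-t}$) we get $\|Pz\|^2 = \|z\|^2 - z^\top Q z \ge (n - a\sigma^2(t+\sqrt{nt})) - (k + a\sigma^2(t + \sqrt{nt})) \ge n - a\sigma^2(k + t + \sqrt{nt})$, as claimed, after renaming the constant $a$. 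I expect the dependency in the quadratic form $z^\top Q z$ to be the only nontrivial point; everything else is a routine application of the subexponential Bernstein bound with constant weight sequences and bookkeeping of the failure probabilities to fit the $3e^{-t}$ budget.
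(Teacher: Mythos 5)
Your proposal is correct and takes essentially the same route as the paper. The paper handles the quadratic form $z^\top \Pi_{\Ls^\perp} z$ exactly as your ``cleanest route'' suggests: it applies its Lemma~\ref{lemma::subgaussian-quadratic-form}, which is a one-sided Hanson--Wright--type concentration bound for $\bs{\eps}^\top M\bs{\eps}$ with $\eps$ having independent subgaussian coordinates and $M$ fixed (here $M = \Pi_{\Ls^\perp}$, with $\|M\|=1$ and $\tr(M)=\tr(M^2)=k$), so the dependency issue you flagged is resolved precisely by that lemma rather than by any detour through Lemma~\ref{lemma::cor-Bernstein} in the eigenbasis of $Q$.
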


\begin{proof}{\em (of Lemma~\ref{lemma:traceupper})}
    Fix $b$ to its value in Lemma~\ref{lemma::eigvals_of_truncated}.
    By Lemma~\ref{lemma::representation_through_z},
    \begin{align}
      \tr(C)
          &= \sum_{i} \lambda_i^2 z_i^\top A^{-2}z_i \notag\\
          &= \sum_{i = 1}^l\frac{\lambda_i^2 z_i^\top  A_{-i}^{-2}z_i}
            {(1 + \lambda_i z_i^\top A_{-i}^{-1}z_i)^2} +
            \sum_{i >l}\lambda_i^2 z_i^\top A^{-2}z_i.
                    \label{eqn:trCupperterms}
    \end{align}
    First, consider the sum up to $l$.
    If $r_k(\Sigma)\ge bn$,
    Lemma~\ref{lemma::eigvals_of_truncated} shows
    that with probability at least 
$1-2e^{-n/{c_1}}$,
    for all $i\le k$, $\mu_{n}(A_{-i})\ge \lambda_{k+1}r_k(\Sigma)/c_1$,
    and, for all $i$, $\mu_{k+1}(A_{-i})\le c_1\lambda_{k+1}r_k(\Sigma)$.
    The lower bounds on the $\mu_{n}(A_{-i})$'s imply that,
    for all $z \in \R^n$ and $1\le i \leq l$,
      \begin{align*}
        z^\top A_{-i}^{-2} z
          &\leq
          \frac{c_1^2\|z\|^2}{\left(\lambda_{k+1}r_k(\Sigma)\right)^2},
      \end{align*}
    and the upper bounds on the $\mu_{k+1}(A_{-i})$'s give
      \begin{align*}
        z^\top A_{-i}^{-1} z
          &\geq \left(\Pi_{\Ls_i} z\right)^\top
            A_{-i}^{-1} \Pi_{\Ls_i} z 
          \geq\frac{\|\Pi_{\Ls_i} z\|^2}{c_1\lambda_{k+1}r_k(\Sigma)}, &
      \end{align*}
    where 
    $\Ls_i$ is the span of the
    $n-k$ eigenvectors of $A_{-i}$ corresponding to its smallest $n-k$
    eigenvalues. So for $i\le l$,
      \begin{align}
        \frac{\lambda_i^2 z_i^\top  A_{-i}^{-2}z_i}{(1 + \lambda_i z_i^\top A_{-i}^{-1}z_i)^2}
          & \leq \frac{z_i^\top  A_{-i}^{-2}z_i}{(z_i^\top A_{-i}^{-1}z_i)^2}
          \leq c_1^4\frac{\|z_i\|^2}{\|\Pi_{\Ls_i}z_i\|^4}.
        \label{eq::bound_on_Cterm}
      \end{align}
    Next, we apply Corollary~\ref{cor::cor norm of projection} $l$ times, together with a union bound,
    to show that with probability at least $1-3e^{-t}$, for all $1\le i\le l$,
      \begin{align}
        \|z_i\|^2
          & \leq n + a\sigma_x^2(t + \ln k + \sqrt{n(t + \ln k)})\le c_2n, \label{eqn:zinequality1} \\
        \|\Pi_{\Ls_i}z_i\|^2
          & \geq n-a\sigma_x^2(k + t + \ln k + \sqrt{n(t + \ln k)}) \ge n/c_3,        \label{eqn:zinequality2}
      \end{align}
    provided that  $t < n/c_0$ and $c > c_0$ for some sufficiently
    large $c_0$ (note that $c_2$ and $c_3$ only depend on $c_0$, $a$
    and $\sigma_x$, and we can still take $c$ large enough in the end
    without changing $c_2$ and $c_3$).
    Combining~\eqref{eq::bound_on_Cterm},~\eqref{eqn:zinequality1},
    and~\eqref{eqn:zinequality2}, with probability at least
   $1-5e^{-n/c_0}$,
      \begin{align*}
        \sum_{i=1}^l\frac{\lambda_i^2 z_i^\top
            A_{-i}^{-2}z_i}{(1 + \lambda_i z_i^\top A_{-i}^{-1}z_i)^2}
          \leq c_4\frac{l}{n}.
      \end{align*}
    Second, consider the second sum in~\eqref{eqn:trCupperterms}.
    Lemma~\ref{lemma::eigvals_of_truncated} shows that, on the same high
    probability event that we considered in bounding the first half of the sum,
    $\mu_n(A)\ge\lambda_{k+1}r_k(\Sigma)/c_1$. Hence,
      \begin{align*}
        \sum_{i>l}\lambda_i^2 z_i^\top A^{-2}z_i
          &\le \frac{c_1^2\sum_{i>l}\lambda_i^2\|z_i\|^2}
            {\left(\lambda_{k+1}r_k(\Sigma)\right)^2}.
      \end{align*}
    Notice that $\sum_{i>l}\lambda_i^2\|z_i\|^2$ is a weighted sum of $\sigma_x^2$-subexponential random
    variables, with the weights given by the $\lambda_i^2$ in blocks of size $n$.
    Lemma~\ref{lemma::cor-Bernstein} implies that, with probability at least $1-2e^{-t}$,
      \begin{align*}
        \sum_{i>l}\lambda_i^2\|z_i\|^2
          & \le n\sum_{i>l}\lambda_i^2
            + a\sigma_x^2\max\left(\lambda_{l+1}^2t,
            \sqrt{tn\sum_{i>l}\lambda_i^4}\right) \\
          & \le n\sum_{i>l}\lambda_i^2
            + a\sigma_x^2\max\left(t\sum_{i>l}\lambda_i^2,
            \sqrt{tn}\sum_{i>l}\lambda_i^2\right) \\
          & \le c_5n\sum_{i>l}\lambda_i^2,
      \end{align*}
    because $t < n/c_0$. Combining the above gives
      \begin{align*}
        \sum_{i>l}\lambda_i^2 z_i^\top A^{-2}z_i
          &\le c_{6}n\frac{\sum_{i>l}\lambda_i^2}{\left(\lambda_{k+1}r_k(\Sigma)\right)^2}.
      \end{align*}
     Finally, putting both parts together and taking $c > \max\{c_0,
     c_4, c_6\}$ gives the lemma.
\end{proof}

\subsection*{Lower bound on the trace term}

We first give a bound on a single term in the expression for $\tr(C)$
in Lemma~\ref{lemma::representation_through_z}
that holds regardless of $r_k(\Sigma)$. The proof is in
Appendix~\ref{appendix:singletermlower}.

\begin{lemma}\label{lemma:singletermlower}
  There is a constant $c$ such that for any  $i\ge 1$ with $\lambda_i > 0$, and any $0\le k\le n/c$, 
 with probability  at least 
$1-5e^{-n/c}$,
    \begin{align*}
      \frac{\lambda_i^2 z_i^\top A_{-i}^{-2}z_i}
          {(1 + \lambda_i z_i^\top A_{-i}^{-1}z_i)^2}  \ge 
   \frac{1}{cn} \left(1+\frac{\sum_{j >k} \lambda_j
          + n\lambda_{k+1}}{n\lambda_i}\right)^{-2}.
    \end{align*}
\end{lemma}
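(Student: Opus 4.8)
The plan is to bound the numerator $\lambda_i^2 z_i^\top A_{-i}^{-2}z_i$ from below and the denominator $(1+\lambda_i z_i^\top A_{-i}^{-1}z_i)^2$ from above, using crucially that $z_i$ is independent of $A_{-i}$, and then to split into two regimes according to the size of $s:=\lambda_i z_i^\top A_{-i}^{-1}z_i$. Write $S:=\sum_{j>k}\lambda_j+n\lambda_{k+1}$, so that the claimed lower bound is $\frac{1}{cn}\bigl(1+\tfrac{S}{n\lambda_i}\bigr)^{-2}=\frac{1}{cn}\cdot\frac{(n\lambda_i)^2}{(n\lambda_i+S)^2}$.

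For the numerator, let $\Ls_i$ be the span of the $n-k$ eigenvectors of $A_{-i}$ with smallest eigenvalues. Since $\Ls_i$ is determined by $A_{-i}$ it is independent of $z_i$ and has codimension $k\le n/c$, and because $A_{-i}$ restricted to its invariant subspace $\Ls_i$ has operator norm $\mu_{k+1}(A_{-i})$, we get $z_i^\top A_{-i}^{-2}z_i\ge \|\Pi_{\Ls_i}z_i\|^2/\mu_{k+1}(A_{-i})^2$. Lemma~\ref{lemma::eigvals_of_truncated}(1) gives $\mu_{k+1}(A_{-i})\le c_1 S$ on an event of probability at least $1-2e^{-n/c_1}$ --- this part of that lemma holds for every $i$ and every $k\ge 0$ with no hypothesis on $r_k(\Sigma)$, which is exactly what the phrase ``regardless of $r_k(\Sigma)$'' in the statement needs --- and Corollary~\ref{cor::cor norm of projection} (applied to $z_i$ and $\Ls_i$ with $t$ a small constant times $n$) gives $\|\Pi_{\Ls_i}z_i\|^2\ge n/c_3$ and $\|z_i\|^2\le c_2 n$ with probability at least $1-3e^{-n/c}$, using $k\le n/c$. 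A union bound then puts us on an event of probability at least $1-5e^{-n/c}$ on which $\lambda_i^2 z_i^\top A_{-i}^{-2}z_i\ge \lambda_i^2 n/(c_3 c_1^2 S^2)$.

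Now the case split. If $s<1$ then $(1+s)^2<4$, so the quantity in question is at least $\tfrac14\lambda_i^2 z_i^\top A_{-i}^{-2}z_i\ge \lambda_i^2 n/(4c_3 c_1^2 S^2)$, which is at least $\frac{1}{cn}\cdot\frac{(n\lambda_i)^2}{(n\lambda_i+S)^2}$ once $c\ge 4c_3 c_1^2$, because $(n\lambda_i+S)^2\ge S^2$. If instead $s\ge 1$, I would avoid upper-bounding $z_i^\top A_{-i}^{-1}z_i$ (which is delicate, since $\|A_{-i}^{-1}\|=1/\mu_n(A_{-i})$ can be arbitrarily large when the tail eigenvalues of $\Sigma$ decay fast) and instead use Cauchy--Schwarz: $z_i^\top A_{-i}^{-1}z_i\le \|z_i\|\sqrt{z_i^\top A_{-i}^{-2}z_i}$, hence $\lambda_i^2 z_i^\top A_{-i}^{-2}z_i\ge s^2/\|z_i\|^2$, so the quantity is at least $\frac{1}{\|z_i\|^2}\bigl(\tfrac{s}{1+s}\bigr)^2\ge \frac{1}{4\|z_i\|^2}\ge \frac{1}{4c_2 n}$, which dominates $\frac{1}{cn}\bigl(1+\tfrac{S}{n\lambda_i}\bigr)^{-2}\le\frac{1}{cn}$ once $c\ge 4c_2$. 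Choosing $c=\max\{c_1,4c_2,4c_3 c_1^2\}$ (enlarging it as needed for the concentration lemmas, with $c_2,c_3$ depending only on $\sigma_x$ and the fixed constant in the choice of $t$, so that this enlargement is harmless) completes the argument.

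The one delicate point, and the whole reason for the case split, is that $z_i^\top A_{-i}^{-1}z_i$ cannot be controlled directly: neither its conditional mean $\tr(A_{-i}^{-1})$ nor $1/\mu_n(A_{-i})$ need be comparable to $n/S$. The Cauchy--Schwarz bound in the regime $s\ge 1$ turns the numerator's own magnitude against the denominator and sidesteps this; in the regime $s<1$ the denominator is trivially $O(1)$ and the projection bound on the numerator does the work. Everything else is the eigenvalue-concentration and projection machinery already developed for the trace upper bound (Lemma~\ref{lemma:traceupper}), so no new technical ingredients are required.
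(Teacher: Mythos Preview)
Your proof is correct and uses the same three ingredients as the paper's: the eigenvalue bound $\mu_{k+1}(A_{-i})\le c_1 S$ from Lemma~\ref{lemma::eigvals_of_truncated}, the projection concentration of Corollary~\ref{cor::cor norm of projection}, and the Cauchy--Schwarz inequality $(z_i^\top A_{-i}^{-1}z_i)^2\le\|z_i\|^2\,z_i^\top A_{-i}^{-2}z_i$. The only difference is organizational. You split into the regimes $s<1$ and $s\ge 1$; the paper instead first obtains a \emph{lower} bound $s\ge L:=\lambda_i n/(c_3 S)$ (via $z_i^\top A_{-i}^{-1}z_i\ge \|\Pi_{\Ls_i}z_i\|^2/\mu_{k+1}(A_{-i})$), then writes $1+s\le(1+1/L)\,s$ and divides, reducing the problem in one stroke to $\frac{z_i^\top A_{-i}^{-2}z_i}{(z_i^\top A_{-i}^{-1}z_i)^2}\ge 1/\|z_i\|^2$, which is exactly your Cauchy--Schwarz step. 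So your worry about needing an upper bound on $z_i^\top A_{-i}^{-1}z_i$ is well placed, but the paper sidesteps it differently: rather than a case split, it uses only the lower bound on $s$ to absorb the additive $1$ in the denominator. Your route and the paper's are equivalent in strength and use identical lemmas; the paper's factoring trick just avoids the two cases.
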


We can extend these bounds to a lower bound on $\tr(C)$
using the following lemma. The proof is in Appendix~\ref{appendix:sum_of_pos}.

\begin{lemma}\label{lemma::sum_of_pos}
  Suppose $n\leq \infty$ and $\{\eta_i\}_{i = 1}^n$ is a sequence
  of non-negative random variables, $\{t_i\}_{i=1}^n$ is a sequence
  of non-negative real numbers (at least one of which is strictly
  positive) such that for some $\delta \in (0,1)$ and any
  $i \leq n$, $\Pr(\eta_i > t_i) \geq 1 - \delta$. Then
    \[
      \Pr\left(\sum_{i=1}^n\eta_i \geq \frac12\sum_{i=1}^n t_i\right)
        \geq 1 - 2\delta.
    \]
\end{lemma}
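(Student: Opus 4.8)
The plan is to quantify, for each index $i$, how far $\eta_i$ can fall below $t_i$, and then apply Markov's inequality a single time to the total shortfall. First I would introduce the deficit $D_i:=\max\{0,\,t_i-\eta_i\}$. Since $\eta_i\ge 0$ and $t_i\ge 0$, we have $0\le D_i\le t_i$ surely, and $\{D_i>0\}=\{\eta_i<t_i\}\subseteq\{\eta_i\le t_i\}$, so the hypothesis gives $\Pr(D_i>0)\le\delta$. As $D_i$ vanishes off an event of probability at most $\delta$ and never exceeds $t_i$,
\[
  \E[D_i]=\E\big[D_i\,\ind[D_i>0]\big]\le t_i\,\Pr(D_i>0)\le\delta\,t_i.
\]
The second ingredient is the trivial pointwise bound $\eta_i\ge t_i-D_i$.

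Next I would establish the conclusion for finitely many terms and then pass to the limit. Fix a finite $N$ (take $N=n$ if $n<\infty$), and set $T_N:=\sum_{i\le N}t_i$ and $D^{(N)}:=\sum_{i\le N}D_i$, so that $\E[D^{(N)}]\le\delta T_N$ by linearity. If $T_N=0$ then $\sum_{i\le N}\eta_i\ge 0=\tfrac12 T_N$ trivially; otherwise Markov's inequality gives $\Pr(D^{(N)}\ge\tfrac12 T_N)\le 2\,\E[D^{(N)}]/T_N\le 2\delta$, and on the complementary event $\sum_{i\le N}\eta_i\ge T_N-D^{(N)}>\tfrac12 T_N$. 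Either way, the event $E_N:=\bigl\{\sum_{i\le N}\eta_i\ge\tfrac12\sum_{i\le N}t_i\bigr\}$ has probability at least $1-2\delta$. If $n<\infty$, taking $N=n$ is exactly the claim.

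For $n=\infty$ I would finish with a monotonicity/Fatou argument. Since $\sum_{i\le N}\eta_i$ and $\sum_{i\le N}t_i$ are nondecreasing in $N$, any outcome lying in $E_N$ for infinitely many $N$ satisfies $\sum_i\eta_i\ge\tfrac12\sum_i t_i$ (with both sides allowed to be $+\infty$); hence $\limsup_N E_N\subseteq\bigl\{\sum_i\eta_i\ge\tfrac12\sum_i t_i\bigr\}$, and the reverse Fatou inequality for events gives $\Pr\bigl(\sum_i\eta_i\ge\tfrac12\sum_i t_i\bigr)\ge\Pr(\limsup_N E_N)\ge\limsup_N\Pr(E_N)\ge 1-2\delta$. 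The whole argument is elementary; the only substantive line is the expectation bound $\E[D_i]\le\delta t_i$, and I anticipate no genuine obstacle beyond the mild bookkeeping needed to handle $n=\infty$ (and the degenerate possibility $\sum_i t_i=\infty$).
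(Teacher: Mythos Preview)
Your proof is correct, and the route you take is genuinely different from the paper's. The paper argues by contradiction: it sets $E=\{\sum_i\eta_i<\tfrac12\sum_i t_i\}$, writes $\Pr(E)=c\delta$, and then bounds the conditional expectation $\Pr(E)^{-1}\E[\ind_E\sum_i\eta_i]$ from above by $\tfrac12\sum_i t_i$ (by definition of $E$) and from below by $(1-c^{-1})\sum_i t_i$ (via the inclusion--exclusion estimate $\Pr(\{\eta_i\ge t_i\}\cap E)\ge(c-1)\delta$), concluding $c\le 2$. Your argument instead introduces the explicit shortfall $D_i=(t_i-\eta_i)_+$, gets the clean first-moment bound $\E[D_i]\le\delta t_i$, and applies Markov once to $\sum_i D_i$. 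Both are first-moment arguments, but yours is more direct: it avoids the auxiliary parameter $c$, the division by $\Pr(E)$, and it isolates exactly which property of each $\eta_i$ is being used. Your treatment of $n=\infty$ via $\limsup E_N$ and reverse Fatou is also sound and handles the case $\sum_i t_i=\infty$ without extra casework. The paper's version is slightly shorter in print, but yours exposes the mechanism more transparently.
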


These two lemmas imply the following lower bound.
\begin{lemma}\label{lemma:lowerbound}
  There are constants $c$ such that for any $0\le k\le n/c$ and any $ b > 1$
  with probability at least 
$1-10e^{-n/c}$,
  \begin{enumerate}
    \item If $r_k(\Sigma)<bn$, then $\tr(C)\ge\frac{k+1}{c b^2n}$.
    \item If $r_k(\Sigma)\ge bn$, then
        \[
          \tr(C)\ge \frac{1}{cb^2} \min_{l\le k}\left(
          \frac{l}{n} + \frac{b^2n \sum_{i>l} \lambda_i^2}
          {\left(\lambda_{k+1}r_k(\Sigma)\right)^2}\right).
        \]
  \end{enumerate}
  In particular, if all choices of $k\le n/c$ give $r_k(\Sigma)<bn$, then
  $r_{n/c}(\Sigma)<bn$ implies that with probability at least 
$1-10e^{-n/c}$,
  $\tr(C)= \Omega_{\sigma_x}( 1 )$.
\end{lemma}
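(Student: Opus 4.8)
The plan is to bound $\tr(C)$ from below one summand at a time using the representation of Lemma~\ref{lemma::representation_through_z},
\[
  \tr(C)=\sum_i\frac{\lambda_i^2 z_i^\top A_{-i}^{-2}z_i}{(1+\lambda_i z_i^\top A_{-i}^{-1}z_i)^2}=:\sum_i\eta_i ,
\]
and then to assemble the summands with Lemma~\ref{lemma::sum_of_pos}, which is designed exactly to turn per-index high-probability lower bounds into a high-probability lower bound on the sum, losing only a factor of two and, crucially, without a union bound over the (possibly infinitely many) indices $i$. Put
\[
  t_i=\frac{1}{cn}\left(1+\frac{\sum_{j>k}\lambda_j+n\lambda_{k+1}}{n\lambda_i}\right)^{-2}
\]
for each $i$ with $\lambda_i>0$, where $c$ is the constant of Lemma~\ref{lemma:singletermlower}. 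That lemma gives $\eta_i\ge t_i$ with probability at least $1-5e^{-n/c}$ for each such $i$, so Lemma~\ref{lemma::sum_of_pos} with $\delta=5e^{-n/c}$ yields, with probability at least $1-10e^{-n/c}$,
\[
  \tr(C)\ \ge\ \tfrac12\sum_i t_i\ =\ \frac{1}{2cn}\sum_i\left(1+\frac{S}{n\lambda_i}\right)^{-2},\qquad S:=\sum_{j>k}\lambda_j+n\lambda_{k+1}.
\]
From here on everything is a deterministic inequality.

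For the first claim, suppose $r_k(\Sigma)<bn$. Then $S=\lambda_{k+1}(r_k(\Sigma)+n)<\lambda_{k+1}(b+1)n\le 2b\lambda_{k+1}n$ since $b>1$, so for every $i\le k+1$ we get $S/(n\lambda_i)\le S/(n\lambda_{k+1})<2b$ and hence $(1+S/(n\lambda_i))^{-2}>(3b)^{-2}$; here $\lambda_1,\dots,\lambda_{k+1}$ are all positive because $k+1\le n<\rank(\Sigma)$ under Assumption~\ref{assumption:full_rank}. Discarding all but the first $k+1$ summands gives $\tr(C)\ge (k+1)/(18cb^2n)$, which is the stated bound after replacing $c$ by a larger constant.

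For the second claim, suppose $r_k(\Sigma)\ge bn$ and set $T:=\lambda_{k+1}r_k(\Sigma)=\sum_{j>k}\lambda_j$; then $n\lambda_{k+1}\le T/b\le T$, so $T\le S\le 2T$ and therefore $\sum_i(1+S/(n\lambda_i))^{-2}\ge\sum_i(1+2T/(n\lambda_i))^{-2}$. Let $l^*:=\max\{i:\lambda_i\ge 2T/n\}$, read as $0$ if the set is empty. Since $2T/n\ge 2b\lambda_{k+1}>\lambda_{k+1}$, no index exceeding $k$ lies in this set, so $l^*\le k$. Splitting the sum at $l^*$ and using $(1+x)^{-2}\ge\tfrac14$ for $0\le x\le1$ and $(1+x)^{-2}\ge\tfrac1{4x^2}$ for $x\ge1$,
\[
  \sum_i\left(1+\frac{2T}{n\lambda_i}\right)^{-2}\ \ge\ \frac{l^*}{4}+\frac{n^2}{16T^2}\sum_{i>l^*}\lambda_i^2,
\]
so $\tr(C)\ge\frac{1}{32cb^2}\bigl(\frac{l^*}{n}+\frac{b^2 n\sum_{i>l^*}\lambda_i^2}{T^2}\bigr)\ge\frac{1}{32cb^2}\min_{l\le k}\bigl(\frac{l}{n}+\frac{b^2 n\sum_{i>l}\lambda_i^2}{(\lambda_{k+1}r_k(\Sigma))^2}\bigr)$, again after enlarging $c$.

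The \emph{in particular} statement then follows by applying the first claim with $k=\lfloor n/c\rfloor$: the hypothesis forces $r_{\lfloor n/c\rfloor}(\Sigma)<bn$, so with probability at least $1-10e^{-n/c}$ we get $\tr(C)\ge (\lfloor n/c\rfloor+1)/(cb^2n)\ge 1/(c^2b^2)=\Omega_{\sigma_x}(1)$. The one genuinely non-trivial ingredient---summing infinitely many per-coordinate bounds without a union bound---is already packaged in Lemma~\ref{lemma::sum_of_pos}, so what remains is bookkeeping of constants; the only place that needs care is the choice of the split index $l^*$ in the second claim, which must be read off from the actual eigenvalue sequence rather than fixed in advance, and one must verify $l^*\le k$ (this is precisely where $r_k(\Sigma)\ge bn$ is used) so that the resulting term is a legitimate entry of the minimum.
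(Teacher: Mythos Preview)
Your proof is correct and follows essentially the same route as the paper: combine Lemma~\ref{lemma::representation_through_z}, Lemma~\ref{lemma:singletermlower}, and Lemma~\ref{lemma::sum_of_pos} to get the high-probability bound $\tr(C)\gtrsim \frac{1}{n}\sum_i(1+S/(n\lambda_i))^{-2}$, then do a deterministic case split on whether $r_k(\Sigma)\lessgtr bn$. The only difference is cosmetic bookkeeping in the deterministic part---the paper bounds $(1+x)^{-2}\gtrsim\min\{1,n^2\lambda_i^2/T^2,\lambda_i^2/\lambda_{k+1}^2\}$ and then drops a redundant term from the minimum, whereas you split directly at an explicitly defined index $l^*$ and check $l^*\le k$; both lead to the same conclusion.
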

\begin{proof} 
From Lemmas~\ref{lemma::representation_through_z}, \ref{lemma:singletermlower} and~\ref{lemma::sum_of_pos},
with probability at least 
$1-10e^{-n/c_1}$,
  \begin{align*}
    \tr(C)
      & \ge  \frac{1}{c_1n} \sum_i \left(1+\frac{\sum_{j >k} \lambda_j
        + n\lambda_{k+1}}{n\lambda_i}\right)^{-2} \\
      &\ge \frac{1}{c_2n} \sum_i \min\left\{
        1, \frac{n^2\lambda_i^2}{\left(\sum_{j >k} \lambda_j\right)^2},
        \frac{\lambda_i^2}{\lambda_{k+1}^2}\right\} \\
      &\ge \frac{1}{c_2b^2n}\sum_i \min\left\{
        1, \left(\frac{bn}{r_k(\Sigma)}\right)^2
        \frac{\lambda_i^2}{\lambda_{k+1}^2},
        \frac{\lambda_i^2}{\lambda_{k+1}^2}\right\}.
  \end{align*}
Now, if $r_k(\Sigma)<bn$, then the second term in the minimum
is always bigger than the third term, and in that case,
  \begin{align*}
    \tr(C)
      &\ge\frac{1}{c_2b^2n} \sum_i \min\left\{
        1, \frac{\lambda_i^2}{\lambda_{k+1}^2}\right\}
      \ge \frac{k+1}{c_2b^2n}.
  \end{align*}
On the other hand, if $r_k(\lambda)\ge bn$,
  \begin{align*}
    \tr(C)
      &\ge\frac{1}{c_2b^2} \sum_i \min\left\{
        \frac{1}{n},
        \frac{b^2n\lambda_i^2}
        {\left(\lambda_{k+1}r_k(\Sigma)\right)^2}\right\} \\
      &= \frac{1}{c_2b^2} \min_{l\le k}\left(
        \frac{l}{n} + \frac{b^2n \sum_{i>l} \lambda_i^2}
        {\left(\lambda_{k+1}r_k(\Sigma)\right)^2}\right),
  \end{align*}
where the equality follows from the fact that the $\lambda_i$s are
non-increasing.
\end{proof}

\subsection*{A simple choice of $l$}

Recall that $\sigma_x$ is a constant. 
If no $k\le n/c$ has $r_k(\Sigma)\ge bn$,
then Lemmas~\ref{lemma:bv} and \ref{lemma:lowerbound}
imply that the expected excess risk is $\Omega(\sigma^2)$,
which proves the first paragraph of Theorem~\ref{th::main} for large
$k^*$.
If some $k\le n/c$ does have $r_k(\Sigma)\ge bn$, then the upper and lower
bounds of Lemmas~\ref{lemma:traceupper} and~\ref{lemma:lowerbound} are
constant multiples of
  \[
    \min_{l\le k} \left(
      \frac{l}{n} + n\frac{\sum_{i>l}\lambda_i^2}
      {\left(\lambda_{k+1}r_k(\Sigma)\right)^2}\right).
  \]
It might seem surprising that any suitable choice of $k$ suffices
to give upper and lower bounds: what prevents one choice of $k$
from giving an upper bound that falls below the lower bound that
arises from another choice of $k$? However, the freedom to choose
$k$ is somewhat illusory: Lemma~\ref{lemma::eigvals_of_truncated}
shows that, for any qualifying value of $k$, the smallest eigenvalue
of $A$ is within a constant factor of $\lambda_{k+1}r_k(\Sigma)$.
Thus, any two choices of $k$ satisfying $k\le n/c$ and $r_k(\Sigma)\ge
bn$ must have values of $\lambda_{k+1}r_k(\Sigma)$ within constant
factors.  The smallest such $k$ simplifies the bound on $\tr(C)$, as
the following lemma shows. The proof is in Appendix~\ref{appendix:bestk}.

\begin{lemma}\label{lemma:bestk}
For any $b\ge 1$ and $k^* := \min\left\{k: r_k(\Sigma)\ge bn\right\}$,
if $k^* < \infty$, we have
  \begin{align*}
    \min_{l\le k^*} \left(
      \frac{l}{bn} + \frac{bn\sum_{i>l}\lambda_i^2}
      {\left(\lambda_{k^*+1}r_{k^*}(\Sigma)\right)^2}\right)
    = \frac{k^*}{bn} + \frac{bn\sum_{i>k^*}\lambda_i^2}
      {\left(\lambda_{k^*+1}r_{k^*}(\Sigma)\right)^2}
    = \frac{k^*}{bn} + \frac{bn}{R_{k^*}(\Sigma)}.
  \end{align*}
\end{lemma}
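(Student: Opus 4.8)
The plan is to prove Lemma~\ref{lemma:bestk} by showing that the map $l\mapsto \frac{l}{bn} + \frac{bn\sum_{i>l}\lambda_i^2}{(\lambda_{k^*+1}r_{k^*}(\Sigma))^2}$ is non-increasing on $\{0,1,\dots,k^*\}$, so that its minimum over $l\le k^*$ is attained at $l=k^*$, and then to simplify the resulting expression. Write $D = (\lambda_{k^*+1}r_{k^*}(\Sigma))^2$ for the (fixed, $l$-independent) denominator. Passing from $l$ to $l+1$ changes the first term by $+\frac{1}{bn}$ and the second by $-\frac{bn\lambda_{l+1}^2}{D}$. So the decrement from $l$ to $l+1$ is non-negative precisely when $\frac{bn\lambda_{l+1}^2}{D}\ge\frac{1}{bn}$, i.e.\ $b^2n^2\lambda_{l+1}^2\ge D = \lambda_{k^*+1}^2 r_{k^*}^2(\Sigma)$. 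Thus the first task is: for every $l$ with $0\le l\le k^*-1$, show $bn\lambda_{l+1}\ge \lambda_{k^*+1} r_{k^*}(\Sigma)$.

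The key point for that inequality is the \emph{minimality} of $k^*$: for every $l<k^*$ we have $r_l(\Sigma)<bn$, i.e.\ $\sum_{i>l}\lambda_i < bn\,\lambda_{l+1}$. Since the $\lambda_i$ are non-increasing and $l+1\le k^*$, we have $\lambda_{l+1}\ge \lambda_{k^*}\ge\lambda_{k^*+1}$, and moreover $\sum_{i>l}\lambda_i\ge \sum_{i>k^*}\lambda_i = \lambda_{k^*+1} r_{k^*}(\Sigma)$ (for $l\le k^*$, the tail sum over $i>l$ contains the tail sum over $i>k^*$). Combining, $\lambda_{k^*+1} r_{k^*}(\Sigma)\le \sum_{i>l}\lambda_i < bn\,\lambda_{l+1}$, which is exactly the bound needed. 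Hence $b^2n^2\lambda_{l+1}^2 > \lambda_{k^*+1}^2 r_{k^*}^2(\Sigma) = D$ for all $0\le l\le k^*-1$, so each step from $l$ to $l+1$ strictly decreases the objective, and the minimum over $l\le k^*$ is attained at $l=k^*$. This gives the first equality.

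For the second equality, simply substitute $l=k^*$: the objective becomes $\frac{k^*}{bn} + \frac{bn\sum_{i>k^*}\lambda_i^2}{(\lambda_{k^*+1}r_{k^*}(\Sigma))^2}$. Now use $\lambda_{k^*+1}r_{k^*}(\Sigma) = \sum_{i>k^*}\lambda_i$ by the definition of $r_{k^*}$ in Definition~\ref{def:ranks}, so that $(\lambda_{k^*+1}r_{k^*}(\Sigma))^2 = (\sum_{i>k^*}\lambda_i)^2$, and therefore $\frac{\sum_{i>k^*}\lambda_i^2}{(\lambda_{k^*+1}r_{k^*}(\Sigma))^2} = \frac{\sum_{i>k^*}\lambda_i^2}{(\sum_{i>k^*}\lambda_i)^2} = \frac{1}{R_{k^*}(\Sigma)}$ by the definition of $R_{k^*}$. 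This yields $\frac{k^*}{bn} + \frac{bn}{R_{k^*}(\Sigma)}$, completing the proof.

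I expect the only mild subtlety — not really an obstacle — to be bookkeeping at the boundary cases: checking that $r_{k^*}(\Sigma)>0$ and $\lambda_{k^*+1}>0$ so that all divisions are legitimate (guaranteed since $\lambda_n>0$ under Assumption~\ref{assumption:full_rank}, and $k^*<\infty$ is assumed), and handling $l=0$ (where $\sum_{i>0}\lambda_i = \sum_i\lambda_i$ and the tail-sum comparison is trivial). Everything else is the one-line monotonicity argument above plus unwinding the definitions of $r_k$ and $R_k$.
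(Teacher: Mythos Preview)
Your proposal is correct and follows essentially the same approach as the paper's proof. Both arguments reduce the first equality to showing that $bn\lambda_i > \lambda_{k^*+1}r_{k^*}(\Sigma)$ for every $i\le k^*$ (equivalently, that each step $l\to l+1$ decreases the objective), and both verify the second equality by unwinding the definitions of $r_{k^*}$ and $R_{k^*}$. The only cosmetic difference is in how that key inequality is checked: the paper uses minimality only at $k^*-1$ (rewriting $r_{k^*}=\frac{\lambda_{k^*}}{\lambda_{k^*+1}}(r_{k^*-1}-1)<\frac{\lambda_{k^*}}{\lambda_{k^*+1}}(bn-1)$ and then invoking $\lambda_i\ge\lambda_{k^*}$), whereas you use minimality at every $l<k^*$ directly via $\lambda_{k^*+1}r_{k^*}(\Sigma)=\sum_{i>k^*}\lambda_i\le\sum_{i>l}\lambda_i<bn\lambda_{l+1}$. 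Your route is slightly more direct, but the content is the same.
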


Finally, we can finish the proof of Theorem~\ref{th::main}. 
Set $b$ in Lemma~\ref{lemma:lowerbound} and Theorem~\ref{th::main} to the constant $b$ from  Lemma~\ref{lemma:traceupper}. Take $c_1$ to be the maximum of the constants $c$ from  Lemmas~\ref{lemma:lowerbound} and~\ref{lemma:traceupper}.

By Lemma~\ref{lemma:lowerbound}, if $k^* \geq n/c_1$, then
w.h.p.~$\tr(C)\geq 1/c_2$.
However, by the second part of Lemma~\ref{lemma:lowerbound} and
by Lemma~\ref{lemma:traceupper}, if $k^* < n/c_1$, then w.h.p.~$\tr(C)$
is within a constant factor of $\min_{l\le k^*} \left(
      \frac{l}{n} + n\frac{\sum_{i>l}\lambda_i^2}
      {\left(\lambda_{k^*+1}r_{k^*}(\Sigma)\right)^2}\right)$,
which, by Lemma~\ref{lemma:bestk}, is 
within a constant factor of $\frac{k^*}{n} + \frac{n}{R_{k^*}(\Sigma)}.$
Taking $c$ sufficiently large, and combining
these results with Lemma~\ref{lemma:bv} and with the upper bound on the
term ${\theta^\ast}^\top B \theta^\ast$ in Appendix~\ref{a:upperB}
completes the proof of the first paragraph of Theorem~\ref{th::main}.

The proof of the second paragraph is in Appendix~\ref{a:packing}.

\section{Conclusions and Further Work}\label{section:further}

Our results characterize when the phenomenon of benign overfitting
occurs in high dimensional linear regression, with gaussian data
and more generally. We
give finite sample excess risk bounds that reveal the covariance
structure that ensures that the minimum norm interpolating prediction
rule has near-optimal prediction accuracy. The characterization
depends on two notions of the effective rank of the data covariance
operator. It shows that overparameterization, that is, the existence
of many low-variance and hence unimportant directions in parameter
space, is essential for benign overfitting, and that
data that lies in a large but finite dimensional space exhibits the
benign overfitting phenomenon with a much wider range of covariance
properties than data that lies in an infinite dimensional space.

There are several natural future directions. Our main theorem
requires the conditional expectation $\Expect[y|x]$ to be a linear
function of $x$, and it is important to understand whether the results
are also true in the misspecified setting, where this assumption is
not true. Our main result also assumes
that the covariates are distributed as a linear function of a vector
of independent random variables. We would like to understand the
extent to which this assumption can be relaxed, since it rules out
some important examples, such as infinite-dimensional reproducing
kernel Hilbert spaces with continuous kernels defined on
finite-dimensional spaces.
We would also like to
understand how our results extend to other loss functions besides
squared error and what we can say about overfitting estimators
beyond the minimum norm interpolating estimator. The most interesting
future direction is understanding how these ideas could apply to
nonlinearly parameterized function classes such as neural networks,
the methodology that uncovered the phenomenon of benign overfitting.

\section*{Acknowledgements}

We gratefully acknowledge the support of the NSF through grant
IIS-1619362 and of Google through a Google Research Award.  Part of
this work was done as part the Fall 2018 program on Foundations of
Data Science at the Simons Institute for the Theory of Computing.
G\'abor Lugosi was supported by the Spanish Ministry of Economy and
Competitiveness, Grant MTM2015-67304-P and FEDER, EU;
``High-dimensional problems in structured probabilistic models -
Ayudas Fundaci\'on BBVA a Equipos de Investigaci\'on Cientifica
2017''; and Google Focused Award ``Algorithms and Learning for AI''.

\bibliographystyle{plain}
\bibliography{bibliography}

\begin{thebibliography}{10}

\bibitem{adhlw-fgaogotlnn-19}
Sanjeev Arora, Simon~S. Du, Wei Hu, Zhiyuan Li, and Ruosong Wang.
\newblock Fine-grained analysis of optimization and generalization for
  overparameterized two-layer neural networks.
\newblock Technical Report 1901.08584 [cs.LG], arXiv, 2019.

\bibitem{b-bcdcnn-17}
Francis Bach.
\newblock Breaking the curse of dimensionality with convex neural networks.
\newblock {\em Journal of Machine Learning Research}, 18(19):1--53, 2017.

\bibitem{b-scpcnn-98}
P.~L. Bartlett.
\newblock The sample complexity of pattern classification with neural networks:
  the size of the weights is more important than the size of the network.
\newblock {\em IEEE Transactions on Information Theory}, 44(2):525--536, 1998.

\bibitem{bm-rgcrbsr-02}
P.~L. Bartlett and S.~Mendelson.
\newblock Rademacher and {G}aussian complexities: Risk bounds and structural
  results.
\newblock {\em Journal of Machine Learning Research}, 3:463--482, 2002.

\bibitem{snmbnn-manng-17}
Peter Bartlett, Dylan Foster, and Matus Telgarsky.
\newblock Spectrally-normalized margin bounds for neural networks.
\newblock In I.~Guyon, U.~V. Luxburg, S.~Bengio, H.~Wallach, R.~Fergus,
  S.~Vishwanathan, and R.~Garnett, editors, {\em Advances in Neural Information
  Processing Systems 30}, pages 6240--6249. Curran Associates, Inc., 2017.

\bibitem{bartlett1992learning}
Peter~L Bartlett.
\newblock Learning with a slowly changing distribution.
\newblock In {\em Proceedings of the Fifth Annual Workshop on Computational
  Learning Theory}, pages 243--252. ACM, 1992.

\bibitem{bllt-apincslt-19}
Peter~L. Bartlett.
\newblock Accurate prediction from interpolation: A new challenge for
  statistical learning theory.
\newblock Presentation at the National Academy of Sciences workshop, {\em The
  Science of Deep Learning,} March 14, 2019. https://youtu.be/1y2sB38T6FU,
  2019.

\bibitem{bartlett1996fat}
Peter~L Bartlett, Philip~M Long, and Robert~C Williamson.
\newblock Fat-shattering and the learnability of real-valued functions.
\newblock {\em Journal of Computer and System Sciences}, 52(3):434--452, 1996.

\bibitem{bhm-opfrbcrri-18}
M.~Belkin, D.~Hsu, and P.~Mitra.
\newblock Overfitting or perfect fitting? {R}isk bounds for classification and
  regression rules that interpolate.
\newblock In {\em Proceedings of NIPS 2018}, 2018.

\bibitem{brt-ddicso-18}
M.~Belkin, A.~Rakhlin, and A.~B. Tsybakov.
\newblock Does data interpolation contradict statistical optimality?
\newblock Technical Report 1806.09471, arXiv, 2018.

\bibitem{DBLP:conf/colt/Belkin18}
Mikhail Belkin.
\newblock Approximation beats concentration? an approximation view on inference
  with smooth radial kernels.
\newblock In {\em COLT}, pages 1348--1361, 2018.

\bibitem{bhmm-rmmlbvt-18}
Mikhail Belkin, Daniel Hsu, Siyuan Ma, and Soumik Mandal.
\newblock Reconciling modern machine learning and the bias-variance trade-off.
\newblock Technical Report 1812.11118, arXiv, 2018.

\bibitem{bhx-tmddwf-19}
Mikhail Belkin, Daniel Hsu, and Ji~Xu.
\newblock Two models of double descent for weak features.
\newblock Technical Report 1903.07571 [cs.LG], arXiv, 2019.

\bibitem{pmlr-v80-belkin18a}
Mikhail Belkin, Siyuan Ma, and Soumik Mandal.
\newblock To understand deep learning we need to understand kernel learning.
\newblock In {\em ICML}, pages 541--549, 2018.

\bibitem{benedek1991learnability}
Gyora~M Benedek and Alon Itai.
\newblock Learnability with respect to fixed distributions.
\newblock {\em Theoretical Computer Science}, 86(2):377--389, 1991.

\bibitem{blvdm-cnn-06}
Yoshua Bengio, Nicolas~Le Roux, Pascal Vincent, Olivier Delalleau, and Patrice
  Marcotte.
\newblock Convex neural networks.
\newblock In Y.~Weiss, B.~Sch\"{o}lkopf, and J.~C. Platt, editors, {\em
  Advances in Neural Information Processing Systems 18}, pages 123--130. MIT
  Press, 2006.

\bibitem{cao2019towards}
Yuan Cao, Zhiying Fang, Yue Wu, Ding-Xuan Zhou, and Quanquan Gu.
\newblock Towards understanding the spectral bias of deep learning.
\newblock Technical Report arXiv:1912.01198, arXiv, 2019.

\bibitem{dw-np-63}
C.~A. Desoer and B.~H. Whalen.
\newblock A note on pseudoinverses.
\newblock {\em Journal of the Society of Industrial and Applied Mathematics},
  11(1):442--446, 1963.

\bibitem{dgk-hkre-98}
L.~Devroye, L.~{Gy\"orfi}, and A.~Krzy\.{z}ak.
\newblock The {H}ilbert kernel regression estimate.
\newblock {\em Journal of Multivariate Analysis}, 65(2):209--227, 1998.

\bibitem{dllwz-gdfgmdnn-18}
Simon~S. Du, Jason~D. Lee, Haochuan Li, Liwei Wang, and Xiyu Zhai.
\newblock Gradient descent finds global minima of deep neural networks.
\newblock Technical Report 1811.03804 [cs.LG], arXiv, 2018.

\bibitem{dpzs-gdpoopnn-18}
Simon~S. Du, Barnab\'as Pocz\'os, Xiyu Zhai, and Aarti Singh.
\newblock Gradient descent provably optimizes over-parameterized neural
  networks.
\newblock Technical Report 1810.02054 [cs.LG], arXiv, 2018.

\bibitem{grs-siscnn-18}
Noah Golowich, Alexander Rakhlin, and Ohad Shamir.
\newblock Size-independent sample complexity of neural networks.
\newblock In S\'ebastien Bubeck, Vianney Perchet, and Philippe Rigollet,
  editors, {\em Proceedings of the 31st Conference On Learning Theory},
  volume~75 of {\em Proceedings of Machine Learning Research}, pages 297--299.
  PMLR, 06--09 Jul 2018.

\bibitem{gvl-mc-13}
Gene~H. Golub and Charles~F. Van~Loan.
\newblock {\em Matrix Computations}.
\newblock The Johns Hopkins University Press, 1996.

\bibitem{hmrt-shdrlsi-19}
Trevor Hastie, Andrea Montanari, Saharon Rosset, and Ryan~J. Tibshirani.
\newblock Surprises in high-dimensional ridgeless least squares interpolation.
\newblock Technical Report 1903.08560 [math.ST], arXiv, 2019.

\bibitem{htf-esl-01}
Trevor Hastie, Robert Tibshirani, and Jerome~H. Friedman.
\newblock {\em Elements of Statistical Learning}.
\newblock Springer, 2001.

\bibitem{jgh-ntkcgnn-18}
Arthur Jacot, Franck Gabriel, and Cl\'{e}ment Hongler.
\newblock Neural tangent kernel: Convergence and generalization in neural
  networks.
\newblock In {\em 32nd Conference on Neural Information Processing Systems
  (NeurIPS 2018)}, 2018.

\bibitem{koltchinskii2017}
Vladimir Koltchinskii and Karim Lounici.
\newblock Concentration inequalities and moment bounds for sample covariance
  operators.
\newblock {\em Bernoulli}, 23(1):110--133, February 2017.

\bibitem{lbw-ealnnbf-96}
W.~S. Lee, P.~L. Bartlett, and R.~C. Williamson.
\newblock Efficient agnostic learning of neural networks with bounded fan-in.
\newblock {\em IEEE Transactions on Information Theory}, 42(6):2118--2132,
  1996.

\bibitem{ll-lonnsgdsd-18}
Yuanzhi Li and Yingyu Liang.
\newblock Learning overparameterized neural networks via stochastic gradient
  descent on structured data.
\newblock Technical Report 1808.01204 [cs.LG], arXiv, 2018.

\bibitem{lr-jikrrcg-18}
Tengyuan Liang and Alexander Rakhlin.
\newblock Just interpolate: kernel ``ridgeless'' regression can generalize.
\newblock Technical Report 1808.00387, arXiv, 2018.
\newblock To appear in Annals of Statistics.

\bibitem{mvs-hindr-19}
Vidya Muthukumar, Kailas Vodrahalli, and Anant Sahai.
\newblock Harmless interpolation of noisy data in regression.
\newblock Technical Report 1903.09139 [cs.LG], arXiv, 2019.

\bibitem{nts-nbccnn-15}
Behnam Neyshabur, Ryota Tomioka, and Nathan Srebro.
\newblock Norm-based capacity control in neural networks.
\newblock In Peter Grünwald, Elad Hazan, and Satyen Kale, editors, {\em
  Proceedings of The 28th Conference on Learning Theory}, volume~40 of {\em
  Proceedings of Machine Learning Research}, pages 1376--1401, Paris, France,
  03--06 Jul 2015. PMLR.

\bibitem{Page2017}
Stephen Page and Steffen Grünewälder.
\newblock Ivanov-regularised least-squares estimators over large {RKHS}s and
  their interpolation spaces.
\newblock Technical Report arXiv:1706.03678 [math.ST], arXiv, June 2017.

\bibitem{rv-sflmagfa-07}
Mark Rudelson and Roman Vershynin.
\newblock Sampling from large matrices: an approach through geometric
  functional analysis.
\newblock {\em Journal of the ACM}, 54(4):21, 2007.

\bibitem{rv-hwisgc-13}
Mark Rudelson and Roman Vershynin.
\newblock Hanson-{W}right inequality and sub-gaussian concentration.
\newblock {\em Electronic Communications in Probability}, 18(82):1--9, 2013.

\bibitem{v-inaarm-12}
Roman Vershynin.
\newblock Introduction to the non-asymptotic analysis of random matrices.
\newblock In Yonina~C. Eldar and Gitta Kutyniok, editors, {\em Compressed
  Sensing: Theory and Applications}, page 210–268. Cambridge University
  Press, 2012.

\bibitem{vershynin_2018}
Roman Vershynin.
\newblock {\em High-Dimensional Probability: An Introduction with Applications
  in Data Science}.
\newblock Cambridge Series in Statistical and Probabilistic Mathematics.
  Cambridge University Press, 2018.

\bibitem{xls-dnnlttf-16}
Bo~Xie, Yingyu Liang, and Le~Song.
\newblock Diverse neural network learns true target functions.
\newblock Technical Report 1611.03131 [cs.LG], arXiv, 2016.

\bibitem{zbhrv-udlrrg-17}
Chiyuan Zhang, Samy Bengio, Moritz Hardt, Benjamin Recht, and Oriol Vinyals.
\newblock Understanding deep learning requires rethinking generalization.
\newblock In {\em ICLR}, 2017.

\bibitem{zou2018stochastic}
Difan Zou, Yuan Cao, Dongruo Zhou, and Quanquan Gu.
\newblock Stochastic gradient descent optimizes over-parameterized deep relu
  networks.
\newblock Technical Report arXiv:1811.08888, arXiv, 2018.

\end{thebibliography}

\appendix

\section{Proof of Lemma~\ref{lemma:bv}}\label{appendix:bias}

We first give the decomposition of the excess risk.

\begin{lemma}
\label{l:bias.variance}
  The excess risk of the minimum norm estimator satisfies
    \begin{align*}
      R(\hat\theta)
        & = \E_{x} \left(x^\top\left(\theta^*-\hat\theta\right)\right)^2
        \leq 2{\theta^\ast}^\top B \theta^\ast +2\bs{\eps}^\top
        C\bs{\eps},
    \end{align*}
  and
    \begin{align*}
      \Expect_{x,\bs{\eps}} R(\hat\theta)
        & \geq {\theta^\ast}^\top B \theta^\ast + \sigma^2 \tr(C),
    \end{align*}
  where
    \begin{align*}
      B & = \left(I - X^\top \left(X X^\top \right)^{-1}X\right)
          \Sigma\left(I - X^\top \left(X X^\top \right)^{-1}X\right), \\
      C & = \left(X X^\top \right)^{-1}X\Sigma X^\top \left(X X^\top \right)^{-1}.
    \end{align*}
\end{lemma}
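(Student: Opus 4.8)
The plan is to reduce $R(\hat\theta)$ to a quadratic form in $\theta^\ast-\hat\theta$, and then to split that vector into a ``bias'' part coming from $\theta^\ast$ and a ``variance'' part coming from $\bs{\eps}$. First, for any fixed vector $\theta$ one has the standard identity $R(\theta)=(\theta^\ast-\theta)^\top\Sigma(\theta^\ast-\theta)$: writing $y-x^\top\theta=(y-x^\top\theta^\ast)+x^\top(\theta^\ast-\theta)$ and expanding the square, the cross term equals $2(\theta^\ast-\theta)^\top\E[x(y-x^\top\theta^\ast)]$, which vanishes by the first-order optimality of $\theta^\ast$ as the minimizer of $\theta\mapsto\E(y-x^\top\theta)^2$. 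Since $\Expect_{x,y}$ in the definition of $R$ is conditional on $\hat\theta$ (so $\hat\theta$ may be treated as a fixed vector), the same computation gives
\[
  R(\hat\theta)=(\theta^\ast-\hat\theta)^\top\Sigma(\theta^\ast-\hat\theta)=\E_x[(x^\top(\theta^\ast-\hat\theta))^2],
\]
which is the first claimed equality. Note that $XX^\top$ is invertible by Assumption~\ref{assumption:full_rank}, so $\hat\theta=X^\top(XX^\top)^{-1}\bs{y}$ and every inverse below is well defined.

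Next I would substitute $\bs{y}=X\theta^\ast+\bs{\eps}$ and introduce the orthogonal projection $P:=X^\top(XX^\top)^{-1}X$ onto $\mathrm{span}(x_1,\dots,x_n)$ (so $P=P^\top=P^2$) and the operator $M:=X^\top(XX^\top)^{-1}$. This gives $\theta^\ast-\hat\theta=(I-P)\theta^\ast-M\bs{\eps}$, hence $R(\hat\theta)=\E_x[(x^\top(I-P)\theta^\ast-x^\top M\bs{\eps})^2]$. Expanding the square, using $\E_x[xx^\top]=\Sigma$ and the symmetry of $I-P$, one obtains the exact identity
\[
  R(\hat\theta)={\theta^\ast}^\top B\,\theta^\ast-2\,{\theta^\ast}^\top(I-P)\Sigma M\bs{\eps}+\bs{\eps}^\top C\,\bs{\eps},
\]
with $B=(I-P)\Sigma(I-P)$ and $C=M^\top\Sigma M=(XX^\top)^{-1}X\Sigma X^\top(XX^\top)^{-1}$, exactly as stated. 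For the upper bound I would discard the cross term using $(a-b)^2\le 2a^2+2b^2$ pointwise in $x$ before integrating, which yields $R(\hat\theta)\le 2{\theta^\ast}^\top B\,\theta^\ast+2\bs{\eps}^\top C\,\bs{\eps}$ for every realization of the training data for which $XX^\top$ is invertible.

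For the lower bound I would instead keep the exact identity and take the expectation over $\bs{\eps}$ conditionally on $X$. The two facts needed are $\E[\bs{\eps}\mid X]=0$ and that $\E[\bs{\eps}\bs{\eps}^\top\mid X]$ is diagonal with diagonal entries $\Expect[\eps_i^2\mid x_i]\ge\sigma^2$: the mean-zero property follows from $\E[y\mid x]=x^\top\theta^\ast$, the diagonal structure from the conditional independence of the coordinates of $\bs{\eps}$ given $X$ (the sample is i.i.d.\ and $\eps_i$ depends only on $(x_i,y_i)$), and the lower bound from $\Expect[(y-x^\top\theta^\ast)^2\mid x]\ge\sigma^2$. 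The cross term then has conditional expectation $0$, while $C$ is positive semidefinite (so $C_{ii}=(Me_i)^\top\Sigma(Me_i)\ge 0$), whence
\[
  \Expect_{\bs{\eps}}[\bs{\eps}^\top C\,\bs{\eps}]=\tr(C\,\Expect[\bs{\eps}\bs{\eps}^\top\mid X])=\sum_i C_{ii}\,\Expect[\eps_i^2\mid x_i]\ge\sigma^2\tr(C),
\]
giving $\Expect_{\bs{\eps}}R(\hat\theta)\ge{\theta^\ast}^\top B\,\theta^\ast+\sigma^2\tr(C)$. The argument is essentially routine; the only things to watch are the conditioning bookkeeping (keeping $X$, and therefore $B$ and $C$, fixed while integrating out only the test point $x$ and the noise $\bs{\eps}$) and the observation that the cross term vanishes only after taking the $\bs{\eps}$-expectation — which is exactly why the upper bound carries the factor $2$ but the lower bound does not.
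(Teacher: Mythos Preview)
Your proposal is correct and follows essentially the same route as the paper: reduce $R(\hat\theta)$ to the quadratic form $(\theta^*-\hat\theta)^\top\Sigma(\theta^*-\hat\theta)$, split $\theta^*-\hat\theta=(I-P)\theta^*-M\bs{\eps}$, use $(a-b)^2\le 2a^2+2b^2$ for the upper bound, and for the lower bound take the $\bs{\eps}$-expectation so the cross term vanishes and $\tr(C\,\Expect[\bs{\eps}\bs{\eps}^\top\mid X])\ge\sigma^2\tr(C)$. The only cosmetic differences are that the paper invokes $\E[y\mid x]=x^\top\theta^*$ directly (rather than first-order optimality) to kill the first cross term, and handles the trace inequality in one line without spelling out the diagonal structure of $\Expect[\bs{\eps}\bs{\eps}^\top\mid X]$; your more explicit treatment of that step is fine.
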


\begin{proof}
  Since $\eps=y-x^\top\theta^*$ has mean zero conditionally on $x$,
    \begin{align*}
      R(\hat\theta)
        & = \Expect_{x,y}\left(y-x^\top\hat\theta\right)^2
          - \Expect\left(y-x^\top\theta^*\right)^2 \\
        & = \Expect_{x,y}\left(y-x^\top\theta^*
          + x^\top\left(\theta^*-\hat\theta\right)\right)^2
          - \Expect\left(y-x^\top\theta^*\right)^2 \\
        & = \E_{x} \left(x^\top\left(\theta^*-\hat\theta\right)\right)^2.
    \end{align*}
  Using~\eqref{equation:hattheta}, the definition of $\Sigma$, and the
  fact that $\bs{y}=X\theta^*+\bs{\eps}$,
    \begin{align*}
      R(\hat\theta)
        & = \E_{x} \left(x^\top\left(I - X^\top \left(X X^\top \right)^{-1}
          X\right)\theta^\ast - x^\top X^\top \left(X X^\top \right)^{-1}
          \bs{\eps}\right)^2 \\
        &\leq 2 \E_{x} \left(x^\top\left(I - X^\top \left(X X^\top \right)^{-1}
          X\right)\theta^\ast\right)^2 
          + 2 \E_{x} \left(x^\top X^\top \left(X X^\top
          \right)^{-1}\bs{\eps}\right)^2\\
        & = 2{\theta^\ast}^\top \left(I - X^\top \left(X X^\top \right)^{-1}X\right)
          \Sigma\left(I - X^\top \left(X X^\top \right)^{-1}X\right) \theta^\ast \\
       & \hspace{0.5in}
          +2\bs{\eps}^\top \left(X X^\top \right)^{-1}X\Sigma X^\top
          \left(X X^\top \right)^{-1}\bs{\eps} \\
        & = 2{\theta^\ast}^\top B \theta^\ast +2\bs{\eps}^\top
        C\bs{\eps}.
  \end{align*}
Also, since $\bs{\eps}$ has zero mean conditionally on $X$, and is independent of $x$, we have
    \begin{align*}
      \Expect_{x,\bs{\eps}} R(\hat\theta)
        & = \E_{x,\bs{\eps}}\left[
          \left(x^\top\left(I - X^\top \left(X X^\top \right)^{-1}
          X\right)\theta^\ast\right)^2 
          + \left(x^\top X^\top \left(X X^\top \right)^{-1}\bs{\eps}\right)^2\right]\\
        & = {\theta^\ast}^\top \left(I - X^\top \left(X X^\top \right)^{-1}X\right)
          \Sigma\left(I - X^\top \left(X X^\top \right)^{-1}X\right) \theta^\ast \\
       & \hspace{0.5in}
          +\tr\left(\left(X X^\top \right)^{-1}X\Sigma X^\top \left(X X^\top \right)^{-1}
          \Expect\left[\bs{\eps}\bs{\eps}^\top\middle|X\right]\right) \\
        & \geq {\theta^\ast}^\top B \theta^\ast + \sigma^2 \tr\left(C\right).
    \end{align*}
\end{proof}

The following lemma shows that we can obtain a high-probability upper
bound on the term $\bs{\eps}^\top C\bs{\eps}$ in terms
of the trace of $C$. It is Lemma~36 in~\cite{Page2017}.
\begin{lemma}
	\label{lemma::subgaussian-quadratic-form}
  Consider random variables $\eps_1,\ldots,\eps_n$, conditionally
  independent given $X$ and conditionally $\sigma^2$-subgaussian, that
  is, for all $\lambda\in\Re$,
    \[
      \E[\exp(\lambda\eps_i)|X] \leq \exp(\sigma^2\lambda^2/2).
    \]
  Suppose that, given $X$, $M\in\Re^{n\times n}$ is a.s.~positive
  semidefinite. Then a.s.~on $X$, with conditional probability at
  least $1-e^{-t}$,
    \[
      \bs{\eps}^\top M \bs{\eps}
        \le \sigma^2\tr(M) + 2\sigma^2 \|M\|t
          + 2\sigma^2\sqrt{\|M\|^2 t^2 +
          \tr\left(M^2\right)t}.
    \]
\end{lemma}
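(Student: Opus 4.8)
The plan is to establish a one-sided Bernstein-type tail bound for $\bs{\eps}^\top M\bs{\eps}$ by the standard Gaussian-linearization (decoupling) trick followed by a Chernoff bound. I would work conditionally on $X$ throughout, treating $M$ as a fixed positive semidefinite (hence symmetric) $n\times n$ matrix with spectral decomposition $M=\sum_{i=1}^n\mu_i u_iu_i^\top$, $\mu_1=\|M\|\ge\mu_2\ge\cdots\ge\mu_n\ge 0$, so that $\tr(M)=\sum_i\mu_i$ and $\tr(M^2)=\sum_i\mu_i^2$; the case $M=0$ is trivial, so assume $\|M\|>0$. The only feature of $\bs{\eps}$ I need is that, by conditional independence of its coordinates and the conditional $\sigma^2$-subgaussianity hypothesis, for every fixed $a\in\Re^n$,
\[
  \E\left[\exp(a^\top\bs{\eps})\,\middle|\,X\right]=\prod_{i=1}^n\E\left[\exp(a_i\eps_i)\,\middle|\,X\right]\le\exp\left(\frac{\sigma^2}{2}\|a\|^2\right).
\]

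Next I would control the moment generating function $\E[\exp(s\,\bs{\eps}^\top M\bs{\eps})\mid X]$ for $0\le s<\frac{1}{2\sigma^2\|M\|}$ by that of a Gaussian chaos. Introduce a standard Gaussian $g\sim\Norm(0,I_n)$ independent of $\bs{\eps}$ and of $X$, and use the Laplace-transform identity $\E_g[\exp(\sqrt{2s}\,v^\top g)]=\exp(s\|v\|^2)$ with $v=M^{1/2}\bs{\eps}$ to linearize the quadratic form: $\exp(s\,\bs{\eps}^\top M\bs{\eps})=\E_g[\exp(\sqrt{2s}\,g^\top M^{1/2}\bs{\eps})]$. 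By Tonelli (the integrands are nonnegative for $s\ge 0$) and the displayed subgaussian bound applied with $a=\sqrt{2s}\,M^{1/2}g$, whose squared norm is $2s\,g^\top Mg$,
\[
  \E\left[\exp(s\,\bs{\eps}^\top M\bs{\eps})\,\middle|\,X\right]
  =\E_g\,\E_{\bs{\eps}}\left[\exp\left((\sqrt{2s}\,M^{1/2}g)^\top\bs{\eps}\right)\,\middle|\,X\right]
  \le\E_g\exp\left(\sigma^2 s\,g^\top Mg\right),
\]
and evaluating the Gaussian quadratic form in the eigenbasis of $M$ gives $\E_g\exp(\sigma^2 s\,g^\top Mg)=\prod_{i=1}^n(1-2\sigma^2 s\mu_i)^{-1/2}$, which is finite exactly because $s<1/(2\sigma^2\|M\|)$.

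Finally, a Chernoff bound gives $\Pr(\bs{\eps}^\top M\bs{\eps}\ge u\mid X)\le e^{-su}\prod_{i=1}^n(1-2\sigma^2 s\mu_i)^{-1/2}$ for every admissible $s$. Taking logarithms, using $-\frac12\log(1-x)\le\frac{x}{2}+\frac{x^2}{4(1-x)}$ for $x\in[0,1)$ with $x=2\sigma^2 s\mu_i$, and bounding $1-2\sigma^2 s\mu_i\ge 1-2\sigma^2 s\|M\|$ in the denominators, one obtains
\[
  \log\E\left[\exp\left(s\left(\bs{\eps}^\top M\bs{\eps}-\sigma^2\tr(M)\right)\right)\,\middle|\,X\right]\le\frac{\sigma^4 s^2\,\tr(M^2)}{1-2\sigma^2 s\|M\|},
\]
the standard sub-gamma MGF condition with variance factor $2\sigma^4\tr(M^2)$ and scale $2\sigma^2\|M\|$; optimizing over $s$ then yields, with conditional probability at least $1-e^{-t}$,
\[
  \bs{\eps}^\top M\bs{\eps}\le\sigma^2\tr(M)+2\sigma^2\|M\|t+2\sigma^2\sqrt{\tr(M^2)\,t},
\]
and since $\sqrt{\tr(M^2)\,t}\le\sqrt{\|M\|^2 t^2+\tr(M^2)\,t}$ this is at most the bound claimed in the statement. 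The argument is largely routine; the only steps demanding care are the conditioning bookkeeping — the auxiliary Gaussian $g$ must be drawn independently of both $X$ and $\bs{\eps}$ so that the iterated expectations genuinely factor, and positive semidefiniteness of $M$ is what makes the upper-tail (rather than a two-sided Hanson–Wright) bound come out cleanly — and aligning the Chernoff optimization with the classical Gaussian/chi-square tail bound, which is precisely where passing to the slightly lossy $\sqrt{\|M\|^2 t^2+\tr(M^2)t}$ form is convenient.
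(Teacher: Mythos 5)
Your proof is correct. The paper itself gives no argument for this lemma; it simply cites Lemma~36 of \cite{Page2017}, so there is no internal proof to compare against. Your route is the standard one for subgaussian quadratic forms in the style of Hsu, Kakade and Zhang: Gaussian linearization of $\exp(s\,\bs{\eps}^\top M\bs{\eps})$ via an auxiliary $g\sim\Norm(0,I_n)$ independent of $(X,\bs{\eps})$, Tonelli to swap the order of integration, the product-form subgaussian MGF bound for $\bs{\eps}$, the exact Gaussian chaos MGF $\prod_i(1-2\sigma^2 s\mu_i)^{-1/2}$, and then the sub-gamma Chernoff optimization. Each step is justified: the elementary inequality $-\tfrac12\log(1-x)\le \tfrac{x}{2}+\tfrac{x^2}{4(1-x)}$ on $[0,1)$ yields the sub-gamma MGF bound with variance factor $2\sigma^4\tr(M^2)$ and scale $2\sigma^2\|M\|$, and the standard sub-gamma tail estimate then gives, with conditional probability at least $1-e^{-t}$,
\[
\bs{\eps}^\top M\bs{\eps}\le\sigma^2\tr(M)+2\sigma^2\|M\|t+2\sigma^2\sqrt{\tr(M^2)\,t},
\]
which is in fact \emph{strictly stronger} than the bound stated in the lemma (the lemma's form has the additional $\|M\|^2t^2$ under the square root). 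Your final observation that $\sqrt{\tr(M^2)t}\le\sqrt{\|M\|^2t^2+\tr(M^2)t}$ correctly bridges the two. One small point worth making explicit if this were written out formally: positive semidefiniteness of $M$ is used to define $M^{1/2}$ and to ensure the linearization identity and non-negativity needed for Tonelli, and conditional independence of the $\eps_i$ given $X$ is exactly what lets the joint conditional MGF factor into the product used in the first display.
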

Since $\|C\|\leq \tr(C)$ and $\tr(C^2) \leq \tr(C)^2$,
with probability at least $1-e^{-t}$,
  \begin{align*}
    \bs{\eps}^\top C\bs{\eps}
      & \leq \sigma^2\tr(C)(2t+1) + 2\sigma^2\sqrt{\tr(C)^2(t^2 + t)}
      \leq (4t+2)\sigma^2\tr(C).
  \end{align*}
Combining this with Lemma~\ref{l:bias.variance}
implies Lemma~\ref{lemma:bv}.

\section{An Algebraic Property}\label{appendix:SMWproof}.

\begin{lemma}\label{lemma::SMW_formula}
  Suppose $k < n$, $A \in \R^{n\times n}$ is an invertible matrix,
  and $Z \in \R^{n\times k}$ is such that $ZZ^\top + A$ is invertible.
  Then
    \[
      Z^\top (ZZ^\top + A)^{-2} Z
      = (I +  Z^\top A^{-1}Z)^{-1}Z^\top A^{-2}Z
        (I +  Z^\top A^{-1}Z)^{-1}.
     \]
\end{lemma}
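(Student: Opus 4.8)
The plan is to derive the identity directly from the Sherman--Morrison--Woodbury (SMW) formula. First I would observe that the hypotheses guarantee that $I + Z^\top A^{-1} Z$ is invertible: since $\det(A + ZZ^\top) = \det(A)\,\det(I + Z^\top A^{-1}Z)$ and both $A$ and $A+ZZ^\top$ are invertible, the $k\times k$ matrix $I + Z^\top A^{-1}Z$ must be nonsingular, so every inverse appearing below is well defined. Write $M := Z^\top A^{-1} Z$ and $W := A + ZZ^\top$; SMW then states
\[
  W^{-1} = A^{-1} - A^{-1} Z (I + M)^{-1} Z^\top A^{-1}.
\]

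Next I would compute the two ``one-sided'' products $Z^\top W^{-1}$ and $W^{-1} Z$ separately, because $A$ is only assumed invertible, not symmetric, so one cannot be obtained from the other by transposition. Left-multiplying the SMW identity by $Z^\top$ gives $Z^\top W^{-1} = \bigl(I - M(I+M)^{-1}\bigr) Z^\top A^{-1}$, and the elementary simplification $I - M(I+M)^{-1} = (I+M)(I+M)^{-1} - M(I+M)^{-1} = (I+M)^{-1}$ yields
\[
  Z^\top W^{-1} = (I + Z^\top A^{-1} Z)^{-1} Z^\top A^{-1}.
\]
In the same way, right-multiplying the SMW identity by $Z$ and using $Z^\top A^{-1} Z = M$ together with $I - (I+M)^{-1} M = (I+M)^{-1}$ gives
\[
  W^{-1} Z = A^{-1} Z (I + Z^\top A^{-1} Z)^{-1}.
\]

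Finally I would multiply these two identities, obtaining
\[
  Z^\top W^{-2} Z = \bigl(Z^\top W^{-1}\bigr)\bigl(W^{-1} Z\bigr) = (I+M)^{-1} Z^\top A^{-1} A^{-1} Z (I+M)^{-1} = (I + Z^\top A^{-1}Z)^{-1} Z^\top A^{-2} Z (I + Z^\top A^{-1}Z)^{-1},
\]
which is exactly the claim. There is no genuine obstacle here; the only points needing a little care are the invertibility of $I + Z^\top A^{-1}Z$ and the observation that, with $A$ merely invertible, the left and right factorizations of $W^{-1}$ against $Z$ must be checked independently rather than by transposing. (One can also avoid invoking SMW altogether: from $W^{-1}W = I$ one reads off $Z^\top W^{-1} A = Z^\top - (Z^\top W^{-1} Z) Z^\top$, and solving the resulting relation $N = (I - N)M$ for $N := Z^\top W^{-1} Z$ gives $N = M(I+M)^{-1}$, hence $Z^\top W^{-1} = (I+M)^{-1} Z^\top A^{-1}$ again; I would use whichever of the two derivations reads more cleanly in context.)
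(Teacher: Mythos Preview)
Your proof is correct and follows essentially the same route as the paper: both start from the Sherman--Morrison--Woodbury identity and reduce via $I - M(I+M)^{-1} = (I+M)^{-1}$. The only difference is organizational---you simplify $Z^\top W^{-1}$ and $W^{-1}Z$ separately before multiplying, whereas the paper expands $Z^\top(\text{SMW})^2 Z$ into four terms and collapses them afterward; your ordering is a touch cleaner but the content is the same.
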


\begin{proof}
  We use the Sherman–Morrison–Woodbury formula to write
  \begin{equation}\label{equation:SMW}
    (ZZ^\top + A)^{-1}
    = A^{-1} - A^{-1}Z(I + Z^\top A^{-1}Z)^{-1}Z^\top A^{-1}.
  \end{equation}
  Denote $M_1:= Z^\top A^{-1}Z$ and $M_2:= Z^\top A^{-2}Z$.
  Applying~\eqref{equation:SMW}, we get
	\begin{align*}
	Z^\top(ZZ^\top + A)^{-2} Z 
	& = Z^\top \Bigl(A^{-1} - A^{-1}Z(I + Z^\top A^{-1}Z)^{-1}Z^\top A^{-1}\Bigr)^2Z \\
	& = Z^\top \Bigl(A^{-1} - A^{-1}Z(I + M_1)^{-1}Z^\top A^{-1}\Bigr)^2Z \\
	& = Z^\top \Bigl(
                 A^{-2}
                 - A^{-2}Z(I + M_1)^{-1}Z^\top A^{-1} 
                 - A^{-1}Z(I + M_1)^{-1}Z^\top A^{-2}
                  \\
        & \hspace{0.5in}
          + A^{-1}Z(I + M_1)^{-1}Z^\top A^{-2}Z(I + M_1)^{-1}Z^\top A^{-1}
                \Bigr)Z \\
	& = M_2 - M_2(I + M_1)^{-1}M_1 
      - M_1(I + M_1)^{-1}M_2 \\
        & \hspace{0.5in}
          + M_1(I + M_1)^{-1}M_2(I + M_1)^{-1}M_1\\
	& = M_2 - M_2(I + M_1)^{-1}M_1 
      - M_1(I + M_1)^{-1}M_2 (I - (I + M_1)^{-1}M_1) \\
	& = M_2(I + M_1)^{-1} -  M_1(I + M_1)^{-1}M_2(I + M_1)^{-1}\\
	& = (I + M_1)^{-1}M_2(I + M_1)^{-1},
	\end{align*}
  where we used the identity $I - (I + M_1)^{-1}M_1 = (I + M_1)^{-1}$
  twice in the second last equality and the identity
  $I - M_1(I + M_1)^{-1} = (I + M_1)^{-1}$ in the last equality.
\end{proof}

\section{Proof of concentration inequalities}\label{appendix:concentration}

We use some standard results about subgaussian and
subexponential random variables.

First of all, we need the following direct consequence of
Propositions~2.5.2 and~2.7.1 and Lemma~2.7.6
from~\cite{vershynin_2018}:
\begin{lemma}\label{lemma:square of subgaussian}
    There is a universal constant $c$ such that for any
    random variable $\xi$ that is centered, $\sigma^2$-subgaussian,
    and unit variance, $\xi^2 - 1$ is a centered
    $c\sigma^2$-subexponential random variable, that is,
	  \[
		\E\exp(\lambda(\xi^2 - 1)) \leq
        \exp(c^2\sigma^4\lambda^2)\text{ for all such $\lambda$ that }
        |\lambda| \leq \frac{1}{c\sigma^2}.
	\]
\end{lemma}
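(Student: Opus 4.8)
The plan is to pass through the Orlicz norms $\psi_2$ and $\psi_1$, where the needed facts are exactly the cited results of~\cite{vershynin_2018}. First I would convert the hypothesis into a bound on the subgaussian norm of $\xi$: the assumed bound $\E\exp(\lambda\xi)\le\exp(\sigma^2\lambda^2/2)$, valid for all $\lambda\in\R$, is precisely the mean-zero MGF characterization of subgaussianity in Proposition~2.5.2, so it yields $\|\xi\|_{\psi_2}\le a_1\sigma$ for a universal constant $a_1$. Then Lemma~2.7.6, which states that a product of two subgaussian variables is subexponential with $\|\eta\zeta\|_{\psi_1}\le\|\eta\|_{\psi_2}\|\zeta\|_{\psi_2}$, applied with $\eta=\zeta=\xi$, gives $\|\xi^2\|_{\psi_1}=\|\xi\|_{\psi_2}^2\le a_1^2\sigma^2$.

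Next I would center and convert back to an MGF bound. A centering inequality for the $\psi_1$ norm yields a universal $a_2$ with $\|\xi^2-\E\xi^2\|_{\psi_1}\le a_2\|\xi^2\|_{\psi_1}$ (alternatively one can just use $\|\xi^2-1\|_{\psi_1}\le\|\xi^2\|_{\psi_1}+\|1\|_{\psi_1}$, noting $\|1\|_{\psi_1}$ is a universal constant and, since $\Var(\xi)=1$ forces $\sigma^2$ to be bounded below by a universal constant, $\|1\|_{\psi_1}=O(\sigma^2)$). Since $\xi$ is mean zero with unit variance we have $\E\xi^2=\Var(\xi)=1$, so $\|\xi^2-1\|_{\psi_1}\le K\sigma^2$ for a universal $K$. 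Now $\xi^2-1$ is centered with $\psi_1$ norm at most $K\sigma^2$, so the mean-zero MGF characterization in Proposition~2.7.1 provides universal constants $a_3,a_4$ with $\E\exp(\lambda(\xi^2-1))\le\exp(a_3K^2\sigma^4\lambda^2)$ for all $|\lambda|\le a_4/(K\sigma^2)$. Picking the universal constant $c$ large enough that $c\ge K/a_4$ and $c^2\ge a_3K^2$, every $\lambda$ with $|\lambda|\le 1/(c\sigma^2)$ then satisfies both $|\lambda|\le a_4/(K\sigma^2)$ and $a_3K^2\sigma^4\lambda^2\le c^2\sigma^4\lambda^2$, which is the stated inequality.

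I do not expect a genuine obstacle: the mathematical content is entirely contained in the cited equivalences, and the remaining work is bookkeeping. The two points that need attention are matching the normalizations --- one must invoke the two-sided, small-$|\lambda|$ MGF form of subexponentiality from Proposition~2.7.1, since for $\lambda<0$ the trivial estimate $\E\exp(\lambda(\xi^2-1))\le e^{-\lambda}$ is not Gaussian in $\lambda$ and so is useless here --- and verifying that each constant $a_1,a_2,a_3,a_4$, hence $K$ and the final $c$, is genuinely universal, that is, independent of $\sigma$ and of the distribution of $\xi$.
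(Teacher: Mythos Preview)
Your proposal is correct and follows exactly the route the paper indicates: the paper does not give a detailed argument but simply states that the lemma is a direct consequence of Propositions~2.5.2 and~2.7.1 and Lemma~2.7.6 of~\cite{vershynin_2018}, and you have spelled out precisely how those three results chain together (MGF $\to\psi_2$, product $\to\psi_1$, centering, then $\psi_1\to$ MGF). Your observation that unit variance forces $\sigma^2$ to be bounded below by a universal constant, so that the additive $\|1\|_{\psi_1}$ can be absorbed into $K\sigma^2$, is the only nontrivial bookkeeping point and you handle it correctly.
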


Second, we are going to use the following form of Bernstein's inequality, which is Theorem 2.8.2 in \cite{vershynin_2018}:
\begin{lemma}\label{lemma:Bernstein}
    There is a universal constant $c$ such that, for any
	independent, mean zero, $\sigma$-subexponential random variables
	$\xi_1, \dots, \xi_N$, any $a = (a_1, \dots, a_N)\in \Re^n$, and
    any $t \geq 0$,
	  \begin{align*}
	    \Pbb\left(\left|\sum_{i=1}^N a_i \xi_i\right| > t\right)
            \leq 2\exp\left[-c\min\left(\frac{t^2}{\sigma^2 \sum_{i=1}^N
        a_i^2}, \frac{t}{\sigma \max_{1 \leq i \leq n}
        a_i}\right)\right].
	  \end{align*}
\end{lemma}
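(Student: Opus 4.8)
The plan is to prove this by the classical Chernoff (exponential moment) method; the fact that it appears as Theorem~2.8.2 in~\cite{vershynin_2018} means we only need to recall the argument. First I would make two harmless reductions. Writing $S=\sum_{i=1}^N a_i\xi_i$, we have $\Pbb(|S|>t)\le\Pbb(S>t)+\Pbb(-S>t)$, and since each $-\xi_i$ is again mean-zero and $\sigma$-subexponential, it suffices to prove the one-sided bound $\Pbb(S>t)\le\exp[-c\min(\cdots)]$ and then double it; this produces the factor $2$ in the statement. Also, replacing $a_i$ by $|a_i|$ changes neither $\sum_i a_i^2$ nor $\max_i|a_i|$ and only relabels each $\xi_i$ (absorbing a sign), so I may assume $a_i\ge 0$. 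The cases $t=0$ and $a\equiv 0$ are trivial, so assume $t>0$ and $\max_i a_i>0$.

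The key analytic input is the standard moment generating function characterization of subexponential variables: there is a universal constant $K$ such that any centered $\sigma$-subexponential $\xi$ satisfies $\E\exp(\lambda\xi)\le\exp(K^2\sigma^2\lambda^2)$ whenever $|\lambda|\le 1/(K\sigma)$ (this is exactly the form used implicitly in Lemma~\ref{lemma:square of subgaussian}). By independence, for any $\lambda\ge 0$ with $\lambda a_i\le 1/(K\sigma)$ for all $i$ — equivalently $\lambda\le 1/(K\sigma\max_i a_i)$ — we get
\[
\E\exp(\lambda S)=\prod_{i=1}^N\E\exp(\lambda a_i\xi_i)\le\prod_{i=1}^N\exp\!\big(K^2\sigma^2\lambda^2 a_i^2\big)=\exp\!\Big(K^2\sigma^2\lambda^2\sum_{i=1}^N a_i^2\Big).
\]
Markov's inequality applied to $\exp(\lambda S)$ then yields, for every admissible $\lambda$,
\[
\Pbb(S>t)\le\exp\!\Big(-\lambda t+K^2\sigma^2\lambda^2\sum_{i=1}^N a_i^2\Big).
\]

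Finally I would optimize over the admissible range of $\lambda$. The unconstrained minimizer of $-\lambda t+K^2\sigma^2\lambda^2\sum_i a_i^2$ is $\lambda^\star=t/\big(2K^2\sigma^2\sum_i a_i^2\big)$, with value $-t^2/\big(4K^2\sigma^2\sum_i a_i^2\big)$. If $\lambda^\star\le 1/(K\sigma\max_i a_i)$ I take $\lambda=\lambda^\star$ and obtain the sub-Gaussian tail $\exp\big(-t^2/(4K^2\sigma^2\sum_i a_i^2)\big)$. Otherwise $\lambda^\star$ lies outside the feasible interval, and I take $\lambda=1/(K\sigma\max_i a_i)$; in that regime $K^2\sigma^2\lambda^2\sum_i a_i^2=\lambda\cdot\big(K^2\sigma^2\lambda\sum_i a_i^2\big)\le\lambda\cdot(t/2)$ precisely because $\lambda\le\lambda^\star$, so the exponent is at most $-\lambda t/2=-t/(2K\sigma\max_i a_i)$, the sub-exponential tail. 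In either case the exponent is bounded above by $-\tfrac14\min\!\big(t^2/(K^2\sigma^2\sum_i a_i^2),\,t/(K\sigma\max_i a_i)\big)$, which is the claimed form with $c$ depending only on $K$; doubling for the two-sided bound finishes the proof.

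The only step that is not purely mechanical is the bookkeeping around the constraint $\lambda\le 1/(K\sigma\max_i a_i)$: one must split into the two regimes at exactly the point where $\lambda^\star$ leaves the feasible set and verify, on the infeasible side, that the quadratic term is genuinely dominated by the linear term so that the clipped value of $\lambda$ still produces an exponent of order $t/(\sigma\max_i a_i)$. Everything else — the single-variable MGF bound, the product formula under independence, Markov's inequality, and folding the two estimates into one $\min$ — is routine.
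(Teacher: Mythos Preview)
Your proof is correct. The paper does not actually prove this lemma; it simply quotes it as Theorem~2.8.2 of~\cite{vershynin_2018}, and your Chernoff argument is precisely the standard proof given there, so there is nothing to compare.
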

\begin{cor}\label{cor:Bernstein}
    There is a universal constant $c$ such that for any
	non-increasing sequence $\{\lambda_i\}_{i =1}^\infty $
    of non-negative numbers such that
    $\sum_{i=1}^\infty \lambda_i < \infty$, and any
    independent, centered, $\sigma$-subexponential random variables
    $\{\xi_i\}_{i = 1}^\infty$, and any $x > 0$, with probability
    at least $1-2e^{-x}$
	  \[
	    \left|\sum_{i} \lambda_i \xi_i\right| \leq
        c\sigma\max\left(x\lambda_1, \sqrt{x \sum_{i}
        \lambda_i^2}\right).
	  \]
\end{cor}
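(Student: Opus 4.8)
The plan is to deduce the bound from the finite-sum Bernstein inequality of Lemma~\ref{lemma:Bernstein} by applying it to partial sums and then passing to the limit. First I would make sure the infinite sum is well defined. Since $\{\lambda_i\}$ is non-increasing, non-negative and summable, $\lambda_i\to 0$ and $\sum_i\lambda_i^2\le\lambda_1\sum_i\lambda_i<\infty$; and any centered $\sigma$-subexponential $\xi_i$ has $\E\xi_i^2\le a\sigma^2$ for a universal constant $a$, so $\sum_i\Var(\lambda_i\xi_i)\le a\sigma^2\sum_i\lambda_i^2<\infty$. As the $\lambda_i\xi_i$ are independent and centered with summable variances, the partial sums $S_N:=\sum_{i=1}^N\lambda_i\xi_i$ converge almost surely (and in $L^2$) to a random variable $S:=\sum_i\lambda_i\xi_i$, so the left-hand side of the claimed inequality makes sense.

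Next I would apply Lemma~\ref{lemma:Bernstein} to $S_N$ with coefficient vector $(\lambda_1,\dots,\lambda_N)$. Because the sequence is non-increasing we have $\max_{1\le i\le N}\lambda_i=\lambda_1$, and $\sum_{i=1}^N\lambda_i^2\le\sum_{i=1}^\infty\lambda_i^2$, so for every $t\ge 0$ and every $N$,
\[
  \Pr\bigl(|S_N|>t\bigr)\le 2\exp\!\left[-c_1\min\!\left(\frac{t^2}{\sigma^2\sum_i\lambda_i^2},\ \frac{t}{\sigma\lambda_1}\right)\right],
\]
with $c_1$ the universal constant of Lemma~\ref{lemma:Bernstein} and the right-hand side independent of $N$. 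Taking $t^\ast:=c_0\sigma\max\bigl(x\lambda_1,\sqrt{x\sum_i\lambda_i^2}\bigr)$ with $c_0:=\max(1,1/c_1)$, one checks that $t^\ast/(\sigma\lambda_1)\ge c_0x$ and $(t^\ast)^2/(\sigma^2\sum_i\lambda_i^2)\ge c_0^2x\ge c_0x$, so the exponent is at most $-c_1c_0x\le -x$, giving $\Pr(|S_N|>t^\ast)\le 2e^{-x}$ uniformly in $N$.

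Finally I would transfer this tail bound to $S$. Since $S_N\to S$ almost surely, on the event $\{|S|>t^\ast\}$ the indicators $\ind\{|S_N|>t^\ast\}$ tend to $1$, hence $\ind\{|S|>t^\ast\}\le\liminf_N\ind\{|S_N|>t^\ast\}$ almost surely, and Fatou's lemma yields $\Pr(|S|>t^\ast)\le\liminf_N\Pr(|S_N|>t^\ast)\le 2e^{-x}$, which is exactly the claimed inequality with the corollary's constant equal to $c_0$. I expect the only real obstacle to be the handling of the infinite series: one must first secure almost sure convergence so that $S$ is a genuine random variable, and then move the uniform-in-$N$ tail estimate to the limit; the remaining manipulations of $\min$/$\max$ and the choice of constant are routine.
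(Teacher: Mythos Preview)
Your proposal is correct and follows the same route the paper implicitly takes: the paper states Corollary~\ref{cor:Bernstein} as an immediate consequence of Lemma~\ref{lemma:Bernstein} without writing out a proof. Your argument supplies precisely the details the paper omits---applying Bernstein to the partial sums, choosing $t^\ast$ so that the exponent is at least $x$, and then passing to the limit via almost sure convergence and Fatou---and all the steps check out.
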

\begin{cor}\label{cor:norm of projection}
    There is a universal constant $c$ such that for any
	centered random vector $z\in \Re^n$ with independent
    $\sigma^2$-subgaussian coordinates with unit variances, any
    random subspace $\Ls$ of $\Re^n$ of codimension $k$ that is
    independent of $z$, and any $t > 0$, with probability at
    least $1-3e^{-t}$,
	  \begin{align*}
	    \|z\|^2 \leq& n + c\sigma^2(t + \sqrt{nt}),\\
	    \|\Pi_{\Ls} z\|^2 \geq& n - c\sigma^2(k + t + \sqrt{nt}),
	  \end{align*}
	where $\Pi_{\Ls}$ is the orthogonal projection on $\Ls$.
\end{cor}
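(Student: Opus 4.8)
The plan is to prove the two inequalities separately and then combine them by a union bound. Since $\Ls$ is independent of $z$, I would first condition on $\Ls$; this turns $\Ls^{\perp}$ into a fixed $k$-dimensional subspace while leaving $z$ with independent, centered, unit-variance, $\sigma^{2}$-subgaussian coordinates, so any bound proved conditionally on $\Ls$ holds unconditionally.

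For the first inequality, write $\|z\|^{2}-n=\sum_{i=1}^{n}(z_{i}^{2}-1)$. By Lemma~\ref{lemma:square of subgaussian} each $z_{i}^{2}-1$ is centered and $c\sigma^{2}$-subexponential, and these are independent, so Corollary~\ref{cor:Bernstein} applied with the weight sequence $\lambda_{i}=\ind[i\le n]$ (for which $\lambda_{1}=1$ and $\sum_{i}\lambda_{i}^{2}=n$) yields, with probability at least $1-2e^{-t}$,
\[
  \bigl|\,\|z\|^{2}-n\,\bigr|\le c\sigma^{2}\max\!\bigl(t,\sqrt{nt}\bigr)\le c\sigma^{2}\bigl(t+\sqrt{nt}\bigr).
\]
The upper half is exactly the first claim, and the lower half, $\|z\|^{2}\ge n-c\sigma^{2}(t+\sqrt{nt})$, will be used below.

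For the projection term I would use $\|\Pi_{\Ls}z\|^{2}=\|z\|^{2}-\|\Pi_{\Ls^{\perp}}z\|^{2}$ and bound $\|\Pi_{\Ls^{\perp}}z\|^{2}=z^{\top}\Pi_{\Ls^{\perp}}z$ from above. Conditionally on $\Ls$, the matrix $M:=\Pi_{\Ls^{\perp}}$ is a fixed positive semidefinite $n\times n$ matrix with $\|M\|=1$ and $\tr(M)=\tr(M^{2})=k$, so Lemma~\ref{lemma::subgaussian-quadratic-form} (with $z$ in place of $\bs{\eps}$ and $\Ls$ in the role of the conditioning variable) gives, with probability at least $1-e^{-t}$,
\[
  z^{\top}\Pi_{\Ls^{\perp}}z\le\sigma^{2}k+2\sigma^{2}t+2\sigma^{2}\sqrt{t^{2}+kt}\le c\sigma^{2}(k+t),
\]
using $\sqrt{t^{2}+kt}\le t+\sqrt{kt}\le t+(k+t)/2$.

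Finally, on the intersection of these two events — which fails with probability at most $3e^{-t}$ — the first inequality of the corollary holds directly, and
\[
  \|\Pi_{\Ls}z\|^{2}=\|z\|^{2}-\|\Pi_{\Ls^{\perp}}z\|^{2}\ge\bigl(n-c\sigma^{2}(t+\sqrt{nt})\bigr)-c\sigma^{2}(k+t)\ge n-c\sigma^{2}\bigl(k+t+\sqrt{nt}\bigr)
\]
after enlarging the constant, which is the second inequality. I do not expect a real obstacle; the only point that needs care is that the coordinates of $\Pi_{\Ls^{\perp}}z$ in an orthonormal basis of $\Ls^{\perp}$ are uncorrelated but not independent, which is why the projection estimate goes through the quadratic-form (Hanson--Wright type) bound of Lemma~\ref{lemma::subgaussian-quadratic-form} rather than a coordinatewise Bernstein bound.
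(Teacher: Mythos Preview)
Your proposal is correct and follows essentially the same argument as the paper: the paper also bounds $\bigl|\|z\|^{2}-n\bigr|$ via Corollary~\ref{cor:Bernstein} applied to the centered subexponentials $z_i^2-1$, decomposes $\|\Pi_{\Ls}z\|^{2}=\|z\|^{2}-\|\Pi_{\Ls^{\perp}}z\|^{2}$, and controls $z^{\top}\Pi_{\Ls^{\perp}}z$ using Lemma~\ref{lemma::subgaussian-quadratic-form} with $\|M\|=1$ and $\tr(M)=\tr(M^{2})=k$, then combines with a union bound. Your explicit remark about conditioning on $\Ls$ and the need for a Hanson--Wright-type bound (rather than a coordinatewise one) is a nice clarification, but the underlying proof is the same.
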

\begin{proof}
	First of all, since $\|z\|^2 = \sum_{i=1}^n z_i^2$ --- a sum of
    $n$ $\sigma^2$-subexponential random variables, by
    Corollary~\ref{cor:Bernstein}, for some absolute constant $c$
    and for any $t > 0$, with probability at least $1 - 2e^{-t}$,
	\[
		\left|\|z\|^2 - n\right| \leq c\sigma^2\max(t, \sqrt{nt}).
	\]
	
	Second, we can write
	\[
		\|\Pi_{\Ls} z\|^2 = \|z\|^2 - \|\Pi_{\Ls^\perp} z\|^2.
	\]
	
	Denote $M = \Pi_{\Ls^\perp}^\top \Pi_{\Ls^\perp}$. 	Since $\|M\| =
    1$ and $\tr\left(M\right) = \tr(M^2) = k$, by
    Lemma~\ref{lemma::subgaussian-quadratic-form},
    with probability at least $1 - e^{-t}$,
	\begin{align*}
	 \|\Pi_{\Ls^\perp} z\|^2 =& z^\top M z\\
	 \leq& \sigma^2 k + 2\sigma^2 t + 2\sigma^2\sqrt{t^2 + kt}\\
	 \leq& \sigma^2(2k+4t).
	\end{align*}
	
	Thus, with probability at least $1-3e^{-t}$
	\begin{align*}
	\|z\|^2 \leq& n + c\sigma^2\max(t, \sqrt{nt}),\\
	\|\Pi_{\Ls} z\|^2 \geq& \|z\| -  \sigma^2(2k+4t)\\
	\geq& n -  \sigma^2(2k + 4t + c\max(t, \sqrt{nt})).
	\end{align*}
\end{proof}

\begin{lemma}[$\epsilon$-net argument]\label{lemma::norm_by_net}
  Suppose $A \in \R^{n\times n}$ is a symmetric matrix, and
  $\Nc_\epsilon$ is an $\epsilon$-net on the unit sphere $\Sc^{n-1}$ in
  the Euclidean norm, where $\epsilon < \frac12$. Then
    \[
      \|A\| \leq (1 - \epsilon)^{-2} \max_{x \in \Nc_\epsilon} |x^\top Ax|.
    \]
\end{lemma}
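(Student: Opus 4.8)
The plan is to reduce the estimate to a single well-chosen test vector, namely a top eigenvector of $A$, and then exploit the extra structure this affords; this is exactly what lets one beat the naive $\epsilon$-net constant $(1-2\epsilon)^{-1}$ and reach the sharper $(1-\epsilon)^{-2}$ claimed here.

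First, since $A$ is symmetric it is orthogonally diagonalizable, so $\|A\| = \max_{\|u\|=1}|u^\top A u|$, and the maximum is attained at a unit eigenvector $u$ with $Au=\lambda u$ and $|\lambda| = \|A\|$. Choose $x\in\Nc_\epsilon$ with $\|u-x\|\le\epsilon$ (if $u\in\Nc_\epsilon$ there is nothing to prove), and decompose $x = \alpha u + w$ with $\alpha = x^\top u$ and $w\perp u$. Then $\alpha^2 + \|w\|^2 = \|x\|^2 = 1$, and the identity $\|u-x\|^2 = 2-2\alpha \le \epsilon^2$ forces $\alpha \ge 1 - \epsilon^2/2 > 0$ (using $\epsilon<\tfrac12$).

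Next, because $w\perp u$ and $Au=\lambda u$, the cross terms vanish and $x^\top A x = \alpha^2\lambda + w^\top A w$. Bounding the second term by $|w^\top A w|\le \|A\|\,\|w\|^2 = |\lambda|(1-\alpha^2)$ gives
\[
  |x^\top A x| \ge |\lambda|\bigl(\alpha^2 - (1-\alpha^2)\bigr) = \|A\|\,(2\alpha^2 - 1).
\]
Substituting $\alpha \ge 1-\epsilon^2/2$ and using the elementary inequality $2(1-\epsilon^2/2)^2 - 1 \ge (1-\epsilon)^2$, valid for all $\epsilon<\tfrac12$ (in fact for all $\epsilon\le\tfrac23$, since the difference equals $\epsilon(2-3\epsilon+\epsilon^3/2)\ge 0$ there), yields $|x^\top A x| \ge (1-\epsilon)^2\|A\|$. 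As $|x^\top A x|\le \max_{y\in\Nc_\epsilon}|y^\top A y|$, rearranging proves the lemma.

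The only conceptual point — and the reason one cannot just quote the textbook $\epsilon$-net estimate for quadratic forms — is the reduction in the first step: a generic unit vector $v$ satisfies only $|v^\top A v - x^\top A x|\le 2\epsilon\|A\|$ for its nearest net point $x$, which would give merely $(1-2\epsilon)^{-1}$. Taking $v=u$ to be an actual eigenvector makes the order-$\|u-x\|$ discrepancy cancel, leaving only the $O(\epsilon^2)$ correction that produces $(1-\epsilon)^{-2}$. Everything after that reduction is the short arithmetic displayed above, so I expect no real obstacle beyond keeping the elementary inequality tight.
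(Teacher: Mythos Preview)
Your proof is correct and follows essentially the same approach as the paper: both arguments pick the top eigenvector, pass to a nearby net point, and exploit the eigenvector structure so that the first-order error cancels. In fact your intermediate quantity $2\alpha^2-1$ is exactly the paper's expression $1+2\Delta v_1+2\Delta v_1^2-\|\Delta v\|^2$ under the change of variables $\alpha=1+\Delta v_1$, $\|\Delta v\|^2=2-2\alpha$; the only cosmetic difference is that the paper works in full eigenbasis coordinates of $\Delta v=x-v$ and bounds via monotonicity of $t\mapsto t+t^2$, whereas you use the orthogonal splitting $x=\alpha u+w$ and the inequality $2(1-\epsilon^2/2)^2-1\ge(1-\epsilon)^2$.
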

\begin{proof}
  Denote the eigenvalues of $A$ as $\lambda_1, \dots, \lambda_n$
  and assume $|\lambda_1| \geq |\lambda_2| \geq \dots \geq |\lambda_n|$.
  Denote the first eigenvector of $A$ as $v \in \Sc^{n-1}$, and take
  $\Delta v \in \R^n$ such that $v + \Delta v \in \Nc_\epsilon$ and
  $\|\Delta v\| \leq \epsilon.$ 
  Denote the coordinates of $\Delta v$ in
  the eigenbasis of $A$ as $\Delta v_1, \dots, \Delta v_n$.
  Now we can write
    \begin{align*}
      \left|(v + \Delta v)^\top  A (v + \Delta v)\right|
        &= \left|\lambda_1 + 2\lambda_1\Delta v_1
          + \sum_{i=1}^n \lambda_i \Delta v_i^2\right|\\
        &= |\lambda_1|\cdot \left|1 + 2\Delta v_1 + \Delta v_1^2
          + \sum_{i=2}^n \frac{\lambda_i}{\lambda_1} \Delta v_i^2\right|\\
        &\geq|\lambda_1|\cdot \left|1 + 2\Delta v_1 + \Delta v_1^2
          - \sum_{i=2}^n \Delta v_i^2\right|\\
        &= |\lambda_1|\cdot \left|1 + 2\Delta v_1 + \Delta v_1^2 - \|\Delta v\|^2 + \Delta v_1^2\right|\\
        &= |\lambda_1|\cdot \left|1 + 2\left(\Delta v_1 + \Delta v_1^2\right) - \|\Delta v\|^2 \right|\\
        &\ge |\lambda_1|\cdot \left|1 + 2\left(-\|\Delta v\| + (-\|\Delta v\|)^2\right) - \|\Delta v\|^2 \right|\\
        &=|\lambda_1|\cdot \left|1 - 2\|\Delta v\| + \|\Delta v\|^2 \right|\\
        &\geq |\lambda_1| \cdot |1 - 2\epsilon + \epsilon^2|\\
        &= \|A\|(1-\epsilon)^2,
    \end{align*}
  where the first inequality holds because the $\lambda_i$s are
  decreasing in magnitude, and the last two inequalities hold since
  the functions $x + x^2$ and $2x + x^2$ are both increasing on
  $(-\frac12, \infty)$ and $\Delta v_1 \geq -\|\Delta v\| \geq
  -\epsilon \geq -\frac12$.
\end{proof}

We restate Lemma~\ref{lemma::eig_bound}.

\begin{lemma}
  There is a universal constant $c$ such that with probability at least $1 - 2e^{-n/c}$,
    \[
      \frac{1}{c}\sum_{i} \lambda_i - c\lambda_1 n
        \leq \mu_n(A) \leq \mu_1(A)
        \leq c\left(\sum_{i} \lambda_i + \lambda_1 n\right).
    \]
\end{lemma}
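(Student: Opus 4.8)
The plan is to prove both inequalities with an $\epsilon$-net argument, reducing each operator-norm statement to a concentration bound for the scalar $u^\top A u$ at a fixed unit vector $u$, which is controlled by the Bernstein-type bound of Corollary~\ref{cor:Bernstein}. The upper bound on $\mu_1(A)$ will then be a direct consequence of the net lemma (Lemma~\ref{lemma::norm_by_net}), while the lower bound on $\mu_n(A)$ will need a small extra argument and a careful choice of the mesh size; that lower bound is the only delicate point.

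First I would fix $u\in\Sc^{n-1}$ and set $w_i=z_i^\top u$. Since the coordinates of $z_i$ are independent, mean zero, $\sigma_x^2$-subgaussian with unit variance, the $w_i$ are independent, mean zero, $\sigma_x^2$-subgaussian with $\E w_i^2=\|u\|^2=1$, so $u^\top A u=\sum_i\lambda_i w_i^2$ has mean $\sum_i\lambda_i$. By Lemma~\ref{lemma:square of subgaussian} each $w_i^2-1$ is centered and $c\sigma_x^2$-subexponential, so Corollary~\ref{cor:Bernstein}, applied to the non-increasing sequence $\{\lambda_i\}$ and to $\xi_i=w_i^2-1$, gives for every $x>0$, with probability at least $1-2e^{-x}$,
\[
  \Bigl|\,u^\top A u-\textstyle\sum_i\lambda_i\,\Bigr|
    \le c\sigma_x^2\max\Bigl(x\lambda_1,\ \sqrt{x\textstyle\sum_i\lambda_i^2}\,\Bigr).
\]
Using $\sum_i\lambda_i^2\le\lambda_1\sum_i\lambda_i$ and $\sqrt{ab}\le\tfrac12(\mu a+b/\mu)$ with $\mu$ chosen to absorb the factor $c\sigma_x^2$ into the $\sum_i\lambda_i$ term, the right-hand side is at most $\tfrac12\sum_i\lambda_i+c'\sigma_x^2\,x\,\lambda_1$.

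Next I would take an $\epsilon$-net $\Nc_\epsilon$ of $\Sc^{n-1}$ with $|\Nc_\epsilon|\le(3/\epsilon)^n$, set $x=c''n$ with $c''$ large enough (depending on $\epsilon$ only through $\ln(1/\epsilon)$) that the union bound over $\Nc_\epsilon$ still leaves probability at least $1-2e^{-n/c}$, and conclude that, on this event, for all $v\in\Nc_\epsilon$,
\[
  \tfrac12\textstyle\sum_i\lambda_i-c_1 n\lambda_1\ \le\ v^\top A v\ \le\ \tfrac32\textstyle\sum_i\lambda_i+c_1 n\lambda_1 .
\]
The upper bound then follows at once from Lemma~\ref{lemma::norm_by_net}: $\mu_1(A)=\|A\|\le(1-\epsilon)^{-2}\max_{v\in\Nc_\epsilon}v^\top A v\le c_2(\sum_i\lambda_i+n\lambda_1)$. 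For the lower bound, since $A\succeq0$, any unit $u$ admits $v\in\Nc_\epsilon$ with $\|u-v\|\le\epsilon$, and
\[
  u^\top A u=v^\top A v+2v^\top A(u-v)+(u-v)^\top A(u-v)\ \ge\ v^\top A v-2\epsilon\|A\| ,
\]
whence $\mu_n(A)\ge\tfrac12\sum_i\lambda_i-c_1 n\lambda_1-2\epsilon c_2(\sum_i\lambda_i+n\lambda_1)$.

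The main obstacle is making this last bound nonvacuous: it is informative only if the coefficient $\tfrac12-2\epsilon c_2$ of $\sum_i\lambda_i$ is bounded away from $0$, yet $c_2$ itself grows with $1/\epsilon$ (it comes from the net cardinality and hence from $x=c''n$). The point that makes this work is that $c_2$ blows up only \emph{logarithmically} in $1/\epsilon$, so $\epsilon\,c_2(\epsilon)\to0$ as $\epsilon\to0$; hence there is a constant $\epsilon_0$, depending only on $\sigma_x$, with $2\epsilon_0 c_2(\epsilon_0)\le\tfrac14$. Fixing $\epsilon=\epsilon_0$ gives $\mu_n(A)\ge\tfrac14\sum_i\lambda_i-c_3 n\lambda_1$, and taking $c=\max\{c_2,4c_3,4\}$ yields the stated two-sided bound (with $c$ depending only on $\sigma_x$, consistent with the paper's convention). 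The version of Lemma~\ref{lemma::eig_bound} for $A_k$ used in the body follows verbatim by running the same argument with the index set $\{i:i>k\}$ in place of all $i$.
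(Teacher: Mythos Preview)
Your proof is correct, but the paper's route is a bit cleaner and sidesteps the ``obstacle'' you identify. The paper applies the $\epsilon$-net lemma not to $A$ but to the \emph{centered} symmetric matrix $A-(\sum_i\lambda_i)I_n$: the pointwise concentration bound you derived says $|v^\top(A-(\sum_i\lambda_i)I_n)v|$ is small on the net, and Lemma~\ref{lemma::norm_by_net} (which is stated for arbitrary symmetric matrices, not just PSD ones) then bounds $\|A-(\sum_i\lambda_i)I_n\|$ directly. This single operator-norm bound immediately yields both $\mu_1(A)\le\sum_i\lambda_i+\|A-(\sum_i\lambda_i)I_n\|$ and $\mu_n(A)\ge\sum_i\lambda_i-\|A-(\sum_i\lambda_i)I_n\|$, so there is no feedback of $\|A\|$ into the lower bound and one can simply take $\epsilon=1/4$. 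The AM--GM step is then done \emph{after} the net argument, with the splitting parameter chosen so that the coefficient of $\sum_i\lambda_i$ in the deviation is exactly small enough to be absorbed. Your two-stage argument (net lemma on $A$ for $\mu_1$, then $u^\top Au\ge v^\top Av-2\epsilon\|A\|$ for $\mu_n$) works because, as you observe, the net-induced constant grows only logarithmically in $1/\epsilon$; but centering makes that whole discussion unnecessary.
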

\begin{proof}
  For a fixed vector $v \in \R^n$, Proposition~2.6.1 from
  \cite{vershynin_2018} implies that for some constant $c_1$ and any
  $i$ the random variable $v^\top z_i$ is
  $c_1\|v\|^2\sigma_x^2$-subgaussian. Thus, for any fixed unit vector
  $v$, as $v^\top A v = \sum_{i} \lambda_i(v^\top z_i)^2$,
  Lemma~\ref{lemma:square of subgaussian} and
  Corollary~\ref{cor:Bernstein} imply that for some constant $c_2$ with
  probability at least $1 - 2e^{-t}$,
    \[
      \left|v^\top A v - \sum \lambda_i\right|
        \leq c_2\sigma_x^2\max\left(\lambda_1 t, \sqrt{t\sum \lambda_i^2}\right).
    \]
  Let $\Nc$ be a $\frac14$-net on the sphere $\Sc^{n-1}$ with respect to the
  Euclidean distance such that $|\Nc|\leq 9^n$.
  Applying the union bound over the elements of $\Nc$, we see that
  with probability $1 - 2e^{-t}$, every $v \in \Nc$ satisfies
   \begin{align*}
    &  \left|v^\top A v - \sum \lambda_i\right| 
        \leq c_2\sigma_x^2\max\left(\lambda_1 (t + n\ln 9), \sqrt{(t + n\ln 9)\sum_{i} \lambda_i^2}\right).
   \end{align*}
  Since $\Nc$ is a $\frac14$-net, by Lemma~\ref{lemma::norm_by_net},
  we need to multiply the quantity above by $(1 - 1/4)^{-2}$ to
  get the bound on the norm of the $A - I_n\sum_{i} \lambda_i$. Denote
  \[
  \Diamond = \left(\lambda_1 (t + n\ln 9)
  + \sqrt{(t + n\ln 9)\sum_{i} \lambda_i^2}\right).
  \]
  Thus, with probability at least $1 - 2e^{-t}$,
  \[
  \left\|A - I_n\sum_i \lambda_i\right\| \leq c_3\sigma_x^2\Diamond.
  \]
  When $t<n/c_4$ we can write $t+n\ln 9\le c_5n$, and we have
    \begin{align*}
      \Diamond
        & \le c_5\left(\lambda_1 n + \sqrt{n\sum_i \lambda_i^2}\right) \\
        & \le c_5\lambda_1 n
          + \sqrt{\left(c_5^2\lambda_1 n\right)
          \sum_i \lambda_i} \\
        & \le c_6\sigma_x^2\lambda_1 n + \frac{1}{2c_3\sigma_x^2}\sum_i \lambda_i,
    \end{align*}
  by the AMGM inequality. (Recall that $c_1,c_2,\ldots$ denote universal
  constants with value at least $1$, and $\sigma_x \geq 1/c_7$ is the
  subgaussian constant of a random variable with unit variance.)
\end{proof}

\section{Proof of Lemma~\ref{lemma:singletermlower}}
\label{appendix:singletermlower}

  Fix $i\ge 1$ with $\lambda_i > 0$
  and $0\le k\le n/c$. By Lemma~\ref{lemma::eigvals_of_truncated},
  with probability at least 
$1-2e^{-n/c_1}$,
    \[
      \mu_{k+1}(A_{-i})
      \le c_1\left(\sum_{j>k} \lambda_j + \lambda_{k+1}n\right),
    \]
  and hence
    \begin{align*}
      z_i^\top A_{-i}^{-1}z_i
        &\geq \frac{\|\Pi_{\Ls_i} z_i\|^2}
          {c_1\left( \sum_{j >k} \lambda_j + \lambda_{k+1}n\right)}.
    \end{align*}
  By Corollary~\ref{cor::cor norm of projection},
  with probability at least $1-3e^{-t}$,
    \[
      \|\Pi_{\Ls_i} z_i\|^2 \geq n-a\sigma_x^2(k + t + \sqrt{tn})\ge n/c_2,
    \]
    provided that $t < n/c_0$ and $c > c_0$ for some sufficiently large $c_0$.
  Thus, with probability at least 
$1-5e^{-n/c_3}$,
    \begin{align*}
      z_i^\top A_{-i}^{-1}z_i
      \geq \frac{ n} { c_3\left(\sum_{j >k} \lambda_j + \lambda_{k+1}n\right)},
    \end{align*}
  hence
    \begin{align*}
      1 + \lambda_i z_i^\top A_{-i}^{-1}z_i 
  \leq \left(\frac{c_3\left(\sum_{j >k} \lambda_j + \lambda_{k+1}n\right)}
          {\lambda_in} + 1\right)
          \lambda_i z_i^\top A_{-i}^{-1}z_i.
    \end{align*}
  Dividing $\lambda_i^2z_i^\top  A_{-i}^{-2}z_i$ by the square of both
  sides, we have
    \begin{align*}
      \frac{\lambda_i^2 z_i^\top  A_{-i}^{-2}z_i}
          {(1 + \lambda_i z_i^\top A_{-i}^{-1}z_i)^2}
        & \geq \left(\frac{c_3\left(\sum_{j >k} \lambda_j + \lambda_{k+1}n\right)}
          {\lambda_in} + 1\right)^{-2}
          \frac{z_i^\top A_{-i}^{-2}z_i}{(z_i^\top A_{-i}^{-1}z_i)^2}.
    \end{align*}
  Also, from the Cauchy-Schwarz inequality 
  and Corollary~\ref{cor::cor norm of projection} again,
  we have that 
  on the same event,
    \begin{align*}
      \frac{ z_i^\top  A_{-i}^{-2}z_i}{( z_i^\top A_{-i}^{-1}z_i)^2}
        &\geq \frac{ z_i^\top
        A_{-i}^{-2}z_i}{\left\|A_{-i}^{-1}z_i\right\|^2\|z_i\|^2} \\
        & = \frac{1}{\|z_i\|^2}
        \geq\frac{1}{n + a\sigma_x^2(t + \sqrt{nt})}
        \geq\frac{1}{c_4 n}.
    \end{align*}
  Choosing $c$ suitably large gives the lemma.

\section{Proof of Lemma~\ref{lemma::sum_of_pos}}
\label{appendix:sum_of_pos}
  We know that, for all $i\le n$,
    $ \Pbb(\eta_i > t_i) \geq 1 - \delta$.
  Consider the following event: 
    \[
      E = \left\{\sum_{i=1}^n \eta_i < \frac12\sum_{i=1}^n
      t_i\right\},
    \]
  and denote its probability as $ c\delta$ for some $c \in (0,\delta^{-1})$. 
  On the one hand, by the definition of the event, we have
    \[
      \frac{1}{\Pbb(E)}\E\left[\ind_E \sum_{i=1}^n \eta_i\right] \leq \frac12\sum_{i=1}^n t_i.
    \]
  On the other hand, note that for any $i$,
    \begin{align*}
      \E[\eta_i \ind_{E}] \geq& \E[ t_i\ind_{\{\eta_i\geq t_i\} \cap E} ]\\
        &= t_i \Pbb(\{\eta_i\geq t_i\} \cap E)\\
        &\geq t_i( \Pbb\{\eta_i\geq t_i\} +\Pbb(E) -1)\\
        &\geq t_i(c-1)\delta.
    \end{align*}
  So
    \begin{align*}
      \E\left[\ind_E \sum_{i=1}^n \eta_i\right]
        & \geq (c-1)\delta\sum_{i=1}^n t_i, \\
      \frac{1}{\Pbb(E)}\E\left[\ind_E \sum_{i=1}^n \eta_i\right]
        & \geq (1-c^{-1})\sum_{i=1}^n t_i.
    \end{align*}
  Thus, we obtain
    \begin{gather*}
      \frac12\sum_{i=1}^n t_i \geq (1-c^{-1})\sum_{i=1}^n t_i, \\
      c \leq 2, \\
      \Pbb\left(\sum_{i=1}^n\eta_i < \frac12\sum_{i=1}^n t_i\right) = c\delta \leq 2\delta.
    \end{gather*}

\section{Proof of Lemma~\ref{lemma:bestk}}
\label{appendix:bestk}

  We can write the function of $l$ being minimized as
    \begin{align*}
      \frac{l}{bn} + \frac{bn\sum_{i>l}\lambda_i^2}
      {\left(\lambda_{k^*+1}r_{k^*}(\Sigma)\right)^2}
        & = \sum_{i=1}^l \frac{1}{bn} +\sum_{i>l}
          \frac{bn\lambda_i^2}
          {\left(\lambda_{k^*+1}r_{k^*}(\Sigma)\right)^2} \\
        & \ge \sum_{i=1}^{k^*} \min\left\{\frac{1}{bn},
          \frac{bn\lambda_i^2}
          {\left(\lambda_{k^*+1}r_{k^*}(\Sigma)\right)^2}\right\} \\*
        & \qquad {}
          + \sum_{i>k^*} \frac{bn\lambda_i^2}
          {\left(\lambda_{k^*+1}r_{k^*}(\Sigma)\right)^2} \\
        & = \sum_{i=1}^{l^*} \frac{1}{bn} +\sum_{i>l^*}
         \frac{bn\lambda_i^2}
        {\left(\lambda_{k^*+1}r_{k^*}(\Sigma)\right)^2},
    \end{align*}
  where $l^*$ is the largest value of $i\le k^*$ for which 
    \[
      \frac{1}{bn} \le
         \frac{bn\lambda_i^2}
        {\left(\lambda_{k^*+1}r_{k^*}(\Sigma)\right)^2},
    \]
  since the $\lambda_i^2$ are non-increasing.
  This condition 
  holds iff
    \begin{align*}
      \lambda_i
        \ge \frac{\lambda_{k^*+1}r_{k^*}(\Sigma)}{bn}.
    \end{align*}
  The definition of $k^*$ implies $r_{k^*-1}(\Sigma)<bn$. So we can write
    \begin{align*}
      r_{k^*}(\Sigma)
        & = \frac{\sum_{i>k^*}\lambda_i}{\lambda_{k^*+1}} \\
        & = \frac{\sum_{i>k^*-1}\lambda_i-\lambda_{k^*}}{\lambda_{k^*+1}} \\
        & = \frac{\lambda_{k^*}}{\lambda_{k^*+1}} (r_{k^*-1}(\Sigma) -1) \\
        & < \frac{\lambda_{k^*}}{\lambda_{k^*+1}}(bn-1),
    \end{align*}
  and so
  the minimizing $l$ is $k^*$. Also,
    \begin{align*}
      \frac{\sum_{i>k^*}\lambda_i^2} {\left(\lambda_{k^*+1}r_{k^*}(\Sigma)\right)^2}
        & = \frac{\sum_{i>k^*}\lambda_i^2}{\left(\sum_{i>k^*}\lambda_i\right)^2}
        = \frac{1}{R_{k^*}(\Sigma)}.
    \end{align*}

\section{Eigenvalue monotonicity}

Recall (half of) the Courant-Fischer-Weyl theorem.
\begin{lemma}
\label{lemma:CFT}
For any symmetric $n \times n$ matrix $A$, and any $i \in [n]$, 
$\mu_i(A)$ is 
the minimum, over all subspaces $U$ of $\R^n$
of dimension $n - i$, of the maximum, over all unit-length
$u \in U$, of $u^\top A u$.
\end{lemma}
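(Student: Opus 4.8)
The plan is to prove the two matching inequalities whose combination is the asserted min--max identity, after reducing to the diagonal case. Fix a symmetric $A$ and write its spectral decomposition $A=\sum_{j=1}^{n}\mu_j(A)\,v_jv_j^{\top}$ with an orthonormal eigenbasis $v_1,\dots,v_n$ ordered so that $\mu_1(A)\ge\mu_2(A)\ge\cdots\ge\mu_n(A)$. For a unit vector $u=\sum_j c_j v_j$ one has $\sum_j c_j^2=1$ and $u^{\top}Au=\sum_j \mu_j(A)\,c_j^2$, a convex combination of the eigenvalues; in particular $\mu_n(A)\le u^{\top}Au\le\mu_1(A)$. This Rayleigh-quotient identity is the only structural input about $A$; the rest is a dimension count. (With the ordering above, the half of Courant--Fischer--Weyl being recalled is that the $i$-th largest eigenvalue equals the minimum, over subspaces $U$ of dimension $n-i+1$, of $\max_{u\in U,\ \|u\|=1}u^{\top}Au$; I take the admissible dimension to be $n-i+1$, equivalently codimension $i-1$, so that the index $i$ and the convention $\mu_1\ge\cdots\ge\mu_n$ are consistent.)

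For the inequality ``$\min\max\le\mu_i(A)$'', I would exhibit one witness subspace: let $U_0=\mathrm{span}\{v_i,v_{i+1},\dots,v_n\}$, of the prescribed dimension. For every unit $u\in U_0$ only the coefficients $c_j$ with $j\ge i$ are nonzero, so $u^{\top}Au=\sum_{j\ge i}\mu_j(A)c_j^2\le\mu_i(A)\sum_{j\ge i}c_j^2=\mu_i(A)$. Hence $\max_{u\in U_0,\ \|u\|=1}u^{\top}Au\le\mu_i(A)$, and therefore the minimum of this quantity over all admissible subspaces is at most $\mu_i(A)$.

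For the reverse inequality ``$\min\max\ge\mu_i(A)$'', the point is that every admissible subspace is forced to meet the top eigenspace. Fix any admissible subspace $U$ and set $W=\mathrm{span}\{v_1,\dots,v_i\}$, of dimension $i$. Since $\dim U+\dim W=n+1>n$, the intersection $U\cap W$ contains a unit vector $u$. As $u\in W$, only the coefficients $c_j$ with $j\le i$ are nonzero, so $u^{\top}Au=\sum_{j\le i}\mu_j(A)c_j^2\ge\mu_i(A)\sum_{j\le i}c_j^2=\mu_i(A)$. Thus for every admissible $U$ we have $\max_{u\in U,\ \|u\|=1}u^{\top}Au\ge\mu_i(A)$, so the minimum over such $U$ is at least $\mu_i(A)$. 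Combining the two bounds yields the identity.

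I expect the only delicate point to be the dimension bookkeeping: one must make the witness subspace $U_0$ and the ``top'' subspace $W$ have dimensions $n-i+1$ and $i$ respectively, so that $\dim U_0$ matches the dimension appearing in the statement and $\dim U+\dim W\ge n+1$ for every admissible $U$, which is exactly what makes the intersection in the second step nonempty. Repeated eigenvalues create no difficulty, since the two Rayleigh-quotient estimates use only $\mu_j(A)\le\mu_i(A)$ for $j\ge i$ and $\mu_j(A)\ge\mu_i(A)$ for $j\le i$, both of which hold regardless of multiplicities. There is no probabilistic content and nothing beyond finite-dimensional linear algebra is required.
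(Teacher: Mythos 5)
Your proof is correct, and you have also spotted a genuine off-by-one error in the paper's statement. With the paper's convention $\mu_1(A)\ge\cdots\ge\mu_n(A)$ (eigenvalues in descending order), the Courant--Fischer min--max characterization of the $i$-th largest eigenvalue is a minimum over subspaces of dimension $n-i+1$ (equivalently codimension $i-1$), not dimension $n-i$ as written. Your sanity check is easy to confirm: for $i=1$, the only subspace of dimension $n$ is $\R^n$ itself and the max of the Rayleigh quotient is $\mu_1(A)$, whereas a hyperplane of dimension $n-1$ can be chosen orthogonal to $v_1$ and then the max is only $\mu_2(A)$. Happily the off-by-one is harmless where the lemma is used (Lemma~\ref{lemma:monotone}), since the monotonicity argument there only needs that the same fixed dimension appears for both $A$ and $B$.

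The paper itself offers no proof of Lemma~\ref{lemma:CFT}; it is prefaced only by ``Recall (half of) the Courant--Fischer--Weyl theorem.'' So there is no internal proof to compare against, but your argument is the standard one: the witness subspace $U_0=\mathrm{span}\{v_i,\dots,v_n\}$ gives $\min\max\le\mu_i(A)$, and the dimension count $\dim U+\dim W>n$ with $W=\mathrm{span}\{v_1,\dots,v_i\}$ forces a nonzero intersection and gives $\min\max\ge\mu_i(A)$. Both steps correctly reduce to the Rayleigh-quotient observation $u^{\top}Au=\sum_j\mu_j(A)c_j^2$ and the monotone ordering of the eigenvalues, and as you note multiplicities cause no trouble. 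Everything checks out.
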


\begin{lemma}[Monotonicity of eigenvalues]\label{lemma:monotone}
If symmetric matrices $A$ and $B$
satisfy $A\preceq B$, then, for any $i \in [n]$, we have
$\mu_i(A)\le\mu_i(B)$.
\end{lemma}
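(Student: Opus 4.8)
The plan is to obtain this as an immediate consequence of the variational (min--max) characterization of eigenvalues recorded in Lemma~\ref{lemma:CFT}. First I would unwind the meaning of the Loewner order: the hypothesis $A \preceq B$ says precisely that $u^\top A u \le u^\top B u$ for every $u \in \R^n$, and in particular for every unit-length $u$. This pointwise inequality between the two quadratic forms is the only structural input needed.

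Next, fix $i \in [n]$ and let $U$ range over all subspaces of $\R^n$ of dimension $n-i$. For each fixed such $U$, the pointwise inequality above gives $\max_{u \in U,\ \|u\|=1} u^\top A u \le \max_{u \in U,\ \|u\|=1} u^\top B u$, since the maximum over a set of a function that lies everywhere below another function stays below. Then I would take the minimum over all subspaces $U$ of dimension $n-i$ on both sides; this preserves the inequality because it holds for each $U$ separately. Applying Lemma~\ref{lemma:CFT} to the left side (with $A$) and to the right side (with $B$) identifies these two minima as $\mu_i(A)$ and $\mu_i(B)$ respectively, giving $\mu_i(A) \le \mu_i(B)$.

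I do not anticipate any real obstacle: the entire argument is a single monotonicity observation carried inside the min--max formula. The one point requiring minor care is simply to line up the indexing convention used in Lemma~\ref{lemma:CFT} (the $i$-th largest eigenvalue corresponds to subspaces of dimension $n-i$) so that the same family of subspaces appears on both sides of the inequality; once that is matched, the proof is essentially one line.
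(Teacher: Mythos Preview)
Your proposal is correct and follows essentially the same approach as the paper: both derive the inequality directly from the min--max characterization in Lemma~\ref{lemma:CFT} together with the pointwise inequality $u^\top A u \le u^\top B u$ coming from $A \preceq B$. The only cosmetic difference is that the paper works with a specific minimizing subspace for $B$ and chains inequalities through it, whereas you take the minimum over all subspaces simultaneously on both sides; the underlying idea is identical.
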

\begin{proof}
Let $U$ be the subspace of $\R^n$ of dimension $n-i$
that minimizes the maximum
over all unit-length $u \in U$, of $u^\top A u$, and
let $V$ be the analogous subspace for $B$.
We have
\begin{align*}
\mu_i(A) 
 & = \max_{u \in U: || u || = 1} u^\top A u 
          \;\;\;\mbox{(by Lemma~\ref{lemma:CFT})}
            \\
 & \leq \max_{v \in V: || v || = 1} v^\top A v 
       \;\;\;\mbox{(since $U$ is the minimizer)} \\
 & \leq \max_{v \in V: || v || = 1} v^\top B v 
       \;\;\;\mbox{(since $A \preceq B$)} \\
 & = \mu_i(B),
\end{align*}
by Lemma~\ref{lemma:CFT}, completing the proof.
\end{proof}

\section{Rank facts}\label{appendix:rankfacts}

The quantity $r_0(\Sigma)$ is an important complexity parameter
for covariance estimation problems, where it has been called the
`effective rank'~\cite{v-inaarm-12,koltchinskii2017}.  Earlier,
$r_0(\Sigma^2)$ was called the `stable rank'~\cite{rv-hwisgc-13}
and the `numerical rank'~\cite{rv-sflmagfa-07}, although that
term has a different meaning in computational linear
algebra~\cite[p261]{gvl-mc-13}.

We restate Lemma~\ref{lemma:rkRkordering}.
\begin{lemma}
  $r_k(\Sigma)\ge 1$, $r^2_k(\Sigma)=r_k(\Sigma^2)R_k(\Sigma)$, and
  $ r_k(\Sigma^2) \le r_k(\Sigma) \le R_k(\Sigma) \le r_k^2(\Sigma)$.
\end{lemma}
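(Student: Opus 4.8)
The plan is to reduce everything to elementary algebra on the two partial sums $S_1 := \sum_{i>k}\lambda_i$ and $S_2 := \sum_{i>k}\lambda_i^2$, using only that the $\lambda_i$ are non-negative and non-increasing and that $\lambda_{k+1}>0$ (so $S_1,S_2>0$ and every ratio is well defined; these are exactly the standing hypotheses of Definition~\ref{def:ranks}). First I would observe that the eigenvalues of $\Sigma^2$ are precisely the $\lambda_i^2$, so that $r_k(\Sigma^2) = S_2/\lambda_{k+1}^2$, while $r_k(\Sigma) = S_1/\lambda_{k+1}$ and $R_k(\Sigma) = S_1^2/S_2$.

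With this notation, $r_k(\Sigma)\ge 1$ is immediate: $S_1$ contains the summand $\lambda_{k+1}$ and all summands are non-negative, so $S_1\ge\lambda_{k+1}$. The identity $r_k^2(\Sigma) = r_k(\Sigma^2)\,R_k(\Sigma)$ is a one-line check: $(S_1/\lambda_{k+1})^2 = (S_2/\lambda_{k+1}^2)(S_1^2/S_2)$.

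The three-term chain $r_k(\Sigma^2)\le r_k(\Sigma)\le R_k(\Sigma)\le r_k^2(\Sigma)$ all follows from one inequality. Since the sequence is non-increasing, $\lambda_i\le\lambda_{k+1}$ for every $i>k$; multiplying by $\lambda_i\ge 0$ and summing gives $S_2\le\lambda_{k+1}S_1$. Dividing by $\lambda_{k+1}^2$ yields $r_k(\Sigma^2)\le r_k(\Sigma)$, and since $S_2\le\lambda_{k+1}S_1$ implies $S_1^2/S_2\ge S_1^2/(\lambda_{k+1}S_1)=S_1/\lambda_{k+1}$ we get $R_k(\Sigma)\ge r_k(\Sigma)$. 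For the last inequality, note $S_2\ge\lambda_{k+1}^2$ (again because $S_2$ contains that term), so $R_k(\Sigma)=S_1^2/S_2\le S_1^2/\lambda_{k+1}^2=r_k^2(\Sigma)$; equivalently, combine the product identity with $r_k(\Sigma^2)\ge 1$, which is the first claim applied to $\Sigma^2$ in place of $\Sigma$.

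There is no genuine obstacle here; the only point requiring a little care is that all denominators are strictly positive and all series converge, which is precisely what Definition~\ref{def:ranks} assumes. The appendix proof presumably records exactly these steps, and (as the excerpt hints) may append the companion facts $R_k(\Sigma)\ge 1$ and, when only finitely many $\lambda_i$ with $i>k$ are nonzero, $R_k(\Sigma)\le\left|\{i>k:\lambda_i>0\}\right|$ by Cauchy--Schwarz.
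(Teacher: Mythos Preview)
Your proposal is correct and follows essentially the same route as the paper's proof: both treat $r_k(\Sigma)\ge 1$ and the product identity as immediate from the definitions, derive $r_k(\Sigma^2)\le r_k(\Sigma)$ from $\lambda_i\le\lambda_{k+1}$ for $i>k$, and obtain the remaining two inequalities by combining these with the identity (the paper gets $R_k\le r_k^2$ from $r_k(\Sigma^2)\ge 1$ and the identity, and $r_k\le R_k$ by substituting $r_k(\Sigma^2)\le r_k(\Sigma)$ into the identity --- exactly the alternatives you mention). Your introduction of $S_1,S_2$ is a cosmetic convenience, not a different argument.
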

\begin{proof}
  The first inequality and the equality are immediate from the
  definitions. Together they imply $R_k(\Sigma) \le r_k^2(\Sigma)$.
  For the second inequality,
  \begin{align*}
    r_k(\Sigma^2)
    & = \frac{\sum_{i>k}\lambda_i^2}{\lambda_{k+1}^2}
    \le \frac{\lambda_{k+1}\sum_{i>k}\lambda_i}{\lambda_{k+1}^2}
    = r_{k}(\Sigma).
  \end{align*}
  Substituting this in the equality implies
  $r_k(\Sigma) \le R_k(\Sigma)$.
\end{proof}

\begin{lemma}\label{lemma:monotonicbound}
Writing $r_k$ and $R_k$ for $r_k(\Sigma)$ and $R_k(\Sigma)$,
  \[
    \frac{1}{R_{k+1}} = \frac{\frac{1}{R_k}-\frac{1}{r_k^2}}
      {1-\left(2-\frac{1}{r_k}\right)\frac{1}{r_k}}.
  \]
  Thus, the function $\phi(k)= k/(b^2n) + n/R_k$
  satisfies the monotonicity property $\phi(k+1)>\phi(k)$ whenever
  $r_k>bn\ge 1$.
\end{lemma}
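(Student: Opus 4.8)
The plan is to prove the displayed identity by a direct computation with the tail partial sums of the eigenvalues, and then to obtain the monotonicity claim as a short consequence of that identity together with the ordering $R_k \le r_k^2$ from Lemma~\ref{lemma:rkRkordering}.

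For the identity, I would introduce $S_k = \sum_{i>k}\lambda_i$ and $Q_k = \sum_{i>k}\lambda_i^2$, so that $r_k = S_k/\lambda_{k+1}$, $1/R_k = Q_k/S_k^2$, and $\lambda_{k+1}/S_k = 1/r_k$. Then $S_{k+1} = S_k - \lambda_{k+1}$ and $Q_{k+1} = Q_k - \lambda_{k+1}^2$, and dividing the numerator and denominator of $Q_{k+1}/S_{k+1}^2$ by $S_k^2$ gives
\[
  \frac{1}{R_{k+1}}
    = \frac{Q_k/S_k^2 - (\lambda_{k+1}/S_k)^2}{\left(1 - \lambda_{k+1}/S_k\right)^2}
    = \frac{1/R_k - 1/r_k^2}{\left(1 - 1/r_k\right)^2},
\]
and expanding $(1-1/r_k)^2 = 1 - (2 - 1/r_k)/r_k$ puts this in the claimed form. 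I would also record that the right-hand side is meaningful: since $r_k > bn \ge 1$ and $b > 1$ we have $\sum_{i>k+1}\lambda_i = S_k - \lambda_{k+1} > 0$, so $\lambda_{k+2} > 0$ and $R_{k+1}$ is well defined.

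For the monotonicity, write $\rho = 1/r_k \in (0,1)$ and subtract: the identity gives $1/R_k - 1/R_{k+1} = (\rho^2 - (2\rho - \rho^2)/R_k)/(1-\rho)^2$. The coefficient $2\rho - \rho^2 = \rho(2-\rho)$ is nonnegative because $r_k \ge 1$, and the crucial step is to bound $1/R_k$ from below using $R_k \le r_k^2$, i.e.\ $1/R_k \ge \rho^2$; substituting this into the subtracted term makes the numerator at most $\rho^2 - \rho^2(2\rho - \rho^2) = \rho^2(1-\rho)^2$, hence $1/R_k - 1/R_{k+1} \le \rho^2 = 1/r_k^2 < 1/(b^2 n^2)$ since $r_k > bn$. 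Therefore $\phi(k+1) - \phi(k) = 1/(b^2 n) + n(1/R_{k+1} - 1/R_k) \ge 1/(b^2 n) - n/r_k^2 > 0$, which is the claim.

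I do not expect a genuine obstacle: the computations are short. The only point requiring care is the direction of the estimate — to \emph{upper} bound $1/R_k - 1/R_{k+1}$ one needs a \emph{lower} bound on the subtracted quantity $(2\rho - \rho^2)/R_k$, and this is exactly where $R_k \le r_k^2$ enters. If instead $1/R_k \le 1/R_{k+1}$, the bound $1/R_k - 1/R_{k+1} < 1/(b^2 n^2)$ holds trivially, so no case analysis beyond this remark is needed.
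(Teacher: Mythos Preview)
Your argument is correct and follows essentially the same route as the paper: both derive the identity by elementary algebra on the tail sums and then deduce $\tfrac{1}{R_k}-\tfrac{1}{R_{k+1}}\le \tfrac{1}{r_k^2}$, whence $\phi(k+1)-\phi(k)\ge \tfrac{1}{b^2n}-\tfrac{n}{r_k^2}>0$ under $r_k>bn$. One small simplification the paper makes in the monotonicity step: once you have $\tfrac{1}{R_{k+1}}=\dfrac{1/R_k-1/r_k^2}{(1-1/r_k)^2}$ with denominator in $(0,1)$ (and numerator nonnegative, which is forced since $1/R_{k+1}\ge 0$), you immediately get $\tfrac{1}{R_{k+1}}>\tfrac{1}{R_k}-\tfrac{1}{r_k^2}$ without the explicit appeal to $R_k\le r_k^2$ or the extra algebra in your subtraction; your final ``trivial case'' remark is then also unnecessary.
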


\begin{proof}
Writing
  \begin{align*}
    q = \sum_{i>k+1}\lambda_i^2,\;\; s = \sum_{i>k+1}\lambda_i,
  \end{align*}
so that $R_{k+1}=s^2/q$, we have
  \begin{align*}
      \frac{1}{R_k} - \frac{1}{R_{k+1}}
        & = \frac{\lambda_{k+1}^2 +q}{\left(\lambda_{k+1}+s\right)^2}
          - \frac{q}{s^2} \\
        & = \frac{\left(\lambda_{k+1}^2 +q\right)s^2 - q\left(\lambda_{k+1}+s\right)^2}
          {s^2\left(\lambda_{k+1}+s\right)^2} \\
        & = \frac{1}{r_k^2} - \frac{q\lambda_{k+1}\left(\lambda_{k+1}+2s\right)}
          {s^2\left(\lambda_{k+1}+s\right)^2} \\
        & = \frac{1}{r_k^2} - \frac{2\left(\lambda_{k+1}+s\right)-\lambda_{k+1}}
          {R_{k+1}r_k \left(\lambda_{k+1}+s\right)} \\
        & = \frac{1}{r_k^2} - \frac{2-1/r_k}{R_{k+1}r_k}.
  \end{align*}
Hence
  \begin{align*}
    \frac{1}{R_{k+1}}
      & = \frac{1/R_k - 1/r_k^2}
        {1-\left(2-\frac{1}{r_k}\right)\frac{1}{r_k}}.
  \end{align*}
Since
$r_k>1$, $0<1-\left(2-1/r_k\right)/r_k<1$, so
  \begin{align*}
      \frac{n}{R_{k+1}}
        & > \frac{n}{R_k} - \frac{n}{r^2_k},
  \end{align*}
and if $r_k>bn$,
  \begin{align*}
    \phi(k+1)-\phi(k)
      & = \frac{k+1}{b^2n} + \frac{n}{R_{k+1}}
        - \left(\frac{k}{b^2n} + \frac{n}{R_k}\right) \\
      & > \frac{1}{b^2n} - \frac{n}{r^2_k}\\
      & >0.
  \end{align*}
\end{proof}

\section{Conditions on eigenvalues}\label{section:eigenproof}

In this section, we prove the following expanded version of
Theorem~\ref{theorem:benign_eigenvalues}.
  \begin{theorem}\label{theorem:benign_eigenvalues_expanded}
    Define
     $\lambda_{k,n}:= \mu_k(\Sigma_n)$ for all $k,n$.
    \begin{enumerate}
      \item\label{eigenexample_expanded:infdim}
        If $\lambda_{k,n} = k^{-\alpha} \ln^{-\beta} (k+1)$, then
        $\Sigma_n$ is benign iff $\alpha=1$ and
        $\beta > 1$.
      \item\label{eigenexample_expanded:exponent}
        If $\lambda_{k,n} = k^{-(1+\alpha_n)}$, then
        $\Sigma_n$ is benign iff $\omega(1/n)=\alpha_n=o(1)$.
        Furthermore,
          \[
            R(\hat\theta)
              = \Theta\left(\min\left\{\frac{1}{\alpha_nn}
              +\alpha_n,1\right\}\right).
          \]
      \item\label{eigenexample_expanded:finitepoly}
        If
          \[
            \lambda_{k,n} = \begin{cases}
              k^{-\alpha} & \text{if $k\le p_n$,} \\
              0   & \text{otherwise,}
              \end{cases}
          \]
        then $\Sigma_n$ is benign iff either $0<\alpha<1$,
        $p_n=\omega(n)$ and $p_n=o\left(n^{1/(1-\alpha)}\right)$ or
        $\alpha=1$, $p_n = e^{\omega(\sqrt{n})}$ and  $p_n = e^{o(n)}$.
      \item\label{eigenexample_expanded:expplusconst}
        If
          \[
            \lambda_{k,n} = \begin{cases}
              \gamma_k + \epsilon_n & \text{if $k\le p_n$,} \\
              0   & \text{otherwise,}
              \end{cases}
          \]
        and $\gamma_k=\Theta(\exp(-k/\tau))$,
        then $\Sigma_n$ is benign iff $p_n=\omega(n)$ and
        $ne^{-o(n)}=\epsilon_np_n=o(n)$.
        Furthermore, for $p_n=\Omega(n)$ and
        $\epsilon_np_n=ne^{-o(n)}$,
          \[
            R(\hat\theta)
              = O\left(\frac{\epsilon_np_n+1}{n}
              + \frac{\ln(n/(\epsilon_np_n))}{n}
              + \max\left\{\frac{1}{n},\frac{n}{p_n}\right\}\right).
          \]
    \end{enumerate}
  \end{theorem}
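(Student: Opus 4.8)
The plan is to reduce each of the four claims to an essentially elementary asymptotic analysis of the three quantities $r_0(\Sigma_n)$, $k_n^*$, and $R_{k_n^*}(\Sigma_n)$, since by definition $\Sigma_n$ is \emph{benign} exactly when $r_0(\Sigma_n)/n\to0$, $k_n^*/n\to0$, and $n/R_{k_n^*}(\Sigma_n)\to0$. In every case the eigenvalues $\lambda_{k,n}$ are given by an explicit non-increasing formula, so $\sum_{i>k}\lambda_{i,n}$ and $\sum_{i>k}\lambda_{i,n}^2$ can be estimated to within constant factors by the corresponding integrals (or, when a geometric factor is present, by geometric-series bounds), and then $r_k(\Sigma_n)=\sum_{i>k}\lambda_{i,n}/\lambda_{k+1,n}$, $k_n^*=\min\{k:r_k(\Sigma_n)\ge bn\}$, and $R_{k_n^*}(\Sigma_n)=(\sum_{i>k_n^*}\lambda_{i,n})^2/\sum_{i>k_n^*}\lambda_{i,n}^2$ follow by bookkeeping. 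Lemma~\ref{lemma:rkRkordering} (to move between $r_k$ and $R_k$) and Lemma~\ref{lemma:bestk} (which already tells us $\frac{k^*}{n}+\frac{n}{R_{k^*}}$ is the right order of $\mathrm{tr}(C)$) are the convenient tools for this translation.

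For part~\ref{eigenexample_expanded:infdim} I would first observe that finiteness of $r_0$ requires $\sum_i\lambda_i<\infty$, which for $\lambda_k=k^{-\alpha}\ln^{-\beta}(k+1)$ holds iff $\alpha>1$, or $\alpha=1$ and $\beta>1$. When $\alpha>1$, integral comparison gives $r_k(\Sigma)\asymp k$, so $k^*\asymp n$ and $k^*/n\not\to0$: not benign. When $\alpha=1$, $\beta>1$, the tail is $\sum_{i>k}i^{-1}\ln^{-\beta}(i+1)\asymp\ln^{1-\beta}k$, whence $r_k(\Sigma)\asymp k\ln k$ (so $k^*\asymp n/\ln n$, $k^*/n\to0$) and $\sum_{i>k}\lambda_i^2\asymp k^{-1}\ln^{-2\beta}k$ gives $R_k(\Sigma)\asymp k\ln^2 k$ (so $R_{k^*}\asymp n\ln n$, $n/R_{k^*}\to0$); and $r_0$ is a finite constant, so $r_0/n\to0$. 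Hence benign iff $\alpha=1$, $\beta>1$. Part~\ref{eigenexample_expanded:exponent} is the same computation with an $n$-dependent exponent: $r_0\asymp 1/\alpha_n$, $r_k\asymp k/\alpha_n$ so $k^*\asymp\alpha_n n$, and $R_k\asymp k/\alpha_n^2$ so $n/R_{k^*}\asymp\alpha_n$; the three conditions collapse to $\alpha_n n\to\infty$ and $\alpha_n\to0$, i.e.\ $\omega(1/n)=\alpha_n=o(1)$. For the truncated power law (part~\ref{eigenexample_expanded:finitepoly}) one must additionally track whether the truncation level $p_n$ dominates each sum: for $0<\alpha<1$ this yields $r_0\asymp p_n^{1-\alpha}$ (forcing $p_n=o(n^{1/(1-\alpha)})$) and $R_{k^*}\asymp p_n$ up to an $n,p_n$-power (forcing $p_n=\omega(n)$); the case $\alpha=1$ is the most delicate, as the relevant sums differ only by logarithmic factors and one has to solve the fixed-point equation $r_{k^*}(\Sigma_n)\asymp bn$ carefully.

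Part~\ref{eigenexample_expanded:expplusconst} is structurally different because the spectrum has two scales: $\lambda_{k,n}\asymp\gamma_k$ (exponentially decaying) as long as $\gamma_k\gg\epsilon_n$, and $\lambda_{k,n}\asymp\epsilon_n$ (essentially flat) once $\gamma_k\ll\epsilon_n$, with crossover near $k\asymp\tau\ln(1/\epsilon_n)$. The key estimates are $r_0\asymp1+\epsilon_np_n$ (the $\gamma$'s contribute $\Theta(1)$, the $\epsilon_n$'s contribute $p_n\epsilon_n$); on the flat part $r_k\asymp p_n-k$, so $k_n^*$ lands just past the crossover, at $k_n^*\asymp\ln(n/(\epsilon_np_n))$, precisely where $\gamma_{k+1}$ has dropped to $\asymp\epsilon_np_n/(bn)$; and $R_{k_n^*}\asymp(\epsilon_np_n)^2/(p_n\epsilon_n^2)=p_n$, since the flat part dominates both sums. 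The benign conditions then read $\epsilon_np_n=o(n)$, $\ln(n/(\epsilon_np_n))=o(n)$ (equivalently $\epsilon_np_n=ne^{-o(n)}$), and $p_n=\omega(n)$. Feeding $r_0\asymp1+\epsilon_np_n$, $k_n^*\asymp\ln(n/(\epsilon_np_n))$, $R_{k_n^*}\asymp p_n$ into Theorem~\ref{th::main}, and controlling the bias with the refined bound on ${\theta^*}^\top B\theta^*$ from Appendix~\ref{a:upperB} (here $\|B\|\lesssim\lambda_{n+1}\asymp\epsilon_n\le(\epsilon_np_n+1)/n$, which is sharper than the crude $\|\Sigma\|\sqrt{r_0/n}$ term), yields the stated $O(\cdot)$; similarly the $\Theta(\cdot)$ in part~\ref{eigenexample_expanded:exponent} comes from combining the upper bound with the matching variance lower bound $\frac{\sigma^2}{c}(\frac{k^*}{n}+\frac{n}{R_{k^*}})$ of Theorem~\ref{th::main} and the packing lower bound of Appendix~\ref{a:packing}.

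I expect the main obstacle to be the boundary analysis for the ``only if'' direction rather than any single hard estimate: in each part one must rule out \emph{every} way one of the three ratios can fail to vanish, and near the critical decay rates (e.g.\ $\alpha=1$ exactly, or $p_n$ at the edge of the admissible range) this comes down to carefully controlling lower-order logarithmic corrections in the sum–integral comparisons and in the fixed-point equation defining $k_n^*$. A secondary difficulty is obtaining the constant-free rates in parts~\ref{eigenexample_expanded:exponent} and~\ref{eigenexample_expanded:expplusconst}: the generic bias term of Theorem~\ref{th::main} is not tight enough, so one needs the sharper operator-norm control of $B$ from the appendices and one needs the upper and lower bounds of Theorem~\ref{th::main} to line up, which again relies on Lemma~\ref{lemma:bestk} to pin $\mathrm{tr}(C)$ to order $\frac{k^*}{n}+\frac{n}{R_{k^*}}$.
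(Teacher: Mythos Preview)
Your overall plan is correct and is essentially the paper's approach: reduce each case to elementary asymptotic estimates of $r_0$, $k_n^*$, and $R_{k_n^*}$ via integral (or geometric-series) comparison, and then read off the three benign conditions. Your direct computations in Parts~\ref{eigenexample_expanded:infdim} and~\ref{eigenexample_expanded:exponent} are right and in fact slightly more streamlined than the paper, which first proves two auxiliary facts (that $k^*(n)/n\to0$ iff $r_n/n\to\infty$ for increasing $r_n$, and a sufficient condition $r_k^{-2}=o(r_k^{-1}-r_{k+1}^{-1})$ for $n/R_{k^*}\to0$) and then applies them; your explicit estimate $R_k\asymp k\ln^2 k$ in the critical case $\alpha=1,\beta>1$ bypasses that machinery. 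For Part~\ref{eigenexample_expanded:finitepoly} the paper carries out exactly the case analysis you anticipate, with additional care at the internal threshold $\alpha=1/2$ for $r_k(\Sigma^2)$.

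Two points in Part~\ref{eigenexample_expanded:expplusconst} need correction. First, your claim that the flat part dominates \emph{both} sums at $k=k_n^*$ is only half right. At $k_n^*$ one has $e^{-k_n^*/\tau}\asymp p_n\epsilon_n/n$, so $\sum_{i>k_n^*}\lambda_i^2\asymp e^{-2k_n^*/\tau}+p_n\epsilon_n^2\asymp (p_n\epsilon_n)^2/n^2+p_n\epsilon_n^2$, and the exponential piece wins when $p_n\gg n^2$. The correct order is $R_{k_n^*}=\Theta(\min\{n^2,p_n\})$, which is exactly why the risk bound carries $\max\{1/n,n/p_n\}$ rather than just $n/p_n$. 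This does not affect the benign iff (both expressions are $\omega(n)$ precisely when $p_n=\omega(n)$), but your stated $R_{k_n^*}\asymp p_n$ would yield a risk upper bound that is too small in the regime $p_n\gg n^2$.

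Second, your proposed sharpening $\|B\|\lesssim\lambda_{n+1}$ is not what the paper proves and is not correct in general. The paper's Appendix~\ref{a:upperB} only gives ${\theta^*}^\top B\theta^*\le\|\Sigma-\tfrac1n X^\top X\|\,\|\theta^*\|^2$, which via Koltchinskii--Lounici yields the $\|\Sigma\|\max\{\sqrt{r_0/n},r_0/n\}$ bound already in Theorem~\ref{th::main}; there is no argument that the projection $I-X^\top(XX^\top)^{-1}X$ annihilates the top-$n$ eigendirections well enough to cut $\|B\|$ down to $\lambda_{n+1}$. In the paper, the first term $(\epsilon_np_n+1)/n$ in the stated rate is simply $\tr(\Sigma_n)/n=\|\Sigma_n\|\,r_0(\Sigma_n)/n$, i.e.\ the $r_0/n$ branch of the bias bound, and the derivation does not invoke any finer control of $B$. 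You should drop the $\|B\|\lesssim\lambda_{n+1}$ step and instead read the rate directly off Theorem~\ref{th::main} with $r_0\asymp 1+\epsilon_np_n$, $k^*\asymp\ln(n/(\epsilon_np_n))$, and $R_{k^*}\asymp\min\{n^2,p_n\}$.
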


We build up the proof in stages.  First, we characterize those
sequences of effective ranks that can arise.

	\begin{theorem}
		Consider some positive summable sequence $\{\lambda_i\}_{i=1}^\infty$, and for any non-negative integer $i$ denote 
		\[
		r_i := \lambda_{i+1}^{-1}\sum_{j > i}\lambda_j.
		\]
		Then $r_i > 1$ and
		$ \sum_{i} r_i^{-1} = \infty$.
		Moreover, for any positive sequence $\{u_i\}$ such
                that  $\sum_{i = 0}^\infty u_i^{-1} = \infty$ and for
                every $i$ $u_i > 1$, there exists a positive sequence $\{\lambda_i\}$ (unique up to
                constant multiplier) such that  $r_i \equiv u_i$.  The sequence is (a constant rescaling of)
			 \[
			 \lambda_k = u_{k-1}^{-1}\prod_{i=0}^{k-2}(1 - u_i^{-1}).
			 \]
	\end{theorem}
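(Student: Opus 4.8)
The plan is to handle the two assertions separately, both organized around the tail sums $S_i := \sum_{j>i}\lambda_j$, which are finite for every $i$ by summability.

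\textbf{Forward direction.} First I would rewrite $r_i$ through consecutive tail sums: from $S_i = \lambda_{i+1} + S_{i+1}$ we get $r_i = S_i/\lambda_{i+1} = 1 + S_{i+1}/\lambda_{i+1}$, which is $>1$ since every $\lambda_j>0$. For the divergence $\sum_i r_i^{-1}=\infty$, note that $r_i^{-1} = \lambda_{i+1}/S_i = (S_i-S_{i+1})/S_i = 1 - S_{i+1}/S_i \in (0,1)$, so the products telescope: $\prod_{i=0}^{N-1}(1-r_i^{-1}) = S_N/S_0$. Summability forces $S_N\to 0$, while $S_0 = \sum_j\lambda_j$ is a fixed finite positive number, so this infinite product equals $0$; the standard equivalence ``$\prod_i(1-b_i)=0 \iff \sum_i b_i = \infty$'' for $b_i\in[0,1)$ then yields $\sum_i r_i^{-1}=\infty$.

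\textbf{Converse, uniqueness.} Suppose a positive summable sequence $\{\lambda_i\}$ satisfies $r_i\equiv u_i$. Then $\lambda_{i+1} = S_i/u_i$, and substituting into $S_{i+1} = S_i - \lambda_{i+1}$ gives the recursion $S_{i+1} = S_i(1-u_i^{-1})$. Iterating from $S_0$ yields $S_i = S_0\prod_{j=0}^{i-1}(1-u_j^{-1})$ and hence $\lambda_k = u_{k-1}^{-1}S_0\prod_{j=0}^{k-2}(1-u_j^{-1})$. The lone degree of freedom is $S_0 = \sum_j\lambda_j > 0$, which is precisely the claimed constant multiplier, proving uniqueness up to scaling.

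\textbf{Converse, existence and the main obstacle.} Finally I would verify that the explicit $\lambda_k := u_{k-1}^{-1}\prod_{j=0}^{k-2}(1-u_j^{-1})$ has all the needed properties. Positivity is immediate because $u_j>1$ gives $1-u_j^{-1}\in(0,1)$. Writing $P_i := \prod_{j=0}^{i-1}(1-u_j^{-1})$, one checks $\lambda_{i+1} = P_i/u_i = P_i - P_{i+1}$, so partial sums telescope: $\sum_{i<k\le N}\lambda_k = P_i - P_N$. Since $\sum_j u_j^{-1}=\infty$, the infinite-product criterion gives $P_N\to 0$; hence $\sum_k\lambda_k = P_0 = 1 < \infty$ (so the sequence really is summable) and $\sum_{j>i}\lambda_j = P_i$, which finally gives $r_i = \lambda_{i+1}^{-1}\sum_{j>i}\lambda_j = (u_i/P_i)P_i = u_i$. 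There is no deep obstacle here --- the argument is just bookkeeping with telescoping sums and products --- and the only step needing genuine care is this passage to the limit in the tail sums, where one must invoke summability in the forward direction, and the divergence of $\sum_j u_j^{-1}$ (via the product criterion) in the converse, to make ``$\sum_{j>i}\lambda_j$'' coincide with the closed forms $S_i$ and $P_i$ and so close the loop.
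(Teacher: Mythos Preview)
Your proof is correct and follows essentially the same approach as the paper: both arguments hinge on the telescoping identity $S_{i+1} = S_i(1 - r_i^{-1})$ for the tail sums (equivalently, $\prod_{i=0}^{N-1}(1-r_i^{-1}) = S_N/S_0$) and the standard infinite-product criterion to pass between $\prod_i(1-r_i^{-1}) = 0$ and $\sum_i r_i^{-1} = \infty$. Your write-up is somewhat more explicit about the uniqueness step and the limit passage in the existence direction, but the underlying ideas are the same.
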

	\begin{proof}
			\begin{align*}
			\sum_{i \geq k+1} \lambda_i =&  \sum_{i \geq k} \lambda_i - \lambda_k
			= (1 - r_{k-1}^{-1})\sum_{i \geq k} \lambda_i.
			\end{align*}
			Thus, 
			\[
			\sum_{i \geq k+1} \lambda_i = \prod_{i =
            0}^{k-1} \left(1 - r_{i}^{-1} \right) \cdot \sum_{i} \lambda_i,
			\]
			which goes to zero if and only if $\sum_{i} r_i^{-1} = \infty$.
			 On the other hand, we may rewrite the first equality in the proof as
			\[
				 \lambda_{k+1}r_k = \lambda_kr_{k-1}(1 - r_{k-1}^{-1}),
			\]
            and hence
			\[
				\lambda_kr_{k-1} =
                \prod_{i=0}^{k-2}\left(1-r_i^{-1}\right)
                \lambda_1r_0.
			\]
			So for any sequence $\{u_i\}$ we can uniquely (up to a
            constant multiplier) recover the sequence $\{\lambda_i\}$ such that  $r_i = u_i$ --- the only candidate is 
			 \[
			 \lambda_k = u_{k-1}^{-1}\prod_{i=0}^{k-2}(1 - u_i^{-1}).
			 \]
			 However, for such $\{\lambda_i\}$ one can compute
			 \[
			 \sum_{i=1}^k \lambda_i = 1 - \prod_{i=0}^{k-1}(1 - u_i^{-1}),
			 \]
			 so the resulting sequence $\{\lambda_i\}$  sums to 1, and 
			 \[
			 r_k = \lambda_{k+1}^{-1}\sum_{i > k} \lambda_i = \lambda_{k+1}^{-1}\prod_{i=0}^{k-1}(1 - u_i^{-1}) = u_k.
			 \]
 			 
 	\end{proof}

	\begin{theorem}
      \label{t:k.by.r}
		Suppose $b$ is some constant, and $k^\ast(n) =
                \min\{k: r_k \geq bn\}$. Suppose also that the
                sequence $\{r_n\}$ is increasing. Then, as $n$ goes to
                infinity,  $k^\ast(n)/n$ goes to zero  if and only if
                $r_n/n$ goes to infinity.
		\end{theorem}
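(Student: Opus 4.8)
The plan is to exploit the monotonicity of $\{r_k\}$ to convert both asymptotic statements into the single reversible inequality
\[
  k^\ast(n)\le m \iff r_m\ge bn,
\]
valid for every non-negative integer $m$: if $r_k\ge bn$ for some $k\le m$ then $r_m\ge r_k\ge bn$ since $r$ is non-decreasing, and conversely $r_m\ge bn$ exhibits $m$ as a member of the set defining $k^\ast(n)$. Since $k^\ast(n)$ is a non-negative integer, $k^\ast(n)\le\varepsilon n$ is equivalent to $r_{\lfloor\varepsilon n\rfloor}\ge bn$, and this is the identity I will use in both directions. (Here $k$ denotes the running index of the sequence $\{r_k\}$ while $n$ is the sample size, so ``$r_n/n\to\infty$'' is read as $r_k/k\to\infty$ as $k\to\infty$, and $b>0$. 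If $\{r_k\}$ happens to be bounded then $k^\ast(n)=\infty$ for all large $n$ and $r_n/n\to 0$, so both sides of the equivalence fail; each hypothesis below forces $\sup_k r_k=\infty$, so this degenerate case needs no separate treatment.)

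For $r_n/n\to\infty\;\Rightarrow\; k^\ast(n)/n\to 0$: fix $\varepsilon\in(0,1)$. Since $r_k/k\to\infty$, there is $k_0$ with $r_k\ge(2b/\varepsilon)k$ for all $k\ge k_0$. Put $m=\lfloor\varepsilon n\rfloor$; then $m\le\varepsilon n$, and once $m\ge 1$ we also have $\varepsilon n<m+1\le 2m$, i.e.\ $m>\varepsilon n/2$. Hence for $n$ large enough that $m\ge\max\{k_0,1\}$, the reduction gives $r_m\ge(2b/\varepsilon)m>(2b/\varepsilon)(\varepsilon n/2)=bn$, so $k^\ast(n)\le m\le\varepsilon n$. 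As $\varepsilon$ was arbitrary, $k^\ast(n)/n\to 0$.

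For $k^\ast(n)/n\to 0\;\Rightarrow\; r_n/n\to\infty$: fix $\varepsilon\in(0,1)$; by hypothesis there is $n_0$ with $r_{\lfloor\varepsilon n\rfloor}\ge bn$ whenever $n\ge n_0$. Given a large integer $m$, choose an integer $n$ in the half-open interval $[m/\varepsilon,(m+1)/\varepsilon)$, which is possible because the interval has length $1/\varepsilon>1$; then $\lfloor\varepsilon n\rfloor=m$ and $n\ge m/\varepsilon$, and for $m$ large this $n$ exceeds $n_0$, so $r_m=r_{\lfloor\varepsilon n\rfloor}\ge bn\ge(b/\varepsilon)m$. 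Therefore $\liminf_k r_k/k\ge b/\varepsilon$ for every $\varepsilon\in(0,1)$, and letting $\varepsilon\downarrow 0$ yields $r_k/k\to\infty$.

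The argument is elementary, and I do not expect a genuine obstacle; the only place that needs care is the floor-function bookkeeping. In particular, the converse direction relies on the fact that as $m$ ranges over all large integers there is always a companion sample size $n$ with $\lfloor\varepsilon n\rfloor=m$ --- which is exactly where $\varepsilon<1$, i.e.\ the comparison index being a proper fraction of $n$, enters.
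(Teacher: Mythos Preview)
Your proof is correct and follows essentially the same approach as the paper's: both directions pick an arbitrary constant factor (your $\varepsilon$ is the paper's $1/C$) and chase the threshold through the definition of $k^\ast$. Your explicit statement of the equivalence $k^\ast(n)\le m \iff r_m\ge bn$ is a clean way to package the monotonicity that the paper uses implicitly, but the substance of the argument is the same.
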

	\begin{proof}
		We prove the ``if'' part separately from the ``only if'' part.
		\begin{enumerate}
			\item {\bf If  $k^\ast(n)/n \to 0$ then $r_n/n \to \infty$.}
			
			Fix some $C > 1$.  Since $k^\ast(n)/n \to 0$, there exists some $N_C$ such that  for any $n \geq N_C$, $k^\ast(n) < n/C.$ Thus,  for all $n > N_C$, 
			\begin{gather*}
			k^\ast(\lfloor Cn\rfloor) \leq n,\\
			 r_n \geq  r_{k^\ast(\lfloor Cn\rfloor)} \geq b\lfloor Cn\rfloor.
			 \end{gather*}
			 
			 Since the constant $C$ is arbitrary, $r_n/n$ goes to infinity.
		 
		 	\item {\bf If $r_n/n \to \infty$ then  $k^\ast(n)/n \to 0$ .}
		 	
		 	Fix some constant $C > 1$. Since $r_n/n \to \infty$  there exists some $N_C$ such that  for any $n \geq N_C$, $r_n > Cn$. Thus,  for any $n > CN_C/b$ 
		 	\begin{gather*}
			 	r_{\lceil nb/C\rceil } \geq bn,\\
			 	k^\ast(n) \leq \lceil nb/C\rceil.
			\end{gather*} 
		 	 Since the constant $C$ is arbitrary, $k^\ast(n)/n$ goes to zero.
		 \end{enumerate}
	\end{proof}

	\begin{theorem}
          \label{t:third_term_suff}
		Suppose the sequence $\{r_i\}$ is increasing and $r_n/n \to \infty$ as $n \to \infty$. Then a sufficient condition for $	\frac{n}{R_{k^\ast(n)}} \to 0$ is 
		\[
		r_k^{-2} = o(r_k^{-1} - r_{k+1}^{-1}) \text{ as } k \to \infty.
		\]
		For example, this condition holds for $r_n = n\log n$.
	\end{theorem}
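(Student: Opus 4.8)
The plan is to reduce the claim to a telescoping estimate for $1/R_k$. First I would record, from Lemma~\ref{lemma:monotonicbound} (specifically the inequality $\frac{n}{R_{k+1}}>\frac{n}{R_k}-\frac{n}{r_k^2}$ established in its proof, which is valid here since every $r_k>1$ — infinitely many eigenvalues are nonzero because $r_n\to\infty$), that for all $k$
\[
\frac{1}{R_k}-\frac{1}{R_{k+1}}<\frac{1}{r_k^2}.
\]
Summing from $k$ to $M-1$ gives $\frac{1}{R_k}-\frac{1}{R_M}<\sum_{j=k}^{M-1}r_j^{-2}$; since $R_M\ge r_M$ by Lemma~\ref{lemma:rkRkordering} and $r_M\to\infty$ (because $r_n/n\to\infty$), letting $M\to\infty$ yields
\[
\frac{1}{R_k}\le\sum_{j\ge k}\frac{1}{r_j^2}.
\]

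Next I would use the hypothesis to control this tail. Since $r_k^{-2}=o(r_k^{-1}-r_{k+1}^{-1})$, for any $\eta>0$ there is a $K$ such that $r_{k+1}>r_k$ and $r_k^{-2}\le\eta(r_k^{-1}-r_{k+1}^{-1})$ for all $k\ge K$. Then for every $k\ge K$,
\[
\frac{1}{R_k}\le\sum_{j\ge k}\frac{1}{r_j^2}\le\eta\sum_{j\ge k}\left(\frac{1}{r_j}-\frac{1}{r_{j+1}}\right)=\frac{\eta}{r_k},
\]
the last step being the telescoping of a series of nonnegative terms whose partial sums $r_k^{-1}-r_{M+1}^{-1}$ tend to $r_k^{-1}$ (using $r_j\to\infty$).

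Finally I would combine this with the behaviour of $k^\ast(n)=\min\{k:r_k\ge bn\}$. Since $\{r_k\}$ is increasing and unbounded, $k^\ast(n)$ is finite for every $n$ and $k^\ast(n)\to\infty$ as $n\to\infty$ (for fixed $K$, once $bn>r_K$ we get $r_k\le r_K<bn$ for all $k\le K$, hence $k^\ast(n)>K$). So for all $n$ large enough that $k^\ast(n)\ge K$, using $r_{k^\ast(n)}\ge bn$ and $b\ge 1$,
\[
\frac{n}{R_{k^\ast(n)}}\le\frac{\eta n}{r_{k^\ast(n)}}\le\frac{\eta n}{bn}=\frac{\eta}{b}\le\eta .
\]
As $\eta>0$ was arbitrary, $n/R_{k^\ast(n)}\to 0$. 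For the example $r_n=n\ln n$ (adjusting finitely many small terms so that $r_n>1$, which does not affect the asymptotics), one computes $r_{k+1}-r_k=k\ln(1+1/k)+\ln(k+1)\sim\ln k$, so that $r_k^{-1}-r_{k+1}^{-1}=\frac{r_{k+1}-r_k}{r_kr_{k+1}}\sim\frac{1}{k^2\ln k}$ while $r_k^{-2}\sim\frac{1}{k^2\ln^2 k}$; the ratio is $\sim 1/\ln k\to 0$, so the condition holds.

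I do not anticipate a genuine obstacle once Lemma~\ref{lemma:monotonicbound} is available: the heart of the argument is the two-line telescoping that turns the ``local'' hypothesis on consecutive $r_k$'s into the bound $\sum_{j\ge k}r_j^{-2}\le\eta/r_k$. The only points requiring a moment of care are that one really does need the tail sum $\sum_{j\ge k}r_j^{-2}$ (not just a single term) and that $R_M\to\infty$ and $k^\ast(n)\to\infty$, both of which are immediate from $r_n/n\to\infty$ together with the monotonicity of $\{r_k\}$.
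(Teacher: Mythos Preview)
Your argument is correct and genuinely different from the paper's. Both proofs ultimately establish the stronger fact $r_k/R_k\to 0$ and then invoke $r_{k^\ast(n)}\ge bn$ together with $k^\ast(n)\to\infty$. The paper, however, goes back to the eigenvalues: it uses the recurrence $\lambda_{k+2}r_{k+1}=\lambda_{k+1}r_k(1-r_k^{-1})$ to bound $\lambda_{k+1+l}/(\lambda_{k+1}r_k)$ by $r_{k+l}^{-1}\exp\bigl(-\sum_{j=k}^{k+l-1}r_j^{-1}\bigr)$, sums these bounds, and then applies the Stolz--Ces\`aro theorem to the resulting ratio. Your route stays at the level of $R_k$: you read off from the identity in Lemma~\ref{lemma:monotonicbound} (which gives $\tfrac{1}{R_k}-\tfrac{1}{R_{k+1}}<r_k^{-2}$ once $r_k>1$) the telescoping bound $1/R_k\le\sum_{j\ge k}r_j^{-2}$, and then a second telescope using the hypothesis yields $1/R_k\le \eta/r_k$. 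This is shorter and avoids both the explicit product formula for $\lambda_{k+l}$ and Stolz--Ces\`aro; the price is that it leans on a consequence buried in the proof of Lemma~\ref{lemma:monotonicbound} rather than on its statement, so you might want to record that inequality explicitly. The minor remark that $b\ge 1$ is inessential---even if $b<1$ you get $n/R_{k^\ast(n)}\le \eta/b$, which still tends to $0$ with $\eta$.
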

	\begin{proof}
		We need to show that 
		\[
		\frac{n}{R_{k^\ast(n)}} = \frac{n\sum_{i > k^\ast(n)}\lambda_i^2}{\left(\sum_{i > k^\ast(n)}\lambda_i\right)^2} = \frac{n\sum_{i > k^\ast(n)}\lambda_i^2}{\lambda_{k^\ast(n)+1}^2r_{k^\ast(n)}^2} \to 0.
		\]
		Since $r_{k^\ast(n)} \geq bn$
                and $\lim_{n \rightarrow \infty} k^*(n) = \infty$, 
                it is enough to prove
                that $\frac{\sum_{i >
                    k}\lambda_i^2}{\lambda_{k+1}^2r_{k}} \to 0$ as $k$
                goes to infinity.
                Since 
		\[
		\lambda_{k+2} r_{k+1} =  \lambda_{k+1}r_k(1 - r_k^{-1}),
		\]
		we can write that
                \begin{align*}
		 \lambda_{k+1+l}r_{k+l} & =  \lambda_{k+1}r_k\prod_{i = k}^{k+l-1} (1-r_{i}^{-1}) \\
               & \leq \lambda_{k+1}r_k \exp\left(-\sum_{i=k}^{k+l-1}r_{i}^{-1}\right)
                \end{align*}
                which yields
		\[
                \frac{\lambda_{k+1+l}}{\lambda_{k+1}r_k}
		 \leq r_{k+l}^{-1} \exp\left(-\sum_{i=k}^{k+l-1}r_{i}^{-1}\right).
		\]
		Thus, we obtain
		\[
		\frac{\sum_{i > k}\lambda_i^2}{\lambda_{k+1}^2r_{k}}\leq
		r_k\sum_{i \geq k} r_{i}^{-2}\exp\left(-2\sum_{j=k}^{i-1}r_{j}^{-1}\right),
		\]
		and it is sufficient to prove that the latter quantity goes to zero. We write
		\begin{align*}
		 r_k\sum_{i \geq k} r_{i}^{-2}\exp\left(-2\sum_{j=k}^{i-1}r_{j}^{-1}\right) =& \frac{\sum_{i \geq k} r_{i}^{-2}\exp\left(-2\sum_{j=k}^{i-1}r_{j}^{-1}\right)}{r_k^{-1}}\\
		 =&\frac{\sum_{i \geq k} r_{i}^{-2}\exp\left(-2\sum_{j=0}^{i-1}r_{j}^{-1}\right)}{r_k^{-1}\exp\left(-2\sum_{j=0}^{k-1}r_{j}^{-1}\right)}.
		 \end{align*}
		 Since both numerator and denominator are decreasing in $k$ and go to zero as $k \to \infty$, we can apply the Stolz–Ces\'{a}ro theorem (an analog of L'H\^{o}pital's rule for discrete sequences):
		\begin{align*}
                \lim_{k \rightarrow \infty}
                \frac{\sum_{i \geq k}
                r_{i}^{-2}\exp\left(-2\sum_{j=0}^{i-1}r_{j}^{-1}\right)}{r_k^{-1}\exp\left(-2\sum_{j=0}^{k-1}r_{j}^{-1}\right)}
               & =  \lim_{k \rightarrow \infty}
		\frac{r_{k}^{-2}\exp\left(-2\sum_{j=0}^{k-1}r_{j}^{-1}\right)}{(r_k^{-1} - e^{-2r_k^{-1}}r_{k+1}^{-1})\exp\left(-2\sum_{j=0}^{k-1}r_{j}^{-1}\right) } \\
                 & =  \lim_{k \rightarrow \infty} 
           \frac{r_{k}^{-2}}{(r_k^{-1} - e^{-2r_k^{-1}}r_{k+1}^{-1})} \\
 & \hspace{0.2in}
   \quad 
   \quad 
              \text{(since, for large enough $k$,
                  $e^{-2r_k^{-1}} \leq 1-r_k^{-1}$)} \\
		& \leq 
          \lim_{k \rightarrow \infty} \frac{r_{k}^{-2}}{r_k^{-1} - r_{k+1}^{-1} + r_k^{-1}r_{k+1}^{-1}}\\
		& = 0,
		\end{align*}
		where the last line is due to our sufficient condition.
	\end{proof}

\medskip
Now we are ready to prove
Theorem~\ref{theorem:benign_eigenvalues_expanded}.

\noindent
{\bf Part \ref{eigenexample_expanded:infdim}, if direction, first term:}  We have
\[
|| \Sigma_n || \sqrt{r_0(\Sigma_n)} 
  = \sqrt{ \lambda_1 \sum_{i=1}^{\infty} \lambda_i}
      = O\left( \sqrt{\sum_{i=1}^{\infty} \frac{1}{i \log^{\beta} (1 +
      i)}} \right),
\]
which is $O(1)$ for $\beta > 1$.

\noindent
{\bf Part \ref{eigenexample_expanded:infdim}, if direction, second term:}  
By Theorem~\ref{t:k.by.r}, it suffices to prove
that $\lim_{n\rightarrow \infty} \frac{r_n}{n} = \infty$.  This holds because
\begin{align*}
r_n 
 & = \frac{ \sum_{i > n} \frac{1}{i \log^{\beta} (1 + i)}}{\frac{1}{(n+1) \log^{\beta} (2 + n)}} 
  = \Theta(n \log n),
\end{align*}
since $\beta > 1$.

\noindent
{\bf Part \ref{eigenexample_expanded:infdim}, if direction, third term:}  By Theorem~\ref{t:third_term_suff}, it suffices to
prove that $r_k^{-2} = o(r_k^{-1} - r_{k+1}^{-1})$, that is
\begin{align*}
& \lim_{k \rightarrow \infty} 
 \frac{r_k^{-2}}{r_k^{-1} - r_{k+1}^{-1}} = 0 
\end{align*}
or, equivalently,
\begin{align*}
    \lim_{k \rightarrow \infty} 
         \frac{r_{k+1}}{r_k (r_{k+1} - r_k)} = 0.
\end{align*}
As argued above, when $\alpha = 1$ and $\beta > 1$, $r_k = \Theta(k
\log k)$, so it suffices to show that $\lim_{k \rightarrow \infty}
(r_{k+1} - r_k) = \infty$.  We have
\begin{align*}
r_{k+1} - r_k 
 & = \frac{\sum_{i > k+1} \lambda_i}{\lambda_{k+2}}
      - \frac{\sum_{i > k} \lambda_i}{\lambda_{k+1}} \\
 & = \frac{\left( ( \lambda_{k+1} - \lambda_{k+2}) \sum_{i > k+1} \lambda_i \right)
           - \lambda_{k+1}\lambda_{k+2}}
          {\lambda_{k+1}\lambda_{k+2}}
       \\
 & = \left( \left( \frac{1}{\lambda_{k+2}} - \frac{1}{\lambda_{k+1}} \right)
         \sum_{i > k+1} \lambda_i \right)
      - 1 \\
\end{align*}
so it suffices to show that
\[
\lim_{k \rightarrow \infty}  \left( \frac{1}{\lambda_{k+2}} - \frac{1}{\lambda_{k+1}} \right)
         \sum_{i > k+1} \lambda_i = \infty.
\]
Since $\lambda_i$ is non-increasing, we have
\begin{align*}
\left( \frac{1}{\lambda_{k+2}} - \frac{1}{\lambda_{k+1}} \right)
         \sum_{i > k+1} \lambda_i 
& \geq \left( \frac{1}{\lambda_{k+2}} - \frac{1}{\lambda_{k+1}} \right)
         \int_{k+1}^{\infty} \frac{1}{x \log^{\beta} x} \; dx \\
& = \left( \frac{1}{\lambda_{k+2}} - \frac{1}{\lambda_{k+1}} \right)
         \frac{1}{(\beta - 1) \log^{\beta - 1} (k+1)} \\
& = \frac{(k+2) \log^{\beta} (k+3) - (k+1) \log^{\beta} (k+2)}
         {(\beta - 1) \log^{\beta - 1} (k+1)}.
\end{align*}
If we define $f$ on the positive reals by
$f(x) = x \log^{\beta} (x+1)$, then $f$ is 
convex, and, since 
$f'(x) = \frac{ \beta x \log^{\beta - 1} (x+1)}{x+1} + \log^{\beta} (x+1)$,
we have
\begin{align*}
\frac{(k+2) \log^{\beta} (k+3) - (k+1) \log^{\beta} (k+2)}
         {(\beta - 1) \log^{\beta - 1} (k+1)}
 & \geq 
   \frac{\frac{ \beta (k+1) \log^{\beta - 1} (k+2)}
              {k+2} 
          + \log^{\beta} (k+2)}
        {(\beta - 1) \log^{\beta - 1} (k+1)},
\end{align*}
which goes to infinity for large $k$, completing the proof of the
``if'' direction of the third term of Part~\ref{eigenexample_expanded:infdim}.

\noindent
{\bf Part~\ref{eigenexample_expanded:infdim}, only if direction, $\alpha > 1$:}  
If $\alpha > 1$, then 
\begin{align*}
r_n 
 & = \frac{ \sum_{i > n} \frac{1}{i^a \log^{\beta} (1 + i)}}
          {\frac{1}{n^a \log^{\beta} (1 + n)}} \\
 & \leq n^{\alpha} 
    \sum_{i > n} \frac{\log^{\beta} (1 + n)}{i^a \log^{\beta} (1 + i)} \\
 & \leq n^{\alpha} 
    \sum_{i > n} \frac{1}{i^a} \\
 & = n^{\alpha} O(n^{1 - \alpha}),
\end{align*}
which does not grow faster than $n$.  Thus, by
Theorem~\ref{t:k.by.r}, $k^\ast(n)/n$ does not go to zero.

\noindent
{\bf Part \ref{eigenexample_expanded:infdim}, only if direction, 
   $\alpha < 1$, or $\alpha = 1$ and $\beta \leq 1$:}  
In this case, since, as above
\begin{align*}
|| \Sigma_n || \sqrt{r_0(\Sigma_n)} 
  \geq  \sqrt{\sum_{i=1}^{\infty} \lambda_i},
\end{align*}
and $\sum_{i=1}^{\infty} \frac{1}{i^{\alpha} \log^{\beta} (1 + i)}$
diverges in this case, 
$\frac{|| \Sigma_n || \sqrt{r_0(\Sigma_n)}}{n}$ does not go to zero.

\noindent
Before starting on Part~\ref{eigenexample_expanded:exponent},
let us define $r_{k,n} = r_k(\Sigma_n)$ and $R_{k,n} = R_k(\Sigma_n)$.

\noindent
{\bf Part \ref{eigenexample_expanded:exponent}, if direction, first term:}  We have
\[
|| \Sigma_n || \sqrt{r_{0,n}} 
  = \sqrt{ \lambda_{1,n} \sum_{i=1}^{\infty} \lambda_{i.n}}
      = \sqrt{\sum_{i=1}^{\infty} \frac{1}{i^{1+\alpha_n}}}
      \leq \sqrt{1 + \frac{1}{\alpha_n}},
\]
so $|| \Sigma_n || \sqrt{ \frac{ r_{0,n}}{n}} \leq \sqrt{\frac{1 +
    \frac{1}{\alpha_n}}{n}}$ which goes to zero with $n$ if $\alpha_n
= \omega(1/n)$.

\noindent
{\bf Part \ref{eigenexample_expanded:exponent}, if direction, second term:}  
First, 
\begin{align*}
r_{k,n}
& = (k+1)^{1 + \alpha_n} \sum_{i > k} i^{-(1 + \alpha_n)} \\
& \geq (k+1)^{1 + \alpha_n} \int_{k+1}^{\infty} x^{-(1 + \alpha_n)} dx \\
& = \frac{k+1}{\alpha_n}.
\end{align*}
Thus, $k^*(n) = O(\alpha_n n)$, so that 
$\frac{k^*(n)}{n} = O(\alpha_n) = o(1)$.

\noindent
{\bf Part \ref{eigenexample_expanded:exponent}, if direction, third term:}  
We bound $R_{k,n}$ from below by separately bounding
its numerator and denominator:
\begin{align*}
\sum_{i > k} i^{-(1 + \alpha_n)}
& \geq \int_{k+1}^{\infty} x^{-(1 + \alpha_n)} \;dx \\
& = \frac{1}{\alpha_n (k+1)^{\alpha_n}},
\end{align*}
and
\begin{align*}
\sum_{i > k} i^{-2 (1 + \alpha_n)} 
& \leq \int_k^{\infty} x^{-2 (1 + \alpha_n)} \;dx \\
& = \frac{1}{ k^{1 + 2 \alpha_n} (2 \alpha_n + 1)},
\end{align*}
so that
\begin{equation}
\label{e:R.by.k}
R_{k,n} 
 \geq \frac{k^{1 + 2 \alpha_n} (2 \alpha_n + 1)}{\alpha_n^2 (k+1)^{2 \alpha_n}}
 \geq 
 \frac{k}{\alpha_n^2}
  \times
 \left( 1 - \frac{1}{k+1} \right)^{2 \alpha_n}.
\end{equation}
So now we want a lower bound on $k^*(n)$.  For that, we need
an upper bound on $r_{k,n}$, and
\begin{align*}
r_{k,n}
& \leq (k+1)^{1 + \alpha_n} \int_k^{\infty} x^{-(1 + \alpha_n)} dx \\
& = 
 \frac{(k+1)}{\alpha_n}
  \times
 \left(1 + \frac{1}{k} \right)^{\alpha_n} \\
& \leq 
 \frac{2 k}{\alpha_n} e^{\alpha_n/k}.
\end{align*}
This implies $\frac{2 k^*(n)}{\alpha_n} e^{\alpha_n/k^*(n)} \geq b n$.  
This, together with the fact
that,
for $u > 1$, $u e^{1/u}$ is an increasing function
of $u$, implies that, for large enough $n$,
$k^*(n) \geq \alpha_n b n/3$.  
Since $\alpha_n = \omega(1/n)$, this implies
that $k^*(n) = \omega(1)$.  
Combining this with \eqref{e:R.by.k},
for large enough $n$
\begin{align*}
R_{k^*(n),n} 
 & \geq  \frac{k^*(n)}{\alpha_n^2} e^{-\alpha_n/k^*(n)} 
 \geq  \frac{k^*(n)}{2 \alpha_n^2} 
 \geq  \frac{b n}{6 \alpha_n}.
\end{align*}
Thus $n/R_{k^*(n),n} = O(\alpha_n) = o(1)$.

\noindent
{\bf Part \ref{eigenexample_expanded:exponent}, only if direction, $\alpha_n = O(1/n)$:}  
We have
\begin{align*}
|| \Sigma_n || \sqrt{r_{0,n}} 
 &= \sqrt{\sum_{i=1}^{\infty} \frac{1}{i^{1+\alpha_n}}} 
  \geq \sqrt{\frac{1}{\alpha_n}},
\end{align*}
so $|| \Sigma_n || \sqrt{\frac{r_{0,n}}{n}}
     \geq \sqrt{\frac{1}{\alpha_n n}}$, which is 
bounded below by a constant for large $n$ if 
$\alpha_n = O(1/n)$.

\noindent
{\bf Part \ref{eigenexample_expanded:exponent}, only if direction, $\alpha_n = \Omega(1)$:}  
Recall that, in the proof of the ``if'' direction of
the third term, we showed that $k^*(n) \geq \alpha_n b n/3$.
This implies that $\frac{k^*(n)}{n} = \Omega(\alpha_n)$.

\noindent
{\bf Part~\ref{eigenexample_expanded:finitepoly}:}
Suppose that $\Sigma_n$ is benign. Then because
$R_{k}(\Sigma_n)\le p_n-k$, we must have $p_n=\omega(n)$. Thus, we can restrict our attention to the sequences for which $p_n = \omega(n)$ and find the necessary and sufficient conditions for that class.

Next, for any positive $\alpha$ and any natural number
$k \in [1, p_n)$, we can write 

\begin{equation*}
 \int_{k}^{p_n}x^{-\alpha}\,dx \geq \sum_{i = k+1}^{p_n} i^{-\alpha} \geq \int_{k+1}^{p_n}x^{-\alpha}\,dx,
\end{equation*}
\begin{equation*}
F(p_n) - F(k) \geq \sum_{i = k+1}^{p_n} i^{-\alpha} \geq F(p_n) - F(k+1),
\end{equation*}
where 
\[
	F(x) = \begin{cases} \frac{1}{1-\alpha}x^{1-\alpha},& \text{for }\alpha \neq 1,\\
	\ln(x),&  \text{for }\alpha = 1.\end{cases}
\]

As the sequence can only be benign if $k^\ast = o(n)$, we can only consider values of $k$ that do not exceed some constant fraction of $n$, e.g. $n/2$. 
 Since $p_n = \omega(n)$,  
 noting that, for $x > 0$, the
 sign of $\frac{1}{1 - \alpha} x^{1-\alpha}$ flips
 when $\alpha$ crosses $1$,
 we can write, uniformly for all
 $k \in [1, n/2]$,
 \[
	 \sum_{i = k+1}^{p_n} i^{-\alpha} = \begin{cases}
	 \Theta_\alpha\left(p_n^{1-\alpha}\right),& \text{for }\alpha \in (0,1),\\
	 \Theta_\alpha\left(\ln(p_n/k)\right), &\text{for }\alpha = 1,\\
	 \Theta_\alpha\left(k^{1-\alpha}\right),&  \text{for }\alpha > 1.
	 \end{cases}
\] 
 Recall that we consider $\lambda_{i,n} = i^{-\alpha}$ for $i \leq p_n$. 
Using the formula above, we get uniformly
for all $k \in [1, n/2]$
\[
r_k(\Sigma_n) =  \begin{cases}
\Theta_\alpha\left(k^\alpha p_n^{1-\alpha}\right),& \text{for }\alpha \in (0,1),\\
\Theta_\alpha\left(k\ln(p_n/k)\right), &\text{for }\alpha = 1,\\
\Theta_\alpha\left(k\right),&  \text{for }\alpha > 1.
\end{cases}
\]

Recall that $k^\ast = \min\{k: r_k(\Sigma_n) \geq bn\}$.  We compute
\[
k^\ast =  \begin{cases}
\Theta_\alpha\left( p_n^{1-\frac{1}{\alpha}}  n^{\frac{1}{\alpha}}\right),& \text{for }\alpha \in (0,1),\\
\Theta_\alpha\left(\frac{n}{\ln(p_n/n)}\right), &\text{for }\alpha = 1,\\
\Theta_\alpha\left(n\right),&  \text{for }\alpha > 1.
\end{cases}
\]
One can see that for $\alpha > 1$, $k^\ast = \Omega_\alpha(n)$, so the sequence is not benign for  $\alpha > 1$. On the other hand, $k^\ast = o(n)$ for $\alpha \leq 1$.

Next, analogously to the asymptotics for $r_k(\Sigma)$, we have
\[
r_k(\Sigma_n^2) =  \begin{cases}
\Theta_\alpha\left(k^{2\alpha} p_n^{1-{2\alpha}}\right),& \text{for }\alpha \in (0,0.5),\\
\Theta_\alpha\left(k\ln(p_n/k)\right), &\text{for }\alpha = 0.5,\\
\Theta_\alpha\left(k\right),&  \text{for }\alpha \in (0.5, 1].
\end{cases}
\]

Since $R_k = \frac{r_k(\Sigma)^2}{r_k(\Sigma^2)}$, we can write uniformly for all $k \in [1, n/2]$
\[
R_k =  \begin{cases}
\Theta_\alpha\left(p_n\right),& \text{for }\alpha \in (0,0.5),\\
\Theta_\alpha\left(\frac{p_n}{\ln(p_n/k)}\right), &\text{for }\alpha = 0.5,\\
\Theta_\alpha\left(k^{2\alpha - 1}p_n^{2 - 2\alpha}\right),&  \text{for }\alpha \in  (0.5, 1),\\
\Theta_\alpha\left(\ln(p_n/k)^2\right),&  \text{for }\alpha=1.
\end{cases}
\]

Now we plug in $k^\ast$ instead of $k$. Recall that $p_n/k^\ast = \Theta_\alpha\left((p_n/n)^{1/\alpha}\right)$ for $\alpha \in (0,1)$, and $p_n/k^\ast = \Theta_\alpha\left(p_n/n \ln(p_n/n)\right)$ for $\alpha = 1$. We get

\[
R_{k^\ast} =  \begin{cases}
\Theta_\alpha\left(p_n\right),& \text{for }\alpha \in (0,0.5),\\
\Theta_\alpha\left(n\frac{p_n/n}{\ln(p_n/n)}\right) , &\text{for }\alpha = 0.5,\\
\Theta_\alpha\left(n\left(\frac{p_n}{n}\right)^{\frac{1}{\alpha} - 1}\right),&  \text{for }\alpha \in  (0.5, 1),\\
\Theta_\alpha\left(\ln(p_n/n)^2\right),&  \text{for }\alpha=1.
\end{cases}
\]

Since $p_n = \omega(n)$, for any $\alpha \in (0,1)$, $R_{k^\ast} = \omega(n)$. For $\alpha = 1$ the necessary and sufficient for $R_{k^\ast} = \omega(n)$ is $\ln(p_n/n) = \omega(\sqrt{n})$. 

So far, we obtained the necessary and sufficient conditions for the
last terms to go to zero.
Now let's look at the upper bound for the
first term: since $\lambda_{1,n} \equiv 1$, we just need $r_0/n \to 0$. We
write, for $\alpha \in (0,1]$,
\begin{equation*}
r_0 = \sum_{i = 1}^{p_n} i^{-\alpha}
=\begin{cases}
\Theta_\alpha\left(p_n^{1-\alpha}\right),& \text{for }\alpha \in (0,1),\\
\Theta_\alpha\left(\ln p_n\right),& \text{for }\alpha =1.
\end{cases}\\
\end{equation*}

Thus, for $\alpha < 1$, $r_0(\Sigma_n)/n$ goes to zero if and only if
$p_n = o\left(n^{1/(1-\alpha)}\right)$, and for $\alpha = 1$, $r_0(\Sigma_n)/n$ goes to zero if and only if $\ln(p_n) = o(n)$.

\noindent
{\bf Part~\ref{eigenexample_expanded:expplusconst}:}
Suppose that $\Sigma_n$ is benign. Then because
$R_{k}(\Sigma_n)\le p_n-k$, we must have $p_n=\omega(n)$. Also,
  \begin{align*}
    \tr(\Sigma_n)
      & = \Theta\left(1-e^{-p_n/\tau} + p_n\epsilon_n\right) \\
      & = \Theta\left(1 + p_n\epsilon_n\right),
  \end{align*}
and so $p_n\epsilon_n=o(n)$.
Since $\Sigma_n$ benign implies $k^*=o(n)$, and hence $k^*=o(p_n)$, we
consider $k=o(p_n)$. In this regime,
  \begin{align*}
    \sum_{i>k}\lambda_i
      & = \Theta\left(e^{-k/\tau}-e^{-p_n/\tau}
        + (p_n-k)\epsilon_n\right) \\
      & \le \Theta\left(e^{-k/\tau} + p_n\epsilon_n\right).
  \end{align*}
Thus, whenever $k\le p_n$,
  \begin{align*}
    r_k(\Sigma_n)
      & \le \Theta\left(
        \frac{e^{-k/\tau}+p_n\epsilon_n}{e^{-k/\tau}+\epsilon_n}\right).
  \end{align*}
Notice that
  \begin{align*}
    \frac{d}{dx}\frac{x+p_n\epsilon_n} {x+\epsilon_n}
      = \frac{\epsilon_n-p_n\epsilon_n}{(x+\epsilon_n)^2}
      <0,
  \end{align*}
so $k^*$ must be large enough to make
  \[
    \frac{e^{-k/\tau}+p_n\epsilon_n}{e^{-k/\tau}+\epsilon_n}=\Omega(n).
  \]
Substituting $k=\tau\ln(n/(p_n\epsilon_n))-a$ gives
  \begin{align*}
    r_k(\Sigma_n)
      & \le \Theta\left(
        \frac{p_n\epsilon_n/n + p_n\epsilon_n}
        {p_n\epsilon_n/n + \epsilon_n}\right) \\
      & = \Theta\left(
        \frac{p_n\epsilon_n}
        {p_n\epsilon_n/n}\right) \\
      & = \Theta\left(n\right),
  \end{align*}
which shows that
$k^*\ge\tau\ln(n/(p_n\epsilon_n))-O(1)$.
Thus, if $\Sigma_n$ is benign, we must have $k^*=o(n)$, that is,
$\epsilon_np_n=ne^{-o(n)}$.

Conversely, assume $p_n=\Omega(n)$ and
$\epsilon_np_n=ne^{-o(n)}$ (that is, $\ln(n/(p_n\epsilon_n))=o(n)$).
Set $k=\tau\ln(n/(p_n\epsilon_n))-a$,
for some $a$, which we shall see is $\Theta(1)$. Notice that
$k=o(n)$, so $p_n-k=\Omega(p_n)$ and $e^{-p_n}=o(e^{-k})$. Thus,
  \begin{align*}
    \sum_{i>k}\lambda_i
      & = \Theta\left(e^{-k/\tau}-e^{-p_n/\tau}
        + (p_n-k)\epsilon_n\right) \\
      & = \Theta\left(e^{-k/\tau} + p_n\epsilon_n\right), \\
    \sum_{i>k}\lambda_i^2
      & = \Theta\left(e^{-2k/\tau}-e^{-2p_n}
        + (p_n-k)\epsilon_n^2\right) \\
      & = \Theta\left(e^{-2k/\tau} + p_n\epsilon_n^2\right).
  \end{align*}
These imply
  \begin{align*}
    \tr(\Sigma_n)
      & = \Theta(1+p_n\epsilon_n), \\
    r_k(\Sigma_n)
      & = \Theta\left(
        \frac{e^{-k/\tau}+p_n\epsilon_n}{e^{-k/\tau}+\epsilon_n}\right) \\
      & = \Theta\left(
        \frac{ap_n\epsilon_n/n + p_n\epsilon_n}
        {ap_n\epsilon_n/n + \epsilon_n}\right) \\
      & = \Theta\left(
        \frac{p_n\epsilon_n}
        {ap_n\epsilon_n/n}\right) \\
      & = \Theta\left(n/a\right),
  \end{align*}
which shows that $k^*=\tau\ln(n/(p_n\epsilon_n))+O(1)$.
Also, we have
  \begin{align*}
    R_k(\Sigma_n)
      & = \Theta\left(
        \frac{\left(e^{-k/\tau}+p_n\epsilon_n\right)^2}
        {e^{-2k/\tau}+p_n\epsilon_n^2}\right) \\
      & = \Theta\left(
        \frac{\left(p_n\epsilon_n/n+p_n\epsilon_n\right)^2}
        {p_n^2\epsilon_n^2/n^2+p_n\epsilon_n^2}\right) \\
      & = \Theta\left(\frac{p_n^2\epsilon_n^2}
        {p_n^2\epsilon_n^2/n^2+p_n\epsilon_n^2}\right) \\
      & = \Theta\left(\min\left\{n^2,p_n \right\}\right).
  \end{align*}
Combining gives
  \[
    R(\hat\theta)
      = O\left(\frac{\epsilon_np_n+1}{n}
      + \frac{\ln(n/(\epsilon_np_n))}{n}
      + \max\left\{\frac{1}{n},\frac{n}{p_n}\right\}\right).
  \]
Now, it is clear that $p_n=\omega(n)$, $\epsilon_np_n=o(n)$, and
$\epsilon_np_n=ne^{-o(n)}$ imply that $\Sigma_n$ is benign.

\section{Upper bound on the $B$ term}\label{a:upperB}

We can control the term ${\theta^\ast}^\top B \theta^\ast$ in
Lemma~\ref{lemma:bv} using a standard argument.

\begin{lemma}\label{lemma:bias}
There is a constant $c$, that depends only on $\sigma_x$, such that for any $1<t<n$, with
probability at least $1-e^{-t}$,
  \[
    {\theta^\ast}^\top B \theta^\ast
        \leq c\|\theta^\ast\|^2\|\Sigma\|\max
        \left\{\sqrt{\frac{r_0(\Sigma)}{n}},
        \frac{r_0(\Sigma)}{n}, \sqrt{\frac{t}{n}} \right\}.
  \]
\end{lemma}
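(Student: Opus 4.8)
\textbf{Proof proposal for Lemma~\ref{lemma:bias}.}
The plan is to reduce the quadratic form to an operator-norm estimate on a truncated version of $\Sigma^{1/2}$ composed with the projection onto $\ker X$, and then to control that norm through the smallest singular value of the ``top-$k$'' design matrix. Write $\Pi := I - X^\top(XX^\top)^{-1}X$ for the orthogonal projection onto $\ker X$, so that $B=\Pi\Sigma\Pi$ and ${\theta^\ast}^\top B\theta^\ast=\|\Sigma^{1/2}\Pi\theta^\ast\|^2$. Fix a threshold $k$ and split $\Sigma=\Sigma_{\le k}+\Sigma_{>k}$ with $\Sigma_{\le k}=\sum_{i\le k}\lambda_i v_iv_i^\top$. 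Since $\Pi\Sigma_{>k}\Pi\preceq\lambda_{k+1}\Pi\preceq\lambda_{k+1}I$, the tail piece contributes at most $\lambda_{k+1}\|\theta^\ast\|^2$, and it remains to bound ${\theta^\ast}^\top\Pi\Sigma_{\le k}\Pi\theta^\ast\le\|\Sigma_{\le k}^{1/2}\Pi\|^2\,\|\theta^\ast\|^2$.

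The heart of the argument is the estimate $\|\Sigma_{\le k}^{1/2}\Pi\|^2\le \mu_1(A_k)/\sigma_{\min}(Z_{\le k})^2$, where $Z_{\le k}$ is the $n\times k$ matrix with columns $z_1,\dots,z_k$ from Lemma~\ref{lemma::representation_through_z}. To see this, take a unit $v$, set $w=\Pi v\in\ker X$ (so $\|w\|\le1$), and write $a_i=\langle w,v_i\rangle$. Using $Xv_i=\sqrt{\lambda_i}\,z_i$, the identity $Xw=0$ reads $\sum_{i\le k}\sqrt{\lambda_i}\,a_i z_i=-\sum_{i>k}\sqrt{\lambda_i}\,a_i z_i$. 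The left-hand side equals $Z_{\le k}\tilde b$ with $\tilde b=(\sqrt{\lambda_i}\,a_i)_{i\le k}$ and $\|\tilde b\|^2=\|\Sigma_{\le k}^{1/2}w\|^2$, so its squared norm is at least $\sigma_{\min}(Z_{\le k})^2\|\Sigma_{\le k}^{1/2}w\|^2$; the right-hand side equals $X_{>k}w$ with $X_{>k}X_{>k}^\top=A_k$, so its squared norm is at most $\mu_1(A_k)\|w\|^2\le\mu_1(A_k)$. Taking the supremum over $v$ gives the claim. The two required concentration inputs both follow from the $\varepsilon$-net machinery of Appendix~\ref{appendix:concentration} (Corollary~\ref{cor:Bernstein} together with Lemma~\ref{lemma::norm_by_net}): applying that argument to the $k\times k$ matrix $Z_{\le k}^\top Z_{\le k}-nI_k$ gives $\sigma_{\min}(Z_{\le k})^2\ge n/c$ with probability at least $1-e^{-t}$ whenever $k+t\le n/c$, and the $t$-explicit form of the proof of Lemma~\ref{lemma::eig_bound} gives $\mu_1(A_k)\le c\big(\sum_{i>k}\lambda_i+\lambda_{k+1}(n+t+\sqrt{nt})\big)$ with probability at least $1-e^{-t}$. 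On the intersection of these events,
\[
  {\theta^\ast}^\top B\theta^\ast \;\le\; c\,\|\theta^\ast\|^2\left(\lambda_{k+1}+\frac{\sum_{i>k}\lambda_i}{n}+\lambda_{k+1}\sqrt{\tfrac{t}{n}}\right).
\]

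Finally, choose $k$ to be the number of eigenvalues of $\Sigma$ exceeding $\|\Sigma\|\sqrt{r_0(\Sigma)/n}$; then $k\,\|\Sigma\|\sqrt{r_0(\Sigma)/n}\le\tr(\Sigma)=\|\Sigma\|\,r_0(\Sigma)$, so $k\le\sqrt{r_0(\Sigma)\,n}$ and $\lambda_{k+1}\le\|\Sigma\|\sqrt{r_0(\Sigma)/n}$. If $r_0(\Sigma)\le n/c^2$ and $t\le n/c$ then $k+t\le n/c$, the two concentration bounds apply, and using $\sum_{i>k}\lambda_i\le\tr(\Sigma)=\|\Sigma\|\,r_0(\Sigma)$ the displayed bound is at most a constant times $\|\theta^\ast\|^2\|\Sigma\|\max\{\sqrt{r_0(\Sigma)/n},\,r_0(\Sigma)/n,\,\sqrt{t/n}\}$, as required. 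In the complementary case ($r_0(\Sigma)>n/c^2$ or $t>n/c$) one has $\max\{\sqrt{r_0(\Sigma)/n},\,r_0(\Sigma)/n,\,\sqrt{t/n}\}\ge 1/c$, so the trivial estimate ${\theta^\ast}^\top B\theta^\ast\le\|B\|\,\|\theta^\ast\|^2\le\|\Sigma\|\,\|\theta^\ast\|^2$ already gives the claim after enlarging the constant. The main obstacle is the operator-norm estimate of the second paragraph---realizing that the bias confined to the top-$k$ eigendirections is governed by $\sigma_{\min}(Z_{\le k})$, and hence by the least singular value of a tall subgaussian matrix---together with producing the $t$-dependent forms of the eigenvalue concentration bounds, which the appendix tools support but which are not stated there in exactly this form.
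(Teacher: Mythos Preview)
Your proof is correct, but it takes a substantially different route from the paper's. The paper exploits the single identity $\Pi X^\top = 0$ (where $\Pi = I - X^\top(XX^\top)^{-1}X$) to write
\[
  {\theta^\ast}^\top B\theta^\ast
  = {\theta^\ast}^\top \Pi\Bigl(\Sigma - \tfrac{1}{n}X^\top X\Bigr)\Pi\,\theta^\ast
  \le \Bigl\|\Sigma - \tfrac{1}{n}X^\top X\Bigr\|\,\|\theta^\ast\|^2,
\]
and then invokes the Koltchinskii--Lounici sample-covariance concentration theorem (Theorem~9 of~\cite{koltchinskii2017}) as a black box to get exactly the $\max\{\sqrt{r_0/n},\,r_0/n,\,\sqrt{t/n}\}$ bound. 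That is the whole proof: one algebraic identity plus one citation.

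Your argument instead splits $\Sigma=\Sigma_{\le k}+\Sigma_{>k}$, bounds the tail trivially, and for the head derives the neat operator-norm inequality $\|\Sigma_{\le k}^{1/2}\Pi\|^2\le \mu_1(A_k)/\sigma_{\min}(Z_{\le k})^2$ by reading the constraint $Xw=0$ as a linear relation between the top-$k$ and tail blocks. You then control both pieces with the $\varepsilon$-net tools already in the appendix and optimize over $k$. This is longer but fully self-contained: it avoids citing the external covariance-concentration result and instead reproves what is needed from the paper's own concentration lemmas. It also gives a more structural explanation of why the bias is small (the kernel of $X$ cannot carry much mass in the top-$k$ eigendirections once $Z_{\le k}$ is well-conditioned). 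A minor remark: your $t$-explicit form of $\mu_1(A_k)$ is not actually needed---for $t\le n/c$ the fixed-probability version of Lemma~\ref{lemma::eig_bound} already delivers probability at least $1-e^{-t}$, and for $t>n/c$ you correctly note that the trivial bound $\|B\|\le\|\Sigma\|$ suffices since $\sqrt{t/n}$ is then bounded below by a constant.
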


\begin{proof}
Note that
  \begin{align}\label{equation:Xorthog}
    \left(I - X^\top \left(X X^\top \right)^{-1}X\right)X^\top
    &= X^\top  - X^\top \left(X X^\top \right)^{-1}(XX^\top ) =0.
  \end{align}
Moreover, for any $v$ in the orthogonal complement to the span of
the columns of $X^\top$,
  \[
    \left(I - X^\top \left(X X^\top \right)^{-1}X\right)v = v.
  \]
Thus, 
  \begin{equation}\label{eqn:projection}
    \|I - X^\top \left(X X^\top \right)^{-1}X\|\leq 1.
  \end{equation}
Now we can apply~\eqref{equation:Xorthog} to write
  \begin{align*}
    {\theta^\ast}^\top B \theta^\ast
    & 
     = {\theta^\ast}^\top \left(I - X^\top
      \left(X X^\top \right)^{-1}X\right)\Sigma\left(I - X^\top
      \left(X X^\top \right)^{-1}X\right)\theta^\ast \\
    &= {\theta^\ast}^\top \left(I
      - X^\top \left(X X^\top \right)^{-1}X\right)\left(\Sigma -
      \frac1n X^\top X\right) 
        \left(I - X^\top
      \left(X X^\top \right)^{-1}X\right)\theta^\ast.
  \end{align*}
Combining with~\eqref{eqn:projection} shows that
  \[
    {\theta^\ast}^\top B \theta^\ast
    \leq \left\|\Sigma - \frac1n X^\top X\right\|\|\theta^\ast\|^2.
  \]
Thus, due to Theorem~9 in~\cite{koltchinskii2017},
there is an absolute constant $c$ such that for any $t > 1$  with
probability at least $1-e^{-t}$,
  \[
    {\theta^\ast}^\top B \theta^\ast
    \leq c\|\theta^\ast\|^2\|\Sigma\|\max\left\{\sqrt{\frac{r(\Sigma)}{n}},
      \frac{r(\Sigma)}{n}, \sqrt{\frac{t}{n}}, \frac{t}{n} \right\},
  \]
where 
  \[
    r(\Sigma) := \frac{(\E\|x\|)^2}{\|\Sigma\|}
    \leq \frac{\tr(\Sigma)}{\|\Sigma\|}
    = \frac{1}{\lambda_1}\sum_i \lambda_i = r_0(\Sigma).
  \]
\end{proof}

\section{Another lower bound}\label{a:packing}

In this section, we prove 
the second paragraph of Theorem~\ref{th::main}.

First, note that, without loss of generality,
$|| \Sigma ||_2 = 1$ and $|| \theta^* || = 1$,
since scaling these scales the excess risk by 
$|| \Sigma ||_2$ and $|| \theta^* ||^2$ respectively.
This implies $\lambda_1 = 1$, and we may further
assume without loss of generality that
$\Sigma = \diag(\lambda_1, \lambda_2,\ldots)$.  Define
$s = \sum_{i=1}^{\infty} \lambda_i$.  
We may also assume that
\begin{equation}
\label{e:r0.big}
\frac{r_0(\Sigma)}{n
        \log\left(1+r_0(\Sigma)\right)}\ge c_2
\end{equation}
since, otherwise, the lower bound is vacuously satisfied.

Define a metric $\rho$ over $\bbH$ by
\[
\rho(u,v) = \sqrt{ (u - v)^{\top} \Sigma (u - v) },
\]
so that, informally, a successful learning algorithm
achieves $\rho(\htheta,\theta) < \sqrt{\tau_0}$.

\begin{definition}
\label{d:S}
Define sets $S_1, S_2,...$ of indices as follows.  
Let $S_1 = \{ 1 \}$;  let $S_2 = \{ 2,..., i_2 \}$, for
the least $i_2$ such that $\sum_{i=2}^{i_2} \lambda_i \geq 1$.
Continue the same way as long as possible; for all $j > 2$, let
$S_j = \{ i_{j-1},...,i_j \}$, where $i_j$ is the least index
such that $\sum_{i=i_{j-1}}^{i_j} \lambda_i \geq 1$.
\end{definition}

\begin{lemma}
Definition~\ref{d:S} produces $\Omega(n \log n)$ sets.
\end{lemma}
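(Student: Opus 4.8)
The plan is to establish the stronger, dimension-free fact that the construction in Definition~\ref{d:S} always produces at least $(r_0(\Sigma)-1)/2$ sets, and then to feed the standing assumption~\eqref{e:r0.big} into this bound. Recall that in this section we have normalized $\|\Sigma\|=1$, so $\lambda_1=1$, every eigenvalue satisfies $0\le\lambda_i\le 1$, and $r_0(\Sigma)=\sum_i\lambda_i=s$. The key observation is that each block $S_j$ produced by the greedy construction has eigenvalue mass lying in $[1,2)$, that is, $1\le\sum_{i\in S_j}\lambda_i<2$: the lower bound is exactly the condition defining the last index of $S_j$ (and holds with equality for $S_1=\{1\}$ since $\lambda_1=1$); the upper bound holds because, immediately before the last index is appended to the block, the partial sum over the block is strictly less than $1$, and the eigenvalue then appended is at most $\lambda_1=1$, so the total stays strictly below $2$.

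Second, I would note that the construction must terminate after finitely many blocks, say $m$ of them, because each block has mass at least $1$ while $\sum_i\lambda_i=s<\infty$ (if $s=\infty$ the construction never halts, $m=\infty$, and the claim is trivial, so we may assume $s<\infty$). Since the construction halted, the eigenvalues whose indices lie outside $S_1\cup\cdots\cup S_m$ have sum strictly less than $1$ --- otherwise a further block could be formed --- so $\sum_{i\in S_1\cup\cdots\cup S_m}\lambda_i>s-1=r_0(\Sigma)-1$. Summing the per-block upper bound over the $m$ blocks gives $2m>\sum_{i\in S_1\cup\cdots\cup S_m}\lambda_i>r_0(\Sigma)-1$, hence $m>(r_0(\Sigma)-1)/2$. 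Note this argument is insensitive to whether consecutive blocks are taken to overlap at one index, since the per-block mass bound and the ``uncovered mass $<1$'' fact hold in either reading.

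It remains to convert this into the asymptotic statement. By~\eqref{e:r0.big} we have $r_0(\Sigma)\ge c_2\,n\log(1+r_0(\Sigma))\ge c_2 n\log 2$, so $r_0(\Sigma)=\Omega(n)$; plugging this back in, $\log(1+r_0(\Sigma))\ge\log(1+c_2 n\log 2)\ge\tfrac12\log n$ once $n$ exceeds an absolute constant, and hence $r_0(\Sigma)\ge\tfrac12 c_2\,n\log n$. Therefore $m>(r_0(\Sigma)-1)/2=\Omega(n\log n)$, as claimed. I do not anticipate a genuine obstacle here: the argument is an elementary greedy mass-counting estimate. The two points that need a little care are getting the per-block bound strictly below $2$ (which relies on the normalization $\lambda_1=1$ forcing every individual $\lambda_i\le 1$) and the mild bootstrap that upgrades $r_0(\Sigma)=\Omega(n)$ to $r_0(\Sigma)=\Omega(n\log n)$ from the self-referential inequality~\eqref{e:r0.big}.
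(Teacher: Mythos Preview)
Your proof is correct and follows essentially the same approach as the paper: both arguments use that each block $S_j$ has mass strictly less than $2$, that the residual uncovered mass is less than $1$, and then invoke \eqref{e:r0.big} to conclude $r_0(\Sigma)=\Omega(n\log n)$. The paper packages the same steps into a contradiction and analyzes the function $s\mapsto s/\ln(1+s)$ directly, whereas you give a cleaner direct count $m>(r_0(\Sigma)-1)/2$ followed by a two-step bootstrap on \eqref{e:r0.big}; this is a stylistic difference, not a substantive one.
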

\begin{proof}
For all $j$, $\sum_{i \in S_j} \lambda_i < 2$.  
Thus, for all $k$, 
$\sum_{i \leq i_k} \lambda_i = \sum_{j \leq k} \sum_{i \in S_j} \lambda_i < 2 k$.  
Assume for contradiction that, for $k < \frac{c_2 n \ln n}{4}$, 
after $S_k$,
it is not possible to add any more sets.  Then
$\sum_{i \leq i_k} \lambda_i < \frac{c_2 n \ln n}{2}$,
and, since no more sets can be added,
$\sum_{i = 1}^{\infty} \lambda_i < 1 + \frac{c_2 n \ln n}{2}$.
We claim that, for large enough $n$, this contradicts the assumption that
$\frac{\sum_{i = 1}^{\infty} \lambda_i}
      {\ln \left(1 + \sum_{i = 1}^{\infty} \lambda_i \right)} \geq c_2 n$.
To see why, consider the function $\phi : \R^+ \rightarrow \R^+$ defined
by $\phi(s) = \frac{s}{\ln (1 + s)}$.  The function $\phi$ is
increasing for $s \geq 1$, so it suffices to show that
$\phi\left( 1 + \frac{c_2 n \ln n}{2} \right) < c_2 n$,
and 
\begin{align*}
\phi\left( 1 + \frac{c_2 n \ln n}{2} \right) 
  &  = \frac{1 + \frac{c_2 n \ln n}{2}}{\ln \left(2 + \frac{c_2 n \ln n}{2} \right)} \\
  &  = \left( \frac{c_2}{2} + o(1) \right) n,
\end{align*}
yielding the contradiction and completing the proof.
\end{proof}

\begin{definition}
If the number of sets produced by the process
of Definition~\ref{d:S} is finite, let $d$ be this
finite number.  Otherwise, let $d = \lceil n \ln n \rceil$.
\end{definition}

Now, informally, we, in our role as an adversary, commit to
assigning all covariates in $S_j$ the same weight.  The
following definition formalizes this idea.
\begin{definition}
Define a mapping $\phi$ from $\R^d$ to $\bbH$ as follows.
For $w \in \R^d$, $\phi(w) = \theta$ where, for all
$j$ in $[d]$, for all $i$ in $S_j$, $\theta_i = w_j$.
For all $i > i_d$, $\theta_i = 0$.
\end{definition}

We would like to show that applying $\phi$ to an
$L_2$ packing yields a $\rho$-packing, which is done
in the following lemma.
\begin{lemma}
\label{l:rho.by.ell2}
For all $u,v \in \R^n$, $\rho(\phi(u), \phi(v)) \geq || u - v ||$.
\end{lemma}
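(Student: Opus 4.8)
The plan is a direct coordinate computation exploiting the fact that $\Sigma$ is diagonal and that $\phi$ is constant on each block $S_j$. Treating $u,v\in\R^d$ and writing $\theta:=\phi(u)$, $\theta':=\phi(v)$, expand
\[
\rho(\phi(u),\phi(v))^2 = (\theta-\theta')^\top\Sigma(\theta-\theta')
  = \sum_{i}\lambda_i(\theta_i-\theta_i')^2,
\]
using that $\Sigma=\diag(\lambda_1,\lambda_2,\ldots)$. The indices $i>i_d$ contribute nothing, since $\theta_i=\theta_i'=0$ there, so the sum runs over $i\le i_d$, which is exactly $\bigcup_{j=1}^d S_j$.

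Next I would regroup the sum by blocks. For $i\in S_j$ we have $\theta_i-\theta_i' = u_j-v_j$ by the definition of $\phi$, so
\[
\rho(\phi(u),\phi(v))^2
  = \sum_{j=1}^d (u_j-v_j)^2\!\sum_{i\in S_j}\lambda_i .
\]
Now invoke Definition~\ref{d:S}: for $j\ge 2$ the block $S_j$ is chosen precisely so that $\sum_{i\in S_j}\lambda_i\ge 1$, and for $j=1$ we have $\sum_{i\in S_1}\lambda_i=\lambda_1=1$ (recall the normalization $\|\Sigma\|=1$, hence $\lambda_1=1$). Therefore every block weight is at least $1$, giving
\[
\rho(\phi(u),\phi(v))^2 \ge \sum_{j=1}^d (u_j-v_j)^2 = \|u-v\|^2,
\]
and taking square roots yields the claim.

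There is essentially no obstacle here; the only thing to be careful about is that the inequality $\sum_{i\in S_j}\lambda_i\ge 1$ is the \emph{defining} property of the sets in Definition~\ref{d:S} together with $\lambda_1=1$ for the first block, and that the tail indices $i>i_d$ are harmless because $\phi$ sends both arguments to $0$ there. This lemma is the mechanism by which an $\ell_2$-packing of $\R^d$ pushes forward, under $\phi$, to a $\rho$-packing in $\bbH$, which will be used to build the hard instance of $\theta^*$ in the second paragraph of Theorem~\ref{th::main}.
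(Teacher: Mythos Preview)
Your proof is correct and follows essentially the same route as the paper's: expand $\rho(\phi(u),\phi(v))^2$ coordinatewise, regroup by blocks $S_j$, and use $\sum_{i\in S_j}\lambda_i\ge 1$ together with $\lambda_1=1$ to bound each block weight below by $1$. You also correctly note that the lemma's domain should be $\R^d$ rather than $\R^n$, matching the definition of $\phi$.
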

\begin{proof}
\begin{align*}
\rho(\phi(u), \phi(v))^2 
& = \sum_i \lambda_i (\phi(u)_i - \phi(v)_i)^2 \\
& = \sum_j \left( \sum_{i \in S_j} \lambda_i \right) (u_j - v_j)^2 \\
& \geq \sum_j (u_j - v_j)^2.
\end{align*}
\end{proof}

Let $A$ be the least-norm interpolation algorithm.  
We will bound the accuracy of $A$ by bounding its
performance in terms of an algorithm $C$ built using
$A$ as a subroutine, as was done in a related context
in \cite{bartlett1996fat}.  The definition of Algorithm
$C$ is illustrated in Figure~\ref{fig:C}, which is reproduced
from \cite{bartlett1996fat}.  
\begin{figure}[!ht]
    \centering
   \includegraphics[width=0.4\textwidth]{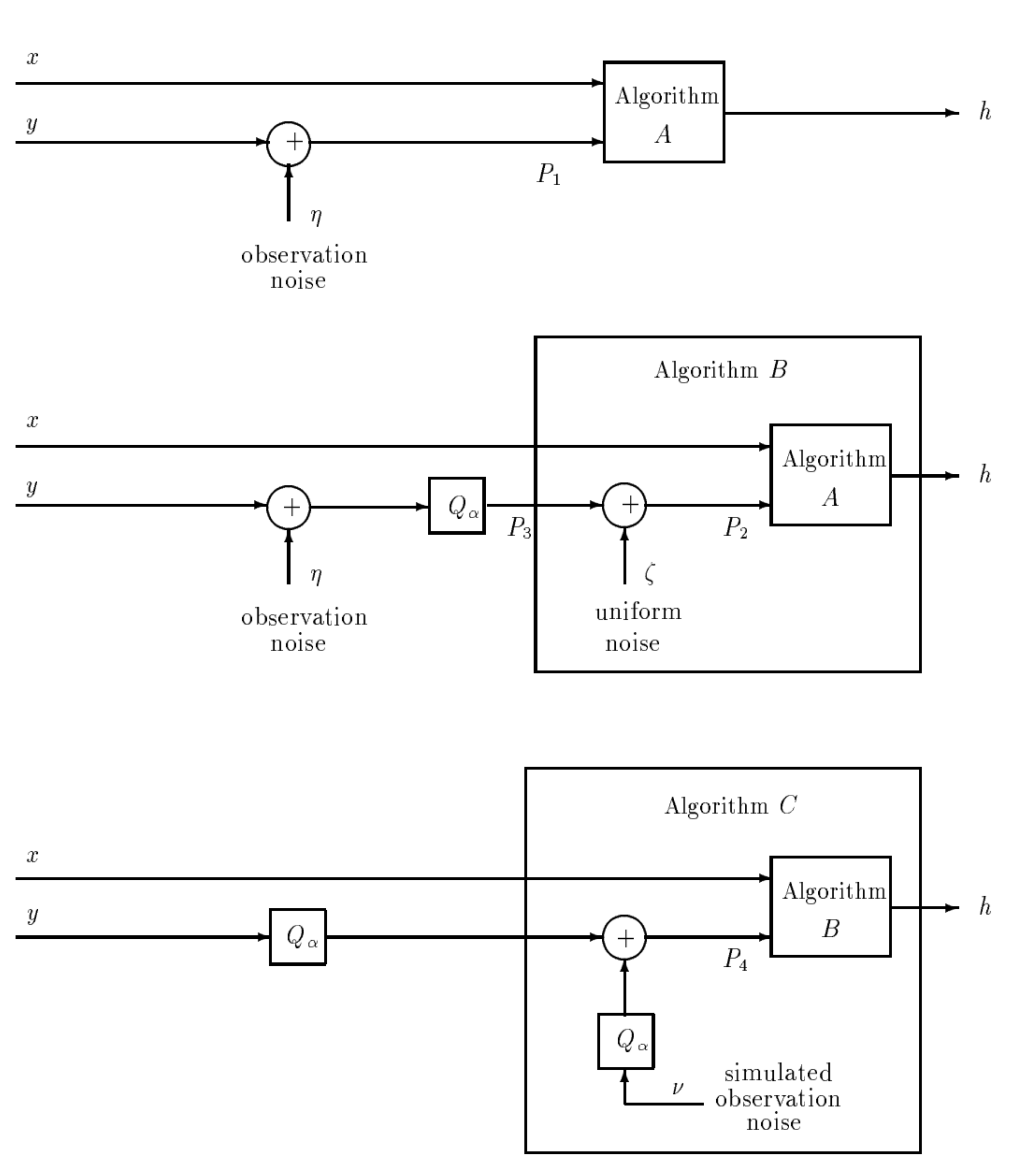}
    \caption{A diagram illustrating the definition of Algorithm
             $C$.}
\label{fig:C}
\end{figure}
The definition uses the
function $Q_{\alpha}$ that rounds its input to the nearest
multiple of $\alpha$.  Algorithm $C$ applies algorithm
$A$ to training data whose response variables have been
modified.  For each example $(x,y)$, and
simulated artificial noise $\eps$ distributed as
$N(0,1)$, and artificial noise $\zeta$ distributed uniformly
on $(-\alpha/2,\alpha/2)$, 
Algorithm $C$ gives $(x, y + Q_{\alpha}(\eps) + \zeta)$ to
$A$.  
The following lemma is similar to Lemma 5
of \cite{bartlett1996fat}.  One important difference is that
we show that Algorithm $C$ approximates the linear
function parameterized by $\theta^*$, not its discretization.
\begin{lemma}
\label{l:to.quantized}
If the linear interpolant algorithm $A$ 
has error $\tau$ from $n$ examples 
drawn from $N(0,\Sigma)$ with independent $N(0,1)$ noise with probability
$1 - \delta$, and
\[
\alpha \leq \min \left\{ \frac{\delta}{2 n}, 2 \tau \right\}
\]
then, in the absence of noise, Algorithm $C$, given
$n$ examples of the form $(x,Q_{\alpha}(\theta^{\top} x))$, 
with probability $1 - 2 \delta$,
achieves $\rho(\htheta, \theta^*)^2 \leq \tau$.
\end{lemma}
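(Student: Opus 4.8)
The plan is to transfer the guarantee of $A$ to $C$ by a coupling / total‑variation argument. Recall that (under the normalization $\|\Sigma\|=\|\theta^\ast\|=1$ already in force, so that ``error $\tau$'' means $R(\htheta)=\rho(\htheta,\theta^\ast)^2\le\tau$) Algorithm $A$ succeeds with probability $1-\delta$ when fed $n$ i.i.d.\ pairs from the law $P$ under which $x\sim N(0,\Sigma)$ and $y\mid x\sim N(x^\top\theta^\ast,1)$. In the no‑noise case, $C$ runs $A$ on the modified sample $(x_i,\tilde y_i)_{i=1}^n$, where
\[
  \tilde y_i \;=\; Q_\alpha(\theta^{\ast\top}x_i)+Q_\alpha(\eps_i)+\zeta_i,
\]
the $x_i$ are i.i.d.\ $N(0,\Sigma)$, and the $(\eps_i,\zeta_i)$ are i.i.d.\ with $\eps_i\sim N(0,1)$, $\zeta_i\sim\mathrm{Unif}(-\alpha/2,\alpha/2)$, all independent. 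Hence the pairs $(x_i,\tilde y_i)$ are i.i.d.\ from some law $\mu$, and $C$'s output equals $A$ applied to a sample drawn from $\mu^{\otimes n}$. Since for every event $E$ on the $n$‑sample one has $\Pr_{\mu^{\otimes n}}(E)\le\Pr_{P^{\otimes n}}(E)+\mathrm{TV}(\mu^{\otimes n},P^{\otimes n})$, it suffices to prove $\mathrm{TV}(\mu^{\otimes n},P^{\otimes n})\le\delta$; taking $E=\{\rho(\htheta,\theta^\ast)^2>\tau\}$ then bounds $C$'s failure probability by $2\delta$, which is the claim.

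The core estimate is the single‑example bound $\mathrm{TV}(\mu,P)=O(\alpha)$, from which subadditivity of total variation across products gives $\mathrm{TV}(\mu^{\otimes n},P^{\otimes n})\le n\,\mathrm{TV}(\mu,P)=O(n\alpha)\le\delta$ once $\alpha\le\delta/(2n)$ and the absolute constant is checked to be small enough (it is, with room to spare, from the estimates below). Because $\mu$ and $P$ have the same $x$‑marginal, $\mathrm{TV}(\mu,P)=\E_{x\sim N(0,\Sigma)}\big[\mathrm{TV}\!\big(\mathrm{Law}(\tilde y\mid x),\,N(x^\top\theta^\ast,1)\big)\big]$, so it is enough to bound the conditional total variation for fixed $x$. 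Put $\psi=\theta^{\ast\top}x$ and $Q_\alpha(\psi)=\psi+\Delta$ with $|\Delta|\le\alpha/2$; then, by translation invariance of total variation, the conditional comparison reduces to $\mathrm{TV}(\Delta+W,\,N(0,1))$ with $W:=Q_\alpha(\eps)+\zeta$, which the triangle inequality bounds by $\mathrm{TV}(\Delta+W,\,W)+\mathrm{TV}(W,\,N(0,1))$.

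For both pieces I would exploit that $W$ has the explicit density $f_W(w)=\alpha^{-1}\int_{I(w)}\varphi$, the average of the standard normal density $\varphi$ over the length‑$\alpha$ bin $I(w)$ containing $w$ (the $k$‑th such bin being $I_k$). Then $|f_W(w)-\varphi(w)|$ is at most the oscillation of $\varphi$ on $I(w)$, so $\int_{\R}|f_W-\varphi|$ is at most $\alpha$ times the total variation of $\varphi$ as a function, namely $\alpha\int_{\R}|\varphi'|=\alpha\sqrt{2/\pi}$ up to a harmless correction from the single bin straddling the origin; this yields $\mathrm{TV}(W,N(0,1))=O(\alpha)$. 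For the shift term, $f_W$ is a step function whose total variation as a function is $\alpha^{-1}\sum_k|p_{k+1}-p_k|$ with $p_k=\int_{I_k}\varphi$, and $\sum_k|p_{k+1}-p_k|\le\alpha\sqrt{2/\pi}$ by the same comparison with $\varphi$; hence $\int_{\R}|f_W(w-\Delta)-f_W(w)|\,dw\le|\Delta|\sqrt{2/\pi}$, so $\mathrm{TV}(\Delta+W,W)=O(\alpha)$. Adding the two completes the chain $\mathrm{TV}(\mu,P)=O(\alpha)$.

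I expect the only genuine work to be the quantitative control of these one‑dimensional total‑variation distances: tracking the absolute constants in the two density estimates so that their sum, multiplied by $n\alpha\le\delta/2$, stays below $\delta$, and observing that $\Delta$, although random through $x$, is a fixed number once $x$ is conditioned on (which is precisely what lets translation invariance of total variation apply). The coupling step and the reduction via the data‑processing inequality for total variation are routine. Note that the hypothesis $\alpha\le2\tau$ plays no role in this argument; I anticipate it is used later, when this lemma is combined with the packing argument that follows, to guarantee that the $\alpha$‑quantized responses remain fine enough relative to the target accuracy $\tau$ to pin down the correct packing element.
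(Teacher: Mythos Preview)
Your argument is correct and follows essentially the same route as the paper: both compare the law fed to $A$ inside $C$ with the genuine noisy law via a two-step total-variation bound (each step $O(\alpha)$, summed over $n$ examples), the only cosmetic difference being that the paper uses $Q_\alpha(y+\eta)+\zeta$ as the intermediate (citing the analogue of your density estimates as a black-box lemma from prior work) whereas you pass through $y+Q_\alpha(\eta)+\zeta$ and compute the densities explicitly. Your remark that the hypothesis $\alpha\le 2\tau$ is not used here is also consistent with the paper's proof.
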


The proof of Lemma~\ref{l:to.quantized} will be
deferred until we have proved some more lemmas.

Recall the definition of total
variation distance, $d_{TV}(P,Q) = \sup_E | P(E) - Q(E) |$.
The following lemma is implicit in the proof
of Lemma 6 of \cite{bartlett1996fat}.
\begin{lemma}
\label{l:quant.approx}
Let $\eta, \nu$ be random variables that are distributed
according to $N(0,1)$ and let $\zeta$ be uniform over
$[-\alpha/2,\alpha/2]$.  
\begin{enumerate}
\item[(a)] For any $y \in \R$, if
$P_1$ is the distribution of $y+\eta$
and $P_2$ is the distribution of 
$Q_{\alpha} (y + \eta) + \zeta$, we have 
$d_{TV}(P_1, P_2) \leq \alpha$.
\item[(b)] For any $y \in \R$, if
$P_3$ is the distribution of $Q_{\alpha} (y + \eta)$
and $P_4$ is the distribution of 
$Q_{\alpha} (y) + Q_{\alpha}(\eta)$, 
$d_{TV} (P_3, P_4) \leq \alpha$.
\end{enumerate}
\end{lemma}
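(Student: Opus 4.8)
The plan is to obtain both parts as instances of a single principle: applying a fixed measurable map cannot increase total variation distance (the data‑processing inequality, $d_{TV}(\mathrm{law}(T(U)),\mathrm{law}(T(V)))\le d_{TV}(\mathrm{law}(U),\mathrm{law}(V))$), combined with two elementary facts about $N(0,1)$. First, $d_{TV}(N(\mu,1),N(\mu',1))\le |\mu-\mu'|/\sqrt{2\pi}$, which follows from $2\,d_{TV}=\int_\R|\phi(x-\mu)-\phi(x-\mu')|\,dx\le|\mu-\mu'|\int_\R|\phi'(u)|\,du=|\mu-\mu'|\sqrt{2/\pi}$ by Fubini, where $\phi$ is the standard normal density. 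Second, $Q_\alpha$ commutes with translation by multiples of $\alpha$: $Q_\alpha(m\alpha+u)=m\alpha+Q_\alpha(u)$ for $m\in\Z$. (Ties in $Q_\alpha$ form a Lebesgue‑null set and are broken by any fixed convention throughout; the auxiliary variable $\nu$ in the statement plays no role.)

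For part (b): write $m\alpha:=Q_\alpha(y)$ and $r_y:=y-m\alpha$, so $|r_y|\le\alpha/2$. Since $\eta+y\sim N(y,1)$ we have $P_3=\mathrm{law}(Q_\alpha(\eta+y))$, while $P_4=\mathrm{law}(m\alpha+Q_\alpha(\eta))=\mathrm{law}(Q_\alpha(\eta+m\alpha))$ by translation‑commutation, and $\eta+m\alpha\sim N(m\alpha,1)$. The data‑processing inequality then gives $d_{TV}(P_3,P_4)\le d_{TV}(N(y,1),N(m\alpha,1))=d_{TV}(N(0,1),N(r_y,1))\le |r_y|/\sqrt{2\pi}\le \alpha/(2\sqrt{2\pi})<\alpha$.

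For part (a): let $f$ be the density of $W:=y+\eta\sim N(y,1)$, and for $k\in\Z$ let $B_k=[(k-\tfrac12)\alpha,(k+\tfrac12)\alpha)$, so that $Q_\alpha(W)=k\alpha$ precisely when $W\in B_k$. Because $\zeta$ is independent of $W$ and uniform on an interval of length exactly $\alpha$ centred at $0$ (matching the bin width), conditioning on $\{Q_\alpha(W)=k\alpha\}$ makes $Q_\alpha(W)+\zeta$ uniform on $B_k$; hence $P_2$ has density $g$ equal to the constant $\bar{f}_k:=\alpha^{-1}\int_{B_k}f$ on each $B_k$. Therefore $2\,d_{TV}(P_1,P_2)=\int_\R|f-g|=\sum_k\int_{B_k}|f-\bar{f}_k|$. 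Since $\bar{f}_k$ lies between $\inf_{B_k}f$ and $\sup_{B_k}f$, each term satisfies $\int_{B_k}|f-\bar{f}_k|\le\alpha\cdot\mathrm{osc}_{B_k}(f)\le\alpha\int_{B_k}|f'|$; summing over $k$ gives $2\,d_{TV}(P_1,P_2)\le\alpha\int_\R|f'|=\alpha\sqrt{2/\pi}$, whence $d_{TV}(P_1,P_2)\le\tfrac12\alpha\sqrt{2/\pi}<\alpha$.

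I expect the only delicate step to be the identification in part (a) of the law of $Q_\alpha(W)+\zeta$ as the piecewise‑constant ``bin‑averaged'' density $g$; this is exactly where the hypotheses that $\zeta$ is independent of $\eta$ and uniform on an interval of width $\alpha$ are used, and once it is established both bounds follow by routine bookkeeping. The remaining ingredients—the data‑processing inequality, the value $\int_\R|\phi'|=\sqrt{2/\pi}$, and the piecewise‑constant $L^1$ estimate $\int_{B_k}|f-\bar{f}_k|\le\alpha\,\mathrm{osc}_{B_k}(f)$—are standard, and both conclusions hold with room to spare (each bound is in fact at most $\alpha/(2\sqrt{2\pi})$ for (b) and $\tfrac12\alpha\sqrt{2/\pi}$ for (a), comfortably below $\alpha$).
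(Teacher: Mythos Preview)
Your proof is correct. The paper does not actually prove this lemma: it simply states that the result ``is implicit in the proof of Lemma~6 of~\cite{bartlett1996fat}'' and moves on. Your argument is therefore a self-contained substitute for that citation. The approach you take---identifying the law of $Q_\alpha(W)+\zeta$ as the bin-averaged density and bounding the $L^1$ error via $\alpha\int|\phi'|$ for part~(a), and using the data-processing inequality together with the translation-equivariance $Q_\alpha(m\alpha+u)=m\alpha+Q_\alpha(u)$ for part~(b)---is clean and in fact yields the sharper constants $\tfrac{1}{2}\alpha\sqrt{2/\pi}$ and $\alpha/(2\sqrt{2\pi})$, both comfortably below the stated~$\alpha$.
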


We will use the following, which is implicit
in the proof of Lemma 8 of \cite{bartlett1992learning}.
\begin{lemma}
\label{l:product}
If $P_1,...,P_n, Q_1,...,Q_n$ are probability distributions over
a domain $U$, and $\chi$ is a $[0,1]$-valued random variable
defined on $U^n$ then
\[
\left| \E_{\prod_t P_t} (\chi) - \E_{\prod_t Q_t} (\chi) \right|
 \leq \sum_{t=1}^n d_{TV}(P_t,Q_t).
\]
\end{lemma}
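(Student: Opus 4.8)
The plan is to run the standard hybrid (telescoping) argument over the $n$ coordinates. For $k = 0,1,\ldots,n$ I would introduce the hybrid product measure
\[
  H_k = Q_1 \times \cdots \times Q_k \times P_{k+1} \times \cdots \times P_n
\]
on $U^n$, so that $H_0 = \prod_{t} P_t$ and $H_n = \prod_{t} Q_t$. By the triangle inequality,
\[
  \left| \E_{\prod_t P_t}(\chi) - \E_{\prod_t Q_t}(\chi) \right|
    \le \sum_{k=1}^n \left| \E_{H_{k-1}}(\chi) - \E_{H_k}(\chi) \right| ,
\]
so it will be enough to show that the $k$th summand is at most $d_{TV}(P_k,Q_k)$.

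Fix $k$. The measures $H_{k-1}$ and $H_k$ agree in every coordinate except the $k$th, where $H_{k-1}$ uses $P_k$ and $H_k$ uses $Q_k$; write $\nu_k$ for their common marginal on the other $n-1$ coordinates. For a fixed value $w$ of those coordinates, let $\chi_w \colon U \to [0,1]$ denote the corresponding section of $\chi$. Since $\chi$ is bounded and measurable on the product space, Fubini's theorem gives
\[
  \E_{H_{k-1}}(\chi) - \E_{H_k}(\chi)
    = \int \left( \E_{P_k}\chi_w - \E_{Q_k}\chi_w \right)\, d\nu_k(w).
\]
The final ingredient is the elementary fact that for any $[0,1]$-valued measurable $f$ on $U$ one has $|\E_{P_k}f - \E_{Q_k}f| \le d_{TV}(P_k,Q_k)$: indeed, the layer-cake identity $f = \int_0^1 \ind[f>s]\,ds$ yields $\E_{P_k}f - \E_{Q_k}f = \int_0^1 \bigl(P_k(f>s) - Q_k(f>s)\bigr)\,ds$, and each integrand is bounded in absolute value by $\sup_E|P_k(E)-Q_k(E)| = d_{TV}(P_k,Q_k)$. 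Applying this with $f = \chi_w$ inside the integral against $\nu_k$ gives $|\E_{H_{k-1}}(\chi) - \E_{H_k}(\chi)| \le d_{TV}(P_k,Q_k)$, and summing over $k$ completes the proof.

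I do not expect any real obstacle: the argument is entirely routine. The only points requiring a little care are the measurability of the sections $\chi_w$ and the applicability of Fubini's theorem (both fine because $\chi$ is a bounded measurable function on $U^n$), and matching the constant to the normalization $d_{TV}(P,Q) = \sup_E|P(E)-Q(E)|$ adopted here (no factor of $\tfrac12$), which the layer-cake bound delivers directly.
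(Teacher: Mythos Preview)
Your proof is correct; the hybrid/telescoping argument you give is the standard one, and your use of the layer-cake representation to bound $|\E_{P_k}\chi_w - \E_{Q_k}\chi_w|$ by $d_{TV}(P_k,Q_k)$ in the paper's normalization is clean. There is nothing to compare against here: the paper does not supply its own proof of this lemma but simply states that it is implicit in the proof of Lemma~8 of \cite{bartlett1992learning}, so your argument fills in what the paper omits.
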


\medskip

Now, we are ready to prove Lemma~\ref{l:to.quantized}.  The
proof closely follows the proof of Lemma 5 in \cite{bartlett1996fat}.

{\em Proof (of Lemma~\ref{l:to.quantized})}.
Let $A(X, \bs{\eps}, \theta^*)$ be the output $\htheta$ of
the least-norm interpolant when the covariates are $X$,
the noise is $\bs{\eps}$, and the target is $\theta^*$.  
Let $A(X, \bs{y})$ be the output $\htheta$ of
the least-norm interpolant when the covariates are $X$,
and the response variables are $\bs{y}$, and 
let $\sam(X,\bs{\eps},\theta^*) = (X, X \theta^* + \bs{\eps})$ be
the input arising from covariates $X$, regressor $\theta^*$ and
noise $\bs{\eps}$.  

By assumption
\[
N(0,\Sigma)^n \times N(0,1)^n
 \{ (X, \bs{\eps}) 
       :
     \rho( A(\sam(X, \bs{\eps}, \theta^*)), \theta^*)^2 \geq \tau \} < \delta.
\]
Let $\zeta_t$ be a random variable with distribution $U_{\alpha}$,
where $U_{\alpha}$ is the uniform distribution over $(-\alpha/2, \alpha/2)$.
Let $B$ be the randomized algorithm that adds noise 
$\zeta_t$ to each $y_t$ value it receives, passes the result to
Algorithm $A$, and returns $A$'s output.

Fix $X$, and define
\begin{align*}
E & = \{ \bs{\eps} \in \R^n: 
      \rho( A(\sam(X, \bs{\eps}, \theta^*)), \theta^*)^2 \ge \tau \} \\
E_1 & = \{ \bs{y} \in \R^n: 
      \rho( A(X,\bs{y}), \theta^*)^2 \ge \tau \}.
\end{align*}
We have
\[
N(0,1)^n (E) = \left( \prod_{t=1}^n P_{1|x_t} \right) (E_1),
\]
where $P_{1|x_t}$ is the distribution of $(\theta^*)^{\top} x_t + \eps_t$.

Define $P_{2|x_t}$ as the distribution of
$Q_{\alpha} ((\theta^*)^{\top} x_t + \eps_t) + \zeta_t$.
From Lemma~\ref{l:quant.approx},
$d_{TV}(P_{1|x_t}, P_{2|x_t}) \leq \alpha$.
Applying Lemma~\ref{l:product} with $\chi$ as
the indicator function for $E_1$,
\[
\left| 
\left( \prod_t P_{2|x_t} \right) (E_1) - \left( \prod_t P_{1|x_t} \right) (E_1) 
\right|
 \leq \alpha n.
\]
Since $\alpha \leq \frac{\delta}{2 n}$, this implies
\begin{align*}
\left( \prod_t P_{2|x_t} \right) (E_1)
& \leq \left( \prod_t P_{1|x_t} \right) (E_1)
      + \delta/2 
= N(0,1)^n (E) + \delta/2.
\end{align*}
Let $P_{3|x_t}$ be the distribution of $Q_{\alpha} ((\theta^*)^{\top} x_t + \eps_t)$,
and let
\[
E_3 =  \{ (\bs{y}, \bs{\zeta}) \in \R^n \times \R^n:
      \rho( A(X,\bs{y} + \bs{\zeta}), \theta^*)^2 > \tau
       \}
\]
so that
\[
\left( \prod_t P_{2|x_t} \right) (E_1)
 = \left( \prod_t ( P_{3|x_t} \times U_{\alpha}^n) \right) (E_3).
\]
Let $P_{4|x_t}$ be the distribution of 
$Q_{\alpha} ((\theta^*)^{\top} x_t) + Q_{\alpha} (\eps_t)$.
Applying Lemma~\ref{l:product}, we get
\begin{align*}
& \left|
\left( \prod_t ( P_{3|x_t} \times U_{\alpha}^n) \right) (E_3)
 - \left( \prod_t ( P_{4|x_t} \times U_{\alpha}^n) \right) (E_3)
\right| 
\leq \sum_{t=1}^m d_{TV}(P_{3|x_t}, P_{4|x_t}).
\end{align*}
From Lemma~\ref{l:quant.approx}, $d_{TV}(P_{3|x_t}, P_{4|x_t}) \leq \alpha$,
so
\begin{align*}
\left( \prod_t ( P_{4|x_t} \times U_{\alpha}^n) \right) (E_3)
 & \leq \left( \prod_t ( P_{3|x_t} \times U_{\alpha}^n) \right) (E_3)
      + \delta/2 \\
 & = \left( \prod_t P_{2|x_t} \right) (E_1)
      + \delta/2 \\
 & \leq N(0,1)^n (E)
      + \delta.
\end{align*}
Averaging over the random choice of $X$,
the probability, for 
$(X, \bs{\zeta}, \bs{\eps})$
distributed as $N(0,\Sigma)^n \times U_{\alpha}^n \times N(0,1)^n$,
that
$\rho(A (X, Q_{\alpha} (X \theta^*)
                    + Q_{\alpha} (\bs{\eps})
                    + \bs{\zeta})),
         \theta^*)^2 > \tau$, is at most
\begin{align*}
& (N(0,\Sigma)^n \times N(0,1)^n)
 \{ (X, \bs{\eps}) :
    \rho(A (\sam(X, \bs{\eps}, \theta^*), \theta^*)^2 > \tau \}  + \delta 
  \;\leq\; 2 \delta.
\end{align*}
But $A (X, Q_{\alpha} (X \theta^*)
                    + Q_{\alpha} (\bs{\eps})
                    + \bs{\zeta})$ is the output of
the randomized algorithm $C$, so this completes the proof. \qed

So, informally, we have shown that if the least norm interpolant
can learn unit length weight vectors with noise and $N(0,\Sigma)$
data, then there is an algorithm $C$ than can learn
from quantized data without noise.  The next step is to lower
bound the error of $C$.

Recall that we have fixed an $n$, that
$s \eqdef \sum_{i=1}^{\infty} \lambda_i \geq c n$, and that
$\Sigma = \diag(\lambda_1, \lambda_2,...)$.  

We will use the following, which is
an immediate consequence
of Corollary~\ref{cor:Bernstein}.
\begin{lemma}
\label{l:x.short}
For each row $x_t$ of $X$, and each $q > 1$,
\[
\Pr(|| x_t || > q \sqrt{s}) \leq \exp(-q^2/c).
\]
\end{lemma}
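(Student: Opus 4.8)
The statement is a routine sub-exponential tail bound for $\|x_t\|^2$ about its mean, and, as the preceding sentence in the paper already signals, it reduces entirely to Corollary~\ref{cor:Bernstein}.

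The plan is to diagonalize and then apply Bernstein's inequality. Writing $\Sigma = V\Lambda V^\top$ and recalling that in the setting of the theorem's second paragraph $x_t\sim\mathcal N(0,\Sigma)$, we have $x_t = V\Lambda^{1/2}z_t$ with $z_t$ a standard Gaussian vector (so the relevant subgaussian constant is universal, which is why the conclusion can be stated with a universal $c$). Since $V$ is orthogonal, $\|x_t\|^2 = \sum_i\lambda_i z_{t,i}^2$, with $\E\|x_t\|^2 = \sum_i\lambda_i = s$. Then $\|x_t\|^2 - s = \sum_i\lambda_i(z_{t,i}^2-1)$ is a centered weighted sum of independent sub-exponential variables --- by Lemma~\ref{lemma:square of subgaussian} each $z_{t,i}^2-1$ is $O(1)$-sub-exponential, and $\{\lambda_i\}$ is exactly the kind of non-increasing, non-negative, summable weight sequence Corollary~\ref{cor:Bernstein} asks for. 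Applying it, for any $x>0$ we get, with probability at least $1-2e^{-x}$,
\[
\bigl|\,\|x_t\|^2 - s\,\bigr|\ \le\ a\max\!\Bigl(x\lambda_1,\ \sqrt{\textstyle x\sum_i\lambda_i^2}\,\Bigr)\ \le\ a\max\!\bigl(x,\ \sqrt{xs}\,\bigr),
\]
where in the last step I used the normalization $\lambda_1=\|\Sigma\|=1$ in force throughout this appendix together with the elementary bound $\sum_i\lambda_i^2\le\lambda_1\sum_i\lambda_i=s$.

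To finish I would take $x$ equal to a small enough constant multiple of $q^2 s$ so that the right-hand side is at most $(q^2-1)s$; since $q^2-1$ is a fixed fraction of $q^2$ on $[\sqrt 2,\infty)$, one choice of the constant works simultaneously for all $q\ge\sqrt 2$ (checking separately the two branches $x\le s$ and $x>s$ of the maximum). On that event $\|x_t\|^2\le q^2 s$, i.e.\ $\|x_t\|\le q\sqrt s$, and the failure probability $2e^{-x}$ is at most $e^{-q^2/c}$ for a suitable universal $c$, the stray factor $2$ being absorbed. The remaining range $1<q<\sqrt 2$ is handled by a crude bound: there the target $e^{-q^2/c}$ is a fixed constant, and since $\|x_t\|^2$ is a right-skewed non-negative sum its mean exceeds its median, so $\Pr(\|x_t\|>q\sqrt s)\le\Pr(\|x_t\|>\sqrt s)<\tfrac12\le e^{-q^2/c}$ once $c$ is large enough. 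There is no real obstacle; the only points that need a moment's attention are matching the two regimes of Bernstein's inequality (which is precisely why one passes through $\sum_i\lambda_i^2\le s$ to eliminate the sub-gaussian term) and observing that the hypothesis $q>1$ is genuinely needed, the inequality being false for $q\le 1$.
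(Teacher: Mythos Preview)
Your argument is correct and is exactly what the paper intends: diagonalize so that $\|x_t\|^2-s=\sum_i\lambda_i(z_{t,i}^2-1)$ is a weighted sum of centered sub-exponentials, then read off the tail bound from Corollary~\ref{cor:Bernstein} (using $\lambda_1=1$ and $\sum_i\lambda_i^2\le s$ to collapse the two Bernstein regimes). The only step that is not fully rigorous is the ``right-skewed $\Rightarrow$ mean exceeds median'' claim for $1<q<\sqrt2$ --- that implication is a heuristic, not a theorem --- but the lapse is harmless: the statement happens to be true for generalized chi-squared sums, and in any case the sole use of the lemma (inside Lemma~\ref{l:bi}) is at integer $q\ge 1$, where the $q=1$ term can simply be bounded by $1$ and absorbed into the leading constant.
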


The proof of the following lemma borrows heavily
from \cite{benedek1991learnability}.
\begin{lemma}
\label{l:bi}
If $1/\alpha = O(n)$, there is a constant $\tau$ such that,
for any regression algorithm $C$, 
for all large enough $n$, if $C$ is given
$n$ examples of the form $(X, Q_{\alpha} (X \theta^*))$,
if the rows of $X$ are $n$ independent draws from $N(0,\Sigma)$, 
with probability at least $1/2$, its output 
$\htheta$ satisfies $\rho(\htheta, \theta^*)^2 > \tau$.
\end{lemma}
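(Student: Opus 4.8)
The plan is to prove Lemma~\ref{l:bi} by the information‑theoretic packing argument of \cite{benedek1991learnability}: exhibit exponentially many target vectors that are well separated in the $\rho$ metric but that, on a high‑probability event, produce only comparatively few distinct quantized data sets, so that $C$ cannot tell most of them apart. Concretely, fix a small universal constant $\tau>0$ (to be pinned down last). Using the partition $S_1,S_2,\dots$ of Definition~\ref{d:S}, which has $d=\Omega(n\ln n)$ blocks with the hidden constant growing with $c_2$, and the embedding $\phi:\R^d\to\bbH$, take a $2\sqrt{\tau}$‑separated (e.g.\ Gilbert--Varshamov sign‑pattern) packing $w^{(1)},\dots,w^{(M)}$ of a sphere in $\R^d$ and set $\theta^{(m)}=\phi(w^{(m)})$, rescaled so $\|\theta^{(m)}\|=1$. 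Since $\rho(\phi(u),\phi(v))\ge\|u-v\|$ by Lemma~\ref{l:rho.by.ell2}, and since $\sum_{i\in S_j}\lambda_i\in[1,2)$ for every $j$, both $\|\phi(w)\|^2=\sum_j|S_j|w_j^2$ and $\rho(\phi(w))^2=\sum_j(\sum_{i\in S_j}\lambda_i)w_j^2\in[\|w\|^2,2\|w\|^2)$ are controlled; the $\theta^{(m)}$ are unit vectors, pairwise $\rho$‑distance $\ge2\sqrt{\tau}$, with $\rho(\theta^{(m)})^2\le\|\theta^{(m)}\|^2\|\Sigma\|=1$, and a volumetric bound gives $\ln M=\Omega(d)=\Omega(n\ln n)$, the constant again controlled by $c_2$ and $\tau$.

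Next I would bound the number of quantized data sets. Writing $x_t$ in the block basis, each $\langle\mathbf 1_{S_j},x_t\rangle$ is Gaussian with variance $\sum_{i\in S_j}\lambda_i\le2$, so $\|(\langle\mathbf 1_{S_j},x_t\rangle)_j\|^2$ is a sum of $O(1)$‑subexponential terms; by a block‑basis variant of Lemma~\ref{l:x.short} (via Corollary~\ref{cor:Bernstein}) plus a union bound over $t\le n$, the event $G=\{\max_t\|(\langle\mathbf 1_{S_j},x_t\rangle)_j\|\le q\sqrt d\}$ has probability $\ge3/4$ for an absolute constant $q$ (using $d\ge n$ so the tail is summable in $t$). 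On $G$, Cauchy--Schwarz gives $|(\theta^{(m)})^\top x_t|\le\|w^{(m)}\|\,q\sqrt d=O(\sqrt{n\ln n})$ for every $m$ and $t$ at once, so each $Q_\alpha((\theta^{(m)})^\top x_t)$ lies in a set of size $O(\sqrt{n\ln n}/\alpha)$; since $1/\alpha=O(n)$, the map $m\mapsto Q_\alpha(X\theta^{(m)})$ takes at most $N:=(O(n\sqrt{n\ln n}))^{n}$ values given $X\in G$, i.e.\ $\ln N=O(n\ln n)$ with an \emph{absolute} constant, independent of $\Sigma$.

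Then comes the counting step. Let $m$ be uniform on $\{1,\dots,M\}$ and independent of $X$, and feed $C$ the pair $(X,Q_\alpha(X\theta^{(m)}))$. Since the $\theta^{(m)}$ are $2\sqrt{\tau}$‑separated, any fixed output $\htheta$ has $\rho(\htheta,\theta^{(m)})^2\le\tau$ for at most one index $m$; hence, given $X\in G$, at most $N$ of the $M$ indices give $\rho(\htheta,\theta^{(m)})^2\le\tau$, so $\Pr[\rho(\htheta,\theta^{(m)})^2\le\tau]\le N/M+\Pr(X\notin G)\le N/M+1/4$. Fixing $\tau$ small and then taking $c_2=a_2$ (and $n_0$) large enough that $M\ge 4N$ --- possible because $\ln M=\Omega(c_2 n\ln n)$ while $\ln N$ has an absolute constant --- makes this probability strictly below $1/2$, so some $\theta^*:=\theta^{(m)}$ has $\Pr[\rho(\htheta,\theta^*)^2>\tau]\ge 1/2$, which is the claim; feeding this into the contrapositive of Lemma~\ref{l:to.quantized} (with $\delta$ just below $1/4$ and $\alpha\le\min\{\delta/(2n),2\tau\}$, consistent with $1/\alpha=O(n)$) then yields the second paragraph of Theorem~\ref{th::main}.

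The main obstacle is the first step: producing $2^{\Omega(n\ln n)}$ \emph{unit} vectors that stay $\Omega(1)$‑separated in the $\rho$ metric, while keeping the quantization range from blowing up the count $N$. This is precisely where the hypothesis $r_0(\Sigma)/\bigl(n\ln(1+r_0(\Sigma))\bigr)\ge c_2$ enters: it forces $d=\Omega(c_2 n\ln n)$ blocks, which is what allows $\ln M$ to beat the absolute‑constant bound on $\ln N$ even after the range has been inflated to $\Theta(\sqrt{n\ln n})$. Threading the constants --- choosing $\tau$, then the range $q$, then $c_2$ and $n_0$ in that order, and verifying that forcing $\|\theta^{(m)}\|=1$ in the $\phi$‑construction costs only a bounded factor in the separation --- is the delicate part; everything else is routine bookkeeping on top of Lemma~\ref{l:rho.by.ell2}, Corollary~\ref{cor:Bernstein}, Lemma~\ref{l:x.short}, and the reduction of Lemma~\ref{l:to.quantized}.
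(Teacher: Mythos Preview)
Your overall strategy --- pack in $\rho$, count quantized outputs, compare --- is exactly the paper's, following Benedek--Itai. The implementation differs in two places: you bound $|x_t^\top\theta^{(m)}|$ through the block structure (Cauchy--Schwarz in the $d$ block sums, getting $O(\sqrt d)$), whereas the paper uses the cruder $|x_t^\top g|\le\|x_t\|\,\|g\|\le\|x_t\|$ and then integrates over the tail of $\max_t\|x_t\|$; and you condition on a single high-probability event rather than taking an expectation.

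The real gap is in the step you yourself call ``delicate'': forcing $\|\theta^{(m)}\|=1$. Your assertion that this ``costs only a bounded factor in the separation'' is not correct. You correctly have $\|\phi(w)\|^2=\sum_j|S_j|\,w_j^2$, but nothing in the hypotheses bounds the block sizes $|S_j|$: Definition~\ref{d:S} only pins down $\sum_{i\in S_j}\lambda_i\in[1,2)$, so if the eigenvalues inside a block are small, $|S_j|$ can be arbitrarily large. With a sign-pattern packing (all $|w_j|$ equal), the normalizing factor is the same for every $m$, namely $\sqrt{d/i_d}$ where $i_d=\sum_{j\le d}|S_j|$; the $\rho$-separation inherited from Lemma~\ref{l:rho.by.ell2} after rescaling is then only $\Theta(\sqrt{d/i_d})$, which need not be bounded below by any constant. (For instance, take $\lambda_i=1/i$ for $i\le p$: the block sizes grow geometrically, $i_d\approx e^d$, and $d/i_d\to 0$.) So you cannot fix $\tau$ to a universal constant this way, and the family you construct does not have the claimed $\rho$-separation once normalized.

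The paper's proof does not insist on $\|\theta^{(m)}\|=1$: it places the packing $G$ in the unit \emph{ball} of $\bbH$, and $\|g\|\le 1$ is exactly what its range bound $|x_t^\top g|\le\|x_t\|$ needs. Since the statement of Lemma~\ref{l:bi} itself does not constrain $\|\theta^*\|$, working in the ball suffices for the lemma; dropping your unit-sphere requirement removes the obstacle you have introduced. The paper then simply asserts the existence of a $\rho$-packing of size $(1/\tau)^{c_3 d}$ inside the $\bbH$-unit ball, rather than constructing it explicitly through $\phi$.
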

\begin{proof}
For $\tau > 0$ to be chosen later,
assume for contradiction that, with probability
$1/2$, $\rho(\htheta, \theta^*)^2 \leq \tau$.
For an absolute constant $c_3$, let $G$ be a set
of $(1/\tau)^{ c_3 d}$ members of the unit ball in $\bbH$ that
are pairwise separated by $3 \sqrt{\tau}$ w.r.t.\ $\rho$
so that, for distinct members $g,h$ of $G$,
$\rho(g,h)^2 > 9 \tau$.  

For each $X \in \R^{n \times \infty}$, and each 
$\theta \in \bbH$, define
\[
\phi(X, \theta)
 = 
 \left\{ \begin{array}{ll}
    1 & \mbox{if $\rho(C(X, Q_{\alpha} (X \theta)),\theta)^2 \leq \tau$} \\
    0 & \mbox{otherwise}
          \end{array}
     \right.
\]
and define
\[
S = \E_X \left[
      \sum_{\theta \in G}
      \phi(X, \theta)
    \right].
\]
Our assumption about the learning ability of $C$ implies that
\begin{equation}
\label{e:S.big}
S \geq |G|/2 = (1/\tau)^{c_3 d}/2.  
\end{equation}
For any $g,h \in G$
for which $Q_{\alpha} (X g) = Q_{\alpha} (X h)$,
since $\rho(g,h) > 3 \sqrt{\tau}$, it cannot be the
case that both $\phi(X, g)$ and $\phi(X,h)$ are both
$1$.  Thus, recalling that $x_1,...,x_n$ are the
rows of $X$, and that all elements of $G$ have length
at most 1, we have
\begin{align*}
S  & \leq \E_X (| \{ Q_{\alpha} (X g): g \in G \}|) \\
 & = \E_X (| \{ Q_{\alpha} (X g): g \in G \}| 
               \ind_{\max_t || x_t || < \sqrt{s}})  
           + \sum_{i = 1}^{\infty}
          \E_X (| \{ Q_{\alpha} (X g): g \in G \}| 
                 \ind_{\floor{\max_t || x_t ||/s} = i}) \\
 & \leq \left( \frac{c_4 \sqrt{s}}{\alpha} \right)^n
           + \sum_{i = 1}^{\infty}
         (i\sqrt{s}/\alpha)^n \times \Pr(\max_t || x_t || \geq i \sqrt{s}) \\
 & \leq \left( \frac{c_4 \sqrt{s}}{\alpha} \right)^n
           + \sum_{i = 1}^{\infty}
         (i\sqrt{s}/\alpha)^n \times 
       n e^{-i^2/c_5}
     \;\;\;\mbox{(by Lemma~\ref{l:x.short})} \\
 & \leq c_6 n \left( \frac{c_4 \sqrt{s}}{\alpha} \right)^n.
\end{align*}
Since $1/\alpha = O(n)$
\[
|\{ Q_{\alpha} (X g): g \in G \}|
 \leq \exp( O(n \log (n s)) )
 = \exp( O(n \log (n d)) )
\]
since $d = \Theta(s)$.  Since $d = \Omega(n \log n)$,
for large enough $n$ and small enough $\tau$, this
contradicts \eqref{e:S.big}, completing the proof.
\end{proof}

Now we are ready to put everything together to prove
the second paragraph of Theorem~\ref{th::main}.
By Lemma~\ref{l:to.quantized}, it suffices to
prove that, for a small enough constant $\tau_0$,
if $1/\alpha = O(n)$, with probability
$1/2$, Algorithm $C$, given examples 
$(x,Q_{\alpha}(\theta^{\top} x))$, 
with probability $1/2$, fails to
achieves $\rho(\htheta, \theta^*)^2 \leq \tau_0$.
By Lemma~\ref{l:bi}, this is the case, completing the
proof.

\end{document}